\newif\ifseparateappendix
\title{On the (In)Tractability of Reinforcement Learning for LTL Objectives}
\author{
Cambridge Yang$^1$ %
\and
Michael L.\ Littman$^2$\and
Michael Carbin$^1$\\
\affiliations
$^1$MIT CSAIL\\
$^2$Brown University\\
\emails
camyang@csail.mit.edu, 
mlittman@cs.brown.edu,
mcarbin@csail.mit.edu
}
\makeatletter\@input{ijcai22-aux.tex}\makeatother
\makeatletter\@input{ijcai22-appendix-aux.tex}\makeatother
\begin{document}

\maketitle

\begin{abstract}
In recent years, researchers have made significant progress in devising reinforcement-learning algorithms for optimizing linear temporal logic (LTL) objectives and LTL-like objectives.
Despite these advancements, there are fundamental limitations to how well this problem can be solved. Previous studies have alluded to this fact but have not examined it in depth.
In this paper, we address the tractability of reinforcement learning for general LTL objectives from a theoretical perspective.
We formalize the problem under the probably approximately correct learning in Markov decision processes (PAC-MDP) framework, a standard framework for measuring sample complexity in reinforcement learning.
In this formalization, we prove that the optimal policy for any LTL formula is PAC-MDP-learnable if and only if the formula is in the most limited class in the LTL hierarchy, consisting of formulas that are decidable within a finite horizon.
Practically, our result implies that it is impossible for a reinforcement-learning algorithm to obtain a PAC-MDP guarantee on the performance of its learned policy after finitely many interactions with an unconstrained environment for LTL objectives that are not decidable within a finite horizon.
\end{abstract}

\newif\ifshorterpaper

\shorterpapertrue

\shorterpaperfalse %

\section{Introduction}
\label{sec:introduction}

In reinforcement learning, we situate an autonomous agent in an unknown environment and specify an {\em objective}.
We want the agent to learn the optimal behavior for achieving the specified objective by interacting with the environment.

\paragraph{Specifying an Objective}

The objective for the agent is a specification over possible trajectories of the overall system---the environment and the agent.
Each trajectory is an infinite sequence of the states of the system, evolving through time.
The objective specifies which trajectories are desirable so that the agent can identify optimal or near-optimal behaviors with respect to the objective.

\paragraph{The Reward Objective}

One form of an objective is a reward function.
A reward function specifies a scalar value, a reward, for each state of the system.
The desired trajectories are those with higher cumulative discounted rewards.
The reward-function objective is well studied~\citep{sutton98}.
It has desirable properties that allow \reinforcementlearning algorithms to provide performance guarantees on learned behavior \citep{pacmodelfreerl}, meaning that algorithms can guarantee learning behaviors that achieve almost optimal cumulative discounted rewards with high probability.
Due to its versatility, researchers have adopted the reward-function objective as the de facto standard of behavior specification in reinforcement learning.

\subsection{The Linear Temporal Logic Objective} 

However, reward engineering, the practice of encoding desirable behaviors into a reward function, is a difficult challenge in applied reinforcement learning \citep{Dewey2014ReinforcementLA,gltl17}.
To reduce the burden of reward engineering, {\em linear temporal logic} (LTL) has attracted researchers' attention as an alternative objective.

LTL is a formal logic used initially to specify behaviors for system verification \citep{ltlprograms}. An LTL formula is built from a set of propositions about the state of the environment, logical connectives, and temporal operators such as $\always$ (always) and $\eventually$ (eventually).
Many reinforcement-learning tasks are naturally expressible with LTL \citep{gltl17}.
For some classic control examples, we can express: 1) Cart-Pole as $\always \mathit{up}$ (i.e., the pole always stays up), 2) Mountain-Car as $\eventually \mathit{goal}$ (i.e., the car eventually reaches the goal), and
3) Pendulum-Swing-Up as $\eventually\always \mathit{up}$ (i.e., the pendulum eventually always stays up).

Researchers have thus used LTL as an alternative objective specification for reinforcement learning ~\citep{ufukpacmdpltl,dorsa,truncatedltl-rl,omegaregularrl19,hasanbeig2019reinforcement,bozkurt2020control}.
Given an LTL objective specified by an LTL formula, each trajectory of the system either satisfies or violates that formula.
The agent should learn the behavior that maximizes the probability of satisfying that formula.
Moreover, research has shown that using LTL objectives supports automated reward shaping~\citep{osbert,rewardmachine1,ltlrewardshaping}.

\subsection{Trouble with Infinite Horizons}

The general class of LTL objectives consists of {\em infinite\Hyphdash horizon objectives}---objectives that require inspecting infinitely many steps of a trajectory to determine if the trajectory satisfies the objective.
For example, consider the objective $\eventually \mathit{goal}$ (eventually reach the goal). 
Given an infinite trajectory, the objective requires inspecting the entire trajectory in the worst case to determine that the trajectory violates the objective.

Despite the above developments on reinforcement learning with LTL objectives, 
the infinite-horizon nature of these objectives presents challenges that have been alluded to---but not formally treated---in prior work. \citet{smcbltl,pacmdpsmcconfidenceinterval,ltlrewardshaping} noted slow learning times for mastering infinite\Hyphdash horizon properties. 
\citet{gltl17} provided a specific environment that illustrates the intractability of learning for a specific infinite-horizon objective, arguing for the use of a discounted variant of LTL. 

A similar issue exists for the infinite-horizon, average-reward objectives.
In particular, it is understood that \reinforcementlearning algorithms do not have guarantees on the learned behavior for infinite\Hyphdash horizon, average\Hyphdash reward problems without additional assumptions on the environment~\citep{Kearns2002NearOptimalRL}.

However, to our knowledge, no prior work has formally analyzed the learnability of LTL objectives.\footnote{
Concurrent to this work, \citet{alur2021framework} also examine the intractability of LTL objectives.
They state and prove a theorem that is a weaker version of the core theorem of this work. 
Their work was made public while this work was under conference review.
We discuss their work in \Cref{sec:concurrent_work}.}

\paragraph{Our Results}
We leverage the PAC-MDP framework~\citep{pacmodelfreerl} to prove that reinforcement learning for infinite\Hyphdash horizon LTL objectives is intractable.
The intuition for this intractability is: 
Any finite number of interactions with an environment with unknown transition dynamics is insufficient to identify the environment dynamics perfectly.
Moreover, for an infinite-horizon objective, a behavior's satisfaction probability under the inaccurate environment dynamics can be arbitrarily different from the behavior's satisfaction probability under the true dynamics.
Consequently, a learner cannot guarantee with any confidence that it has identified near-optimal behavior for an infinite-horizon objective.

\subsection{Implications for Relevant and Future Work}

Our results provide a framework to categorize approaches that either focus on tractable LTL objectives or weaken the guarantees of an algorithm.
As a result, we interpret several previous approaches as instantiations of the following categories:
\ifshorterpaper
\begin{enumerate*}
\item works with tractable LTL objectives that are not infinite-horizon,
\item provides a best-effort guarantee,
\item requires assumptions on the environment, or
\item works with LTL-like objective with a less demanding semantics.
\end{enumerate*} %
We discuss our categorization and review these approaches in \Cref{sec:directions_forward}.
\else
\begin{itemize}[leftmargin=*,wide=0pt]
\item 
Work with finite-horizon LTL objectives, the complement of infinite\Hyphdash horizon objectives, to obtain guarantees on the learned behavior \citep{smcbltl}. These objectives, like $a \land \ltlnext a$ ($a$ is true for two steps), 
are decidable within a known finite number of steps.
\item 
Seek a best-effort confidence interval \citep{pacmdpsmcconfidenceinterval}.
Specifically, the interval can be trivial in the worst case, denoting that learned behavior is a maximally poor approximation of the optimal behavior.
\item
Make additional assumptions about the environment to obtain guarantees on the learned behavior \citep{ufukpacmdpltl,smcpmin}.
\item 
Change the problem by working with LTL-like objectives such as:
\begin{enumerate*}
\item relaxed LTL objectives that become exactly LTL in the (unreachable) limit~\citep{dorsa,omegaregularrl19,hasanbeig2019reinforcement,bozkurt2020control} and
\item objectives that use temporal operators but employ a different semantics~\citep{gltl17,truncatedltl-rl,ltlf-rl,rewardmachine1}.
\end{enumerate*}
The learnability of these objectives is a potential future research direction.
\end{itemize}
\fi

\subsection{Contributions}

We make the following contributions:
\begin{itemize}[leftmargin=*, wide=0pt]
\item A formalization of reinforcement learning with LTL objectives under the probably approximately correct in Markov decision processes (PAC-MDP) framework \citep{Fiechter94efficientreinforcement,Kearns2002NearOptimalRL,kakade03}, a standard framework for measuring sample complexity for \reinforcementlearning algorithms; 
and a formal definition of LTL-PAC-learnable, a learnability criterion for LTL objectives.
\item A statement and proof that:
\begin{enumerate*}
\item Any infinite-horizon LTL formula is not LTL-PAC-learnable.
\item Any finite\Hyphdash horizon LTL formula is LTL-PAC-learnable.
\end{enumerate*}
To that end, for any infinite-horizon formula, we give a construction of two special families of MDPs as counterexamples with which we prove that the formula is not LTL-PAC-learnable. 
\item Experiments with current \reinforcementlearning algorithms for LTL objectives that provide empirical support for our theoretical result.
\item A categorization of approaches that focus on tractable objectives or weaken the guarantees of LTL-PAC-learnable and a classification of previous approaches into these categories. 
\end{itemize}

\section{Preliminaries: Reinforcement Learning}
\label{sec:preliminaries}

This section provides definitions for MDPs, planning, reinforcement learning, and PAC-MDP.

\subsection{Markov Processes}
\label{sec:background_markov_processes}
\ifshorterpaper\else
We first review some basic notations for Markov processes.
\fi

A {\em Markov decision process} (MDP) is a tuple $\mathcal{M} = (S,A,P,s_0)$, where $S$ and $A$ are finite sets of states and actions, $\functiontype[P]{S, A}{\distributionon{S}}$ is a transition probability function that maps a current state and an action to a distribution over next states, and $s_0 \in S$ is an initial state. The MDP is sometimes referred to as the \emph{environment MDP} to distinguish it from any specific objective.

A (stochastic) {\em Markovian policy} $\pi$ for an MDP is a function $\functiontype[\pi]{S}{\distributionon{A}}$ that maps each state of the MDP to a distribution over the actions.

A (stochastic) {\em non-Markovian policy} $\pi$ for an MDP is a function $\functiontype[\pi]{\left(S \times A\right)^*, S}{\distributionon{A}}$ that maps a history of states and actions of the MDP to a distribution over actions.

An MDP and a policy on the MDP induce a {\em discrete-time Markov chain} (DTMC). 
A DTMC is a tuple $\mathcal{D} = (S, P, s_0)$, where $S$ is a finite set of states, $\functiontype[P]{S}{\distributionon{S}}$ is a transition-probability function that maps a current state to a distribution over next states, and $s_0 \in S$ is an initial state.
A {\em sample path} of $\mathcal{D}$ is an infinite sequence of states $w \in \infseq{S}$.
The sample paths of a DTMC form a probability space.

\subsection{Objective}

An {\em objective} for an MDP $\mathcal{M} = (S, A, P, s_0)$  is a measurable function $\functiontype[\kappa]{\infseq{S}}{\reals}$ on the probability space of the DTMC $\mathcal{D}$ induced by $\mathcal{M}$ and a policy $\pi$.
The {\em value} of the objective for the MDP $\mathcal{M}$ and a policy $\pi$ is the expectation of the objective under that probability space:
\begin{equation*}
\mdpvaluefunc{\pi}{\mathcal{M}}{\kappa} = \expectation*{w \sim \mathcal{D}}{\kappa(w)}
\quad (\mathcal{D} \text{ induced by } \mathcal{M} \text{ and } \pi).
\end{equation*}

For example, the cumulative discounted rewards objective \citep{puterman94} with discount $\gamma$ and a reward function $\functiontype[R]{S}{\reals}$ is: $
\kappa^\text{reward}(w) \triangleq \sum_{i=0}^{\infty} \gamma^i \cdot R(\windex{w}{i}).
$

An {\em optimal policy} maximizes the objective's value:
$
\pi^* = \argmax_{\pi} \mdpvaluefunc{\pi}{\mathcal{M}}{\kappa}
$.
The {\em optimal value} $\mdpvaluefunc{\pi^*}{\mathcal{M}}{\kappa}$ is then the objective value of the optimal policy.
A policy $\pi$ is {\em $\epsilon$-optimal} if its value is $\epsilon$-close to the optimal value: $\mdpvaluefunc{\pi}{\mathcal{M}}{\kappa} \ge \mdpvaluefunc{\pi^*}{\mathcal{M}}{\kappa} - \epsilon$.

\subsection{Planning with a Generative Model}

A \planningwithgenerativemodel algorithm \citep{approximateplanninginpompdps,Grill2016BlazingTT} has access to a {\em generative model}, a sampler, of an MDP's transitions but does not have direct access to the underlying probability values. It can take any state and action and sample a next state. It learns a policy from those sampled transitions.

Formally, a \planningwithgenerativemodel algorithm $\mathcal{A}$ is a tuple $(\mathcal{A}^\text{S}, \mathcal{A}^\text{L})$, where $\mathcal{A}^\text{S}$ is a {\em sampling algorithm} that drives how the environment is sampled, and $\mathcal{A}^\text{L}$ is a {\em learning algorithm} that learns a policy from the samples obtained by applying the sampling algorithm.

In particular, the sampling algorithm $\mathcal{A}^\text{S}$ is a function that maps from a history of sampled environment transitions \begingroup\small $((s_0, a_0, s'_0) \dots (s_k, a_k, s'_k))$ \endgroup to the next state and action to sample \begingroup\small$(s_{k+1}, a_{k+1})$ \endgroup, resulting in 
\begingroup\small$\rv{s}_{k+1}' \sim \prob*{}{ \cdot | s_{k+1}, a_{k+1}}$\endgroup.
Iterative application of the sampling algorithm $\mathcal{A}^\text{S}$ produces a sequence of sampled environment transitions.

The learning algorithm is a function that maps that sequence of sampled environment transitions to a non\Hyphdash Markovian policy of the environment MDP. 
Note that the sampling algorithm can internally consider alternative policies as part of its decision of what to sample.
Also, note that we deliberately consider non-Markovian policies since the optimal policy for an LTL objective (defined later) is non-Markovian in general (unlike a cumulative discounted rewards objective).

\subsection{Reinforcement Learning}

In reinforcement learning, an agent is situated in an environment MDP and only observes state transitions.
We also allow the agent to reset to the initial state as in \citet{Fiechter94efficientreinforcement}.

We can view a \reinforcementlearning algorithm as a special kind of \planningwithgenerativemodel algorithm 
$(\mathcal{A}^\text{S}, \mathcal{A}^\text{L})$ such that the sampling algorithm always either follows the next state sampled from the environment or resets to the initial state of the environment.

\subsection{Probably Approximately Correct in MDPs}

A successful \planningwithgenerativemodel algorithm (or \reinforcementlearning algorithm) should learn from the sampled environment transitions and produce an optimal policy for the objective in the environment MDP.
However, since the environment transitions may be stochastic, we cannot expect an algorithm to always produce the optimal policy.
Instead, we seek an algorithm that, with high probability, produces a nearly optimal policy.
The PAC-MDP framework~\citep{Fiechter94efficientreinforcement,Kearns2002NearOptimalRL,kakade03}, which takes inspiration from probably approximately correct (PAC) learning~\citep{valiantpaclearnable}, formalizes this notion.
The PAC-MDP framework requires efficiency in both sampling and algorithmic complexity.
In this work, we only consider sample efficiency and thus omit the requirement on algorithmic complexity. 
Next, we generalize the PAC-MDP framework from \reinforcementlearning with a reward objective to \planningwithgenerativemodel with a generic objective.

\begin{definition}
\label{def:rl-obj-pac-algo}
Given an objective $\kappa$, a \planningwithgenerativemodel algorithm  $(\mathcal{A}^\text{S}, \mathcal{A}^\text{L})$ is {\em $\kappa$-PAC} (probably approximately correct for objective $\kappa$) in an environment MDP $\mathcal{M}$ if, with the sequence of transitions $T$ of length $N$ sampled using the sampling algorithm $\mathcal{A}^\text{S}$, the learning algorithm $\mathcal{A}^\text{L}$ outputs a non-Markovian $\epsilon$-optimal policy with probability at least $1-\delta$ for any given $\epsilon>0$ and $0 < \delta < 1$. That is:
\begin{equation}
\label{eq:pac-obj-epsilon-delta-inequality}
\prob*{\rv{T} \sim \mdpsamplingproduct{\mathcal{M}}{\mathcal{A}^\text{S}}{N}}{\mdpvaluefunc{\mathcal{A}^\text{L}\left(T\right)}{\mathcal{M}}{\kappa} \ge \mdpvaluefunc{\pi^*}{\mathcal{M}}{\kappa} - \epsilon} \ge 1 - \delta.
\end{equation}
\end{definition}
We use \begingroup\small$ \rv{T}{\sim}\mdpsamplingproduct{\mathcal{M}}{\mathcal{A}^\text{S}}{N}$\endgroup to denote that the probability space is over the set of length-$N$ transition sequences sampled from the environment $\mathcal{M}$ using the sampling algorithm $\mathcal{A}^\text{S}$.
For brevity, we will drop $\mdpsamplingproduct{\mathcal{M}}{\mathcal{A}^\text{S}}{N}$ when it is clear from context and simply write $\prob{\rv{T}}{.}$ to denote that the probability space is over the sampled transitions.

\begin{definition}
\label{def:rl-obj-sample-efficiently-pac-algo}
Given an objective $\kappa$, a $\kappa$-PAC \planningwithgenerativemodel algorithm  is {\em sample efficiently $\kappa$-PAC} if the number of sampled transitions $N$ is asymptotically polynomial in $\frac{1}{\epsilon}$, $\frac{1}{\delta}$, $|S|$, $|A|$.
\end{definition}
Note that the definition allows the polynomial to have constant coefficients that depends on $\kappa$. 

\section{Linear Temporal Logic Objectives}

This section describes LTL and its use in objectives.

\subsection{Linear Temporal Logic}
\label{sec:background_ltl}

A linear temporal logic (LTL) formula is built from a finite set of atomic propositions $\Pi$, logical connectives $\neg, \land, \lor$, temporal next $\ltlnext$, and temporal operators  $\always$ (always), $\eventually$ (eventually), and $\until$ (until). 
\Cref{eq:ltl_grammar} gives the grammar of an LTL formula $\phi$ over the set of atomic propositions $\Pi$:
\begingroup
\newcommand{\alt}{\:\big|\:}%
\begin{equation}
\label{eq:ltl_grammar}
\phi \defeq 
a
\alt \neg \phi 
\alt \phi \land \phi 
\alt \phi \lor \phi
\alt \ltlnext \phi
\alt \always \phi 
\alt \eventually \phi
\alt \phi \until \phi, \; a \in \Pi.
\end{equation}
\endgroup
LTL is a logic over infinite-length words. 
Informally, these temporal operators have the following meanings:
$\ltlnext \phi$ asserts that $\phi$ is true at the next time step;
$\always \phi$ asserts that $\phi$ is always true;
$\eventually \phi$ asserts that $\phi$ is eventually true;
$\psi \until \phi$ asserts that $\psi$ needs to stay true until $\phi$ eventually becomes true.
We give the formal semantics of each operator in \Cref{sec:ltl-hierarchy-full}.
We write $w \vDash \phi$ to denote that the infinite word $w$ satisfies $\phi$.

\subsection{MDP with LTL Objectives}
\label{sec:ltl-objectives}

An LTL objective maximizes the probability of satisfying an LTL formula. We formalize this notion below.

An {\em LTL specification} for an MDP is a tuple $(\mathcal{L}, \phi)$, where $\functiontype[\mathcal{L}]{S}{2^\Pi}$ is a labeling function, and $\phi$ is an LTL formula over atomic propositions $\Pi$.
The labeling function is a classifier mapping each MDP state to a tuple of truth values of the atomic propositions in $\phi$.
For a sample path $w$, we use $\wmap{\mathcal{L}}{w}$ to denote the element-wise application of $\mathcal{L}$ on $w$.

The LTL objective $\xi$ specified by the LTL specification is the satisfaction of the formula $\phi$ of a sample path mapped by the labeling function $\mathcal{L}$, that is: $\kappa(w) \triangleq \indicator{\mathcal{L}(w) \vDash \phi}$.
The value of this objective is called the {\em satisfaction probability} of $\xi$:  \begin{equation*}
\mdpvaluefunc{\pi}{\mathcal{M}}{\xi} = \prob*{w \sim \mathcal{D}}{\wmap{\mathcal{L}}{w} \vDash \phi} \quad (\mathcal{D} \text{ induced by } \mathcal{M} \text{ and } \pi).
\end{equation*}

\subsection{Infinite Horizons in LTL Objectives}

\begin{figure}[tb]
\centering
\tikzset{external/export next=false}
\begin{tikzpicture}
\def\baseheight{0.1}
\def\levelwidth{0.8};
\def\levelheight{3.6};
\definecolor{leftcolor1}{HTML}{F76385}
\definecolor{leftcolor2}{HTML}{F76385}

\definecolor{rightcolor1}{HTML}{3053A9}
\definecolor{rightcolor2}{HTML}{3053A9}

\draw[] (0,0)--({2*\levelheight+2*\baseheight},0)--(2*\levelheight+2*\baseheight,\levelwidth)--(0,\levelwidth)--(0,0);
\begin{scope}[on background layer]
    \fill[fill=leftcolor1, fill opacity=0.4] (0,0)--(0,\levelwidth)--(\baseheight,\levelwidth)-- (\baseheight+\levelheight,\levelwidth);
    \fill[fill=rightcolor1, fill opacity=0.4] (0,0)--(\baseheight+\levelheight,0)--(\baseheight,\levelwidth)--(0,\levelwidth);
    \fill[fill=leftcolor2, fill opacity=0.4] (0,0)--(0,\levelwidth)--(\baseheight+2*\levelheight,\levelwidth)--(\baseheight+\levelheight,0);
    \fill[fill=rightcolor2, fill opacity=0.4] (0,0)--(\baseheight+2*\levelheight,0)--(\baseheight+\levelheight,\levelwidth)--(0,\levelwidth);
\end{scope}

\node[] (bottom) at ({(\baseheight+\levelheight) * 0.6/4}, \levelwidth/2) {\scriptsize \ltlclass*{finitary}};
\node[] (guarantee) at ({(\baseheight+\levelheight)/2}, \levelwidth * 1/5) {\scriptsize \ltlclass*{guarantee}};
\node[] (safety) at ({(\baseheight+\levelheight)/2}, {\levelwidth * (1-1/5)}) {\scriptsize \ltlclass*{safety}};
\node[] (obligation) at ({\baseheight+\levelheight}, \levelwidth/2) {\scriptsize \ltlclass*{obligation}};
\node[] (persistence) at ({(\baseheight+\levelheight) * 3/2}, \levelwidth * 1/5) {\scriptsize \ltlclass*{persistence}};
\node[] (recurrence) at ({(\baseheight+\levelheight) * 3/2}, {\levelwidth * (1-1/5)}) {\scriptsize \ltlclass*{recurrence}};
\node[] (reactivity) at ({(\baseheight+\levelheight) * 7.3/4}, \levelwidth/2) {\scriptsize \ltlclass*{reactivity}};

\node[above right=-0.1em and -0.1em] (label_restricted) at (0,\levelwidth) {\scriptsize Restricted} ;

\node[above left=-0.1em and -0.1em] (label_general) at ({2*(\baseheight+\levelheight)},\levelwidth) {\scriptsize General} ;

\path[->] (label_restricted) edge [style=-{Stealth}] (label_general);

\end{tikzpicture}
\caption{The hierarchy of LTL}
\label{fig:ltl-hierarchy}
\end{figure}

An LTL formula describes either a finite-horizon or infinite\Hyphdash horizon property.
\citet{tlhierarchy} classified LTL formulas into seven classes, as shown in \Cref{fig:ltl-hierarchy}.
Each class includes all the classes to the left of that class (e.g., $\ltlclass{finitary}\,{\subset}\,\ltlclass{guarantee}$, but $\ltlclass{safety} \,{\not\subset}\,\ltlclass{guarantee}$), with the $\ltlclass{finitary}$ class being the most restricted and the $\ltlclass{reactivity}$ class being the most general.
Below we briefly describe the key properties of the leftmost three classes relevant to the core of this paper. We present a complete description of all the classes in \Cref{sec:ltl-hierarchy-full}.
\begin{itemize}[leftmargin=*]
\item $\phi\inltlclass{finitary}$ iff there exists a horizon $H$ such that infinite-length words sharing the same prefix of length $H$ are either all accepted or all rejected by $\phi$. E.g., $a \land \ltlnext a$ (i.e., $a$ is true for two steps) is in $\ltlclass{finitary}$. 
\item $\phi\inltlclass{guarantee}$ iff there exists a language of finite words $L$ (i.e., a Boolean function on finite-length words) such that $w \vDash \phi$ if $L$ accepts a prefix of $w$. Informally, a formula in $\ltlclass{guarantee}$ asserts that something eventually happens. E.g., $\eventually a$ (i.e., eventually $a$ is true) is in $\ltlclass{guarantee}$.
\item $\phi\inltlclass{safety}$ iff there exists a language of finite words $L$ such that $w \vDash \phi$ if $L$ accepts all prefixes of $w$. Informally, a formula in $\ltlclass{safety}$ asserts that something always happens. E.g., $\always a$ (i.e., $a$ is always true) is in $\ltlclass{safety}$.
\end{itemize}

Moreover, the set of \ltlclasst{finitary} is the intersection of the set of \ltlclasst{guarantee} formulas and the set of \ltlclasst{safety} formulas.
Any $\phi\inltlclass{finitary}$, or equivalently $\phi\inltlclass{guarantee} \cap \ltlclass{safety}$, inherently describes finite-horizon properties.
Any $\phi\notinltlclass{finitary}$, or equivalently $\phi\inltlclass{guarantee}^\complement \cup \ltlclass{safety}^\complement$, inherently describes infinite-horizon properties.
We will show that \reinforcementlearning algorithms cannot provide PAC guarantees for LTL objectives specified by formulas that describe infinite\Hyphdash horizon properties.

\subsection{Intuition of the Problem}

\begin{figure}[tb]
\centering
\tikzset{external/export next=false}
\newcommand{\counterexamplemdpkind}{m1}
\begin{tikzpicture}[node distance=1.5cm,on grid,auto]
\tikzset{%
    in place/.style={
      auto=false,
      fill=white,
      inner sep=2pt,
    },
}

\newcommand{\statenodesize}{1.2em}

\tikzset{%
    every state/.style={
        fill={rgb:black,1;white,10},
        initial text=, 
        inner sep=0, 
        minimum size=\statenodesize,
        font={\small},
    },
    tl/.style={font=\small} %
}

\tikzset{%
    accstate/.style={
        state,
        pattern={Lines[angle=35,distance={4.5pt/sqrt(2)}]},
        pattern color=gray
    }
}

\tikzset{%
    rejstate/.style={state, fill=white}
}

\ifdefstring{\counterexamplemdpkind}{m12}{
\tikzset{%
    a1/.style={
      swap,
      auto=left,
      to path={ let \p1=(\tikztostart),\p2=(\tikztotarget), \n1={atan2(\y2-\y1,\x2-\x1)},\n2={\n1+180} in ($(\tikztostart.{\n1})!0.5mm!90:(\tikztotarget.{\n2})$) -- ($(\tikztotarget.{\n2})!0.5mm!270:(\tikztostart.{\n1})$) \tikztonodes},
    },
    a2/.style={
      auto=right,
      to path={ let \p1=(\tikztostart),\p2=(\tikztotarget), \n1={atan2(\y2-\y1,\x2-\x1)},\n2={\n1+180} in ($(\tikztostart.{\n1})!0.5mm!270:(\tikztotarget.{\n2})$) -- ($(\tikztotarget.{\n2})!0.5mm!90:(\tikztostart.{\n1})$) \tikztonodes}
    },
}
}

\ifdefstring{\counterexamplemdpkind}{m1}{%
\tikzset{
    a2/.style={draw=none, fill opacity=0},
    a1/.style={swap, auto=left}
}
}

\ifdefstring{\counterexamplemdpkind}{m2}{%
\tikzset{
    a2/.style={auto=right},
    a1/.style={draw=none, fill opacity=0}
}
}

\providecommand{\arrowheadscale}{0.8}
\tikzset{%
    m1/.style={
        -{Stealth[scale=\arrowheadscale]}
    },
    m2/.style={
        -{Triangle[open, scale=\arrowheadscale]}
    },
}

\ifdefstring{\counterexamplemdpkind}{m12}%
{
\tikzset{
    m12/.style={
        -{Stealth[scale=\arrowheadscale] Triangle[open, scale=\arrowheadscale]}
    }
}
}

\ifdefstring{\counterexamplemdpkind}{m1}%
{\tikzset{
    m12/.style={m1}
}}

\ifdefstring{\counterexamplemdpkind}{m2}%
{\tikzset{
    m12/.style={m2}
}}

\newcommand{\largedots}{$\textbf{\ldots}$}

    \tikzstyle{every state}=[
        fill={rgb:black,1;white,10},
        initial text=, 
        inner sep=0, 
        minimum size=1.2em,
        font={\small},
    ]
    
    \tikzstyle{tl/.style}=[font=\small] %

    \tikzstyle{every loop}=[
    style={
        looseness=1, 
        min distance=3mm,
        font={\small}
        }
    ]
    
    \node[state, initial left] (g) {$g$};
    \node[accstate] (h) [above right =1em and 6em of g]  {$h$};
    \node[rejstate] (q) [below right =1em and 6em of g] {$q$};

    \ifdefstring{\counterexamplemdpkind}{m1}{%
    \path[->]
    (g) edge [pos=0.9] node [tl] {$a_1,p$} (h)
    (g) edge [swap, pos=0.9] node [tl] {$a_2,p$} (q)
    ;
    }
    
    \ifdefstring{\counterexamplemdpkind}{m2}{%
    \path[->]
    (g) edge [pos=0.9] node [tl] {$a_2,p$} (h)
    (g) edge [swap, pos=0.9] node [tl] {$a_1,p$} (q)
    ;
    }
    
    \path[->] 
    (g) edge  [loop above]  node [tl] {$a_1,1 - p$}  ()
    (g) edge  [loop below]  node [tl] {$a_2,1 - p$}  ()
    
    (h) edge  [loop right]  node {}  ()
    (q) edge  [loop right]  node {}  ()
    
    ;
\end{tikzpicture}
\unskip%
\renewcommand{\counterexamplemdpkind}{m2}%
\begin{tikzpicture}[node distance=1.5cm,on grid,auto]
\tikzset{%
    in place/.style={
      auto=false,
      fill=white,
      inner sep=2pt,
    },
}

\newcommand{\statenodesize}{1.2em}

\tikzset{%
    every state/.style={
        fill={rgb:black,1;white,10},
        initial text=, 
        inner sep=0, 
        minimum size=\statenodesize,
        font={\small},
    },
    tl/.style={font=\small} %
}

\tikzset{%
    accstate/.style={
        state,
        pattern={Lines[angle=35,distance={4.5pt/sqrt(2)}]},
        pattern color=gray
    }
}

\tikzset{%
    rejstate/.style={state, fill=white}
}

\ifdefstring{\counterexamplemdpkind}{m12}{
\tikzset{%
    a1/.style={
      swap,
      auto=left,
      to path={ let \p1=(\tikztostart),\p2=(\tikztotarget), \n1={atan2(\y2-\y1,\x2-\x1)},\n2={\n1+180} in ($(\tikztostart.{\n1})!0.5mm!90:(\tikztotarget.{\n2})$) -- ($(\tikztotarget.{\n2})!0.5mm!270:(\tikztostart.{\n1})$) \tikztonodes},
    },
    a2/.style={
      auto=right,
      to path={ let \p1=(\tikztostart),\p2=(\tikztotarget), \n1={atan2(\y2-\y1,\x2-\x1)},\n2={\n1+180} in ($(\tikztostart.{\n1})!0.5mm!270:(\tikztotarget.{\n2})$) -- ($(\tikztotarget.{\n2})!0.5mm!90:(\tikztostart.{\n1})$) \tikztonodes}
    },
}
}

\ifdefstring{\counterexamplemdpkind}{m1}{%
\tikzset{
    a2/.style={draw=none, fill opacity=0},
    a1/.style={swap, auto=left}
}
}

\ifdefstring{\counterexamplemdpkind}{m2}{%
\tikzset{
    a2/.style={auto=right},
    a1/.style={draw=none, fill opacity=0}
}
}

\providecommand{\arrowheadscale}{0.8}
\tikzset{%
    m1/.style={
        -{Stealth[scale=\arrowheadscale]}
    },
    m2/.style={
        -{Triangle[open, scale=\arrowheadscale]}
    },
}

\ifdefstring{\counterexamplemdpkind}{m12}%
{
\tikzset{
    m12/.style={
        -{Stealth[scale=\arrowheadscale] Triangle[open, scale=\arrowheadscale]}
    }
}
}

\ifdefstring{\counterexamplemdpkind}{m1}%
{\tikzset{
    m12/.style={m1}
}}

\ifdefstring{\counterexamplemdpkind}{m2}%
{\tikzset{
    m12/.style={m2}
}}

\newcommand{\largedots}{$\textbf{\ldots}$}

    \tikzstyle{every state}=[
        fill={rgb:black,1;white,10},
        initial text=, 
        inner sep=0, 
        minimum size=1.2em,
        font={\small},
    ]
    
    \tikzstyle{tl/.style}=[font=\small] %

    \tikzstyle{every loop}=[
    style={
        looseness=1, 
        min distance=3mm,
        font={\small}
        }
    ]
    
    \node[state, initial left] (g) {$g$};
    \node[accstate] (h) [above right =1em and 6em of g]  {$h$};
    \node[rejstate] (q) [below right =1em and 6em of g] {$q$};

    \ifdefstring{\counterexamplemdpkind}{m1}{%
    \path[->]
    (g) edge [pos=0.9] node [tl] {$a_1,p$} (h)
    (g) edge [swap, pos=0.9] node [tl] {$a_2,p$} (q)
    ;
    }
    
    \ifdefstring{\counterexamplemdpkind}{m2}{%
    \path[->]
    (g) edge [pos=0.9] node [tl] {$a_2,p$} (h)
    (g) edge [swap, pos=0.9] node [tl] {$a_1,p$} (q)
    ;
    }
    
    \path[->] 
    (g) edge  [loop above]  node [tl] {$a_1,1 - p$}  ()
    (g) edge  [loop below]  node [tl] {$a_2,1 - p$}  ()
    
    (h) edge  [loop right]  node {}  ()
    (q) edge  [loop right]  node {}  ()
    
    ;

\end{tikzpicture}
\caption{Two MDPs parameterized by $p$ in range $0 < p < 1$. 
Action $a_1$ in the MDP on the left and action $a_2$ in the MDP on the right have probability $p$ of transitioning to the state $h$. 
Conversely, action $a_2$ in the MDP on the left and action $a_1$ in the MDP on the right have probability $p$ of transitioning to the state $q$. 
Both actions in both MDPs have probability $1 - p$ to loop around the state $g$.
}
\label{fig:counterexample-mdp-simple}
\end{figure}

Suppose that we send an agent into one of the MDPs in \Cref{fig:counterexample-mdp-simple}, and want its behavior to satisfy ``eventually reach the state $h$'', expressed as the LTL formula $\eventually h$.
The optimal behavior is to always choose the action along the transition $g\rightarrow h$ for both MDPs (i.e., $a_1$ for the MDP on the left and $a_2$ for the MDP on the right).
This optimal behavior satisfies the objective with probability one.
However, the agent does not know which of the two MDPs it is in.
The agent must follow its sampling algorithm to explore the MDP's dynamics and use its learning algorithm to learn this optimal behavior.

If the agent observes neither transitions going out of $g$ (i.e., $g\rightarrow h$ or $g\rightarrow q$) during sampling,
it will not be able to distinguish between the two actions.
The best it can do is a 50\% chance guess and cannot provide any non-trivial guarantee on the probability of learning the optimal action.

On the other hand, if the agent observes one of the transitions going out of $g$, it will be able to determine which action leads to state $h$, thereby learning always to take that action.
However, the probability of observing any such transition with $N$ interactions is at most $1 - (1 - p)^N$.
This is problematic: with any finite $N$, there always exists a value of $p$ such that this probability is arbitrarily close to $0$. 
In other words, with any finite number of interactions, without knowing the value of $p$, the agent cannot guarantee (a non-zero lower bound on) its chance of learning a policy that satisfies the LTL formula $\eventually h$.

Further, the problem is not limited to this formula.
For example, the objective ``never reach the state $q$'', expressed as the formula $\always \neg q$, has the same problem in these two MDPs.
More generally, for any LTL formula describing an infinite-horizon property, we construct two counterexample MDPs with the same nature as the ones in \Cref{fig:counterexample-mdp-simple}, and prove that it is impossible to guarantee learning the optimal policy.

\section{Learnability of LTL Objectives}
\label{sec:maintheoremstatement}

\begin{figure*}[tb]
\centering
\newcommand{\counterexamplemdpkind}{m12}
\begin{tikzpicture}[node distance=4.5em,on grid,auto]

\ifdefstring{\counterexamplemdpkind}{m12}{
\newcommand{\counterexamplemdplegend}{%
\path ([xshift=2.5em,yshift=-0.8em] current bounding box.north west)%
 node[matrix, cells={nodes={font={\scriptsize}}, anchor=east}, draw,inner sep=0.3ex, ampersand replacement={\&}]%
{%
  \draw[m1] (0,0) -- ++ (0.6,0) {}; \& \node[] {$\mathcal{M}_1$}; \\%
  \draw[m2] (0,0) -- ++ (0.6,0) {}; \& \node[] {$\mathcal{M}_2$};\\%
  \draw[m12] (0,0) -- ++ (0.6,0) {}; \& \node[] {$\mathcal{M}_1$ \string& $\mathcal{M}_2$}; \\%
};
}}{
\newcommand{\counterexamplemdplegend}{}
}

    \tikzpicturedependsonfile{diagrams/counterexample-mdp-common.tex}
    \tikzset{%
    in place/.style={
      auto=false,
      fill=white,
      inner sep=2pt,
    },
}

\newcommand{\statenodesize}{1.2em}

\tikzset{%
    every state/.style={
        fill={rgb:black,1;white,10},
        initial text=, 
        inner sep=0, 
        minimum size=\statenodesize,
        font={\small},
    },
    tl/.style={font=\small} %
}

\tikzset{%
    accstate/.style={
        state,
        pattern={Lines[angle=35,distance={4.5pt/sqrt(2)}]},
        pattern color=gray
    }
}

\tikzset{%
    rejstate/.style={state, fill=white}
}

\ifdefstring{\counterexamplemdpkind}{m12}{
\tikzset{%
    a1/.style={
      swap,
      auto=left,
      to path={ let \p1=(\tikztostart),\p2=(\tikztotarget), \n1={atan2(\y2-\y1,\x2-\x1)},\n2={\n1+180} in ($(\tikztostart.{\n1})!0.5mm!90:(\tikztotarget.{\n2})$) -- ($(\tikztotarget.{\n2})!0.5mm!270:(\tikztostart.{\n1})$) \tikztonodes},
    },
    a2/.style={
      auto=right,
      to path={ let \p1=(\tikztostart),\p2=(\tikztotarget), \n1={atan2(\y2-\y1,\x2-\x1)},\n2={\n1+180} in ($(\tikztostart.{\n1})!0.5mm!270:(\tikztotarget.{\n2})$) -- ($(\tikztotarget.{\n2})!0.5mm!90:(\tikztostart.{\n1})$) \tikztonodes}
    },
}
}

\ifdefstring{\counterexamplemdpkind}{m1}{%
\tikzset{
    a2/.style={draw=none, fill opacity=0},
    a1/.style={swap, auto=left}
}
}

\ifdefstring{\counterexamplemdpkind}{m2}{%
\tikzset{
    a2/.style={auto=right},
    a1/.style={draw=none, fill opacity=0}
}
}

\providecommand{\arrowheadscale}{0.8}
\tikzset{%
    m1/.style={
        -{Stealth[scale=\arrowheadscale]}
    },
    m2/.style={
        -{Triangle[open, scale=\arrowheadscale]}
    },
}

\ifdefstring{\counterexamplemdpkind}{m12}%
{
\tikzset{
    m12/.style={
        -{Stealth[scale=\arrowheadscale] Triangle[open, scale=\arrowheadscale]}
    }
}
}

\ifdefstring{\counterexamplemdpkind}{m1}%
{\tikzset{
    m12/.style={m1}
}}

\ifdefstring{\counterexamplemdpkind}{m2}%
{\tikzset{
    m12/.style={m2}
}}

\newcommand{\largedots}{$\textbf{\ldots}$}

    \node[state,initial left]   (g_0)                      {$g_0$};
    \node[] (dots_g_1_g_k)  [right =of g_0] {\largedots};
    \node[state] (g_k)  [right =of dots_g_1_g_k] {$g_k$};
    \node[] (dots_g_k_g_l)  [right =of g_k] {\largedots};
    \node[state] (g_l)  [right =of dots_g_k_g_l] {$g_l$};

    \path[->]
    (g_0) edge [m12] (dots_g_1_g_k)
    (dots_g_1_g_k) edge [m12] (g_k)

    (g_k) edge [m12] (dots_g_k_g_l)
    (dots_g_k_g_l) edge [m12] (g_l)

    (g_l) edge [rounded corners, to path={
        ($ (g_l) + (-0.4*\statenodesize, 0.3*\statenodesize) $)
        -- ($ (g_l) + (-0.4*\statenodesize, 0.47*\statenodesize) + (0, \statenodesize) $)
        -- ($ (g_k) + (0.15*\statenodesize, 0.47*\statenodesize) + (0, \statenodesize) $) \tikztonodes -- ($ (g_k) + (0.15*\statenodesize, 0.47*\statenodesize)$)
        }, m12] node [tl,midway] {$a_1, 1 - p$} (g_k)

    (g_l) edge [rounded corners, to path={
        ($ (g_l) + (-0.15*\statenodesize, 0.47*\statenodesize) $)
        -- ($ (g_l) + (-0.15*\statenodesize, 0.47*\statenodesize) + (0, 1.3*\statenodesize) $)
        -- ($ (g_k) + (-0.2*\statenodesize, 0.45*\statenodesize) + (0, 1.3*\statenodesize) $) \tikztonodes -- ($ (g_k) + (-0.2*\statenodesize, 0.45*\statenodesize)$)
        }, swap, m12] node [tl,midway] {$a_2, 1 - p$} (g_k);
    
    \node[accstate] (h_0)  [above right=2.5em and 10em of g_l] {$h_0$};
    \node[] (dots_h_0_h_u)  [right =of h_0] {\largedots};
    \node[accstate] (h_u)  [right =of dots_h_0_h_u] {$h_u$};
    \node[] (dots_h_u_h_v)  [right =of h_u] {\largedots};
    \node[accstate] (h_v)  [right =of dots_h_u_h_v] {$h_v$};
    
    \node[rejstate] (q_0)  [right=10em of g_l] {$q_0$};
    \node[] (dots_q_0_q_m)  [right =of q_0] {\largedots};
    \node[rejstate] (q_m)  [right =of dots_q_0_q_m] {$q_m$};
    \node[] (dots_q_m_q_n)  [right =of q_m] {\largedots};
    \node[rejstate] (q_n)  [right =of dots_q_m_q_n] {$q_n$};

    \path[->]

    (g_l) edge [rounded corners, to path={
        ($ (g_l) + (0.1*\statenodesize, 0.49*\statenodesize) $) -- ($ (h_0) + (-0.47*\statenodesize, 0.15*\statenodesize) + (-5em, 0) $)  -- +(5em, 0) \tikztonodes
        }, m1] node [tl, xshift=-2em] {$a_1, p$} (h_0)

    (g_l) edge [rounded corners, to path={
        ($ (g_l) + (0.4*\statenodesize, 0.3*\statenodesize) $) -- ($ (h_0) + (-0.47*\statenodesize, -0.15*\statenodesize) + (-5em, 0) $)  -- +(5em, 0) 
        \tikztonodes
        }, swap, m2] node [tl, xshift=-2em] {$a_2, p$} (h_0)

    (h_0) edge [m12] (dots_h_0_h_u)
    (dots_h_0_h_u) edge [m12] (h_u)
    (h_u) edge [m12] (dots_h_u_h_v)
    (dots_h_u_h_v) edge [m12] (h_v)
    (h_v) edge [bend right=20, m12] (h_u)

    (g_l) edge [to path={
        ($ (g_l) + (0.47*\statenodesize, 0.15*\statenodesize) $) -- ($ (q_0) + (-0.47*\statenodesize, 0.15*\statenodesize) $) \tikztonodes
        }, m1] node [tl, xshift=2em] {$a_2, p$} (q_0)

    (g_l) edge [to path={
        ($ (g_l) + (0.47*\statenodesize, -0.15*\statenodesize) $) -- ($ (q_0) + (-0.47*\statenodesize, -0.15*\statenodesize) $) \tikztonodes
        }, swap, m2] node [tl, xshift=2em] {$a_1, p$} (q_0)

    (q_0) edge [m12] (dots_q_0_q_m)
    (dots_q_0_q_m) edge [m12] (q_m)
    (q_m) edge [m12] (dots_q_m_q_n)
    (dots_q_m_q_n) edge [m12] (q_n)
    (q_n) edge [bend right=20, m12] (q_m)

    ;

    \counterexamplemdplegend

\end{tikzpicture}
\caption{Counterexample MDPs $\mathcal{M}_1$ and $\mathcal{M}_2$, with transitions distinguished by arrow types (see legend). 
Both MDPs are parameterized by the parameter $p$ that is in range $0 < p < 1$.
Unlabeled edges are deterministic (actions $a_1$ and $a_2$ transition with probability $1$).
Ellipsis indicates a deterministic chain of states.
}
\label{fig:counterexample-mdps}
\end{figure*}

This section states and outlines the proof to the main result.

By specializing the $\kappa$-PAC definitions  (\Cref{def:rl-obj-pac-algo,def:rl-obj-sample-efficiently-pac-algo}) with the definition of LTL objectives in \Cref{sec:ltl-objectives}, we obtain the following definitions of LTL-PAC.

\begin{definition}
\label{def:rl-ltl-pac-algo}
Given an LTL objective $\xi$, a \planningwithgenerativemodel algorithm  $(\mathcal{A}^\text{S}, \mathcal{A}^\text{L})$ is {\em LTL-PAC} (probably approximated correct for LTL objective $\xi$) in an environment MDP $\mathcal{M}$ for the LTL objective $\xi$ if, with the sequence of transitions $T$ of length $N$ sampled using the sampling algorithm $\mathcal{A}^\text{S}$, the learning algorithm $\mathcal{A}^\text{L}$ outputs a non-Markovian $\epsilon$-optimal policy with a probability of at least $1-\delta$ for all $\epsilon>0$ and $0 < \delta < 1$. That is,
\begin{equation}
\label{eq:pac-ltl-epsilon-delta-inequality}
\prob*{\rv{T} \sim \mdpsamplingproduct{\mathcal{M}}{\mathcal{A}^\text{S}}{N}}{\mdpvaluefunc{\mathcal{A}^\text{L}\left(T\right)}{\mathcal{M}}{\xi} \ge \mdpvaluefunc{\pi^*}{\mathcal{M}}{\xi} - \epsilon} \ge 1 - \delta .
\end{equation}
\end{definition}
We call the probability on the left of the inequality the {\em LTL-PAC probability} of the algorithm $(\mathcal{A}^\text{S}, \mathcal{A}^\text{L})$.

\begin{definition}
\label{def:rl-ltl-sample-efficiently-pac-algo}
Given an LTL objective $\xi$, an LTL\Hyphdash PAC \planningwithgenerativemodel algorithm for $\xi$ is {\em sample} {\em efficiently} LTL\Hyphdash PAC if the number of sampled transitions $N$ is asymptotically polynomial to $\frac{1}{\epsilon}$, $\frac{1}{\delta}$, $|S|$, $|A|$.
\end{definition}

With the above definitions, we can now define the PAC learnability of an LTL objective and state the main theorem.
\begin{definition}
\label{def:rl-ltl-pac-learnable}
An LTL formula $\phi$ over atomic propositions $\Pi$ is {\em LTL-PAC-learnable by \planningwithgenerativemodel (\reinforcementlearning)} if there exists a sample efficiently LTL-PAC \planningwithgenerativemodel (\reinforcementlearning) algorithm for all environment MDPs and all consistent labeling functions $\mathcal{L}$ (that is, $\mathcal{L}$ maps from the MDP's states to $2^\Pi$) for the LTL objective specified by $(\mathcal{L}, \phi)$.
\end{definition}

\begin{theorem}
\label{thm:ltl-non-pac}
An LTL formula $\phi$ is LTL-PAC-learnable by \reinforcementlearning (\planningwithgenerativemodel) if (and only if) $\phi$ is \ltlclasst{finitary}.
\end{theorem}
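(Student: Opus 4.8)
The plan is to prove both directions of the equivalence. The $(\Leftarrow)$ direction (finitary formulas are learnable) is a routine reduction, whereas the $(\Rightarrow)$ direction (only finitary formulas are learnable), which I would prove by contraposition, is the technical core and the place where the construction of \Cref{fig:counterexample-mdps} does its work.

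For the easy direction, suppose $\phi$ is finitary with horizon $H$. Then whether $\mathcal{L}(w) \vDash \phi$ depends only on the length-$H$ prefix of $w$, so the LTL objective collapses to a finite-horizon objective. I would track the length-$H$ history (equivalently, take the product of the MDP with a depth-$H$ prefix tracker) to obtain a finite-horizon planning problem, sample each reachable state--action pair enough times to estimate its transition distribution, and invoke a standard simulation lemma: over a horizon of the constant length $H$, the satisfaction probability of any policy is Lipschitz in the per-step transition error, so an $\epsilon/H$-accurate empirical model yields an $\epsilon$-optimal policy. A Hoeffding plus union-bound argument makes the sample count $\mathrm{poly}(1/\epsilon, 1/\delta, |S|, |A|)$ with coefficients depending on the constant $H$, which \Cref{def:rl-ltl-sample-efficiently-pac-algo} permits; the reset-only (reinforcement-learning) version follows from the standard episodic PAC-MDP machinery, since we only need to estimate the states reachable within $H$ steps.

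For the hard direction I would argue the contrapositive: if $\phi$ is not finitary then it is not LTL-PAC-learnable. Because finitary $=$ guarantee $\cap$ safety, a non-finitary $\phi$ fails to be a guarantee formula or fails to be a safety formula, and since finitary is closed under negation while guarantee and safety are dual, I may reduce to a single case. The key structural lemma---which I expect to be the main obstacle---is to turn this failure into a pumpable witness: finite words $u$ (prefix) and $x$ (loop), together with ultimately periodic tails, yielding $u x^{j} y_{\mathrm{acc}} z_{\mathrm{acc}}^\omega \vDash \phi$ and $u x^{j} y_{\mathrm{rej}} z_{\mathrm{rej}}^\omega \not\vDash \phi$ for every $j$. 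This is exactly where non-finitariness is needed: were $\phi$ finitary with horizon $H$, taking $j$ large enough that $\lvert u x^{j}\rvert > H$ would force the accepting and rejecting words to agree on their first $H$ symbols and hence share a truth value, a contradiction. Establishing existence of such witnesses for every non-finitary $\phi$ is the delicate part, and I expect to obtain it from the $\omega$-regularity of the language of $\phi$ via a pumping/Ramsey argument on a B\"uchi automaton, treating the not-safety and not-guarantee cases by duality.

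Given the lemma, I would label the chains of \Cref{fig:counterexample-mdps} so that the $h$-cycle realizes the accepting tail and the $q$-cycle the rejecting tail, with the deterministic $g$-loop realizing $x$; the two MDPs $\mathcal{M}_1,\mathcal{M}_2$ differ only in which action ($a_1$ or $a_2$) at $g_l$ escapes to the $h$-cycle. In each MDP the escape occurs almost surely, so the optimal value is $1$, but the optimal action is swapped between them. Two short arguments then finish the proof. First, a coupling of the two executions shows that for every fixed policy $\pi$, its value in $\mathcal{M}_1$ plus its value in $\mathcal{M}_2$ equals $1$ (each value is the probability the escaping action is $a_1$, respectively $a_2$), so for $\epsilon < 1/2$ no single policy is $\epsilon$-optimal in both. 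Second, letting $E$ be the event that the sampler ever observes an escape transition out of $g_l$, under $E^{c}$ the sampled transition sequence---and hence the learned policy---has the same distribution in $\mathcal{M}_1$ and $\mathcal{M}_2$, while $\Pr[E] \le 1-(1-p)^N$. Fixing $\epsilon=\delta=\tfrac14$ makes $N$ a constant $N_0$ for this fixed-size family, and choosing $p$ small enough that $1-(1-p)^{N_0} < \tfrac12 - \delta$ contradicts the two PAC requirements once I add them and use that the two $\epsilon$-optimal policy sets are disjoint, establishing \Cref{thm:ltl-non-pac}.
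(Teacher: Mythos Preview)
Your proposal is correct and follows essentially the same route as the paper: the same counterexample-MDP family of \Cref{fig:counterexample-mdps}, the same coupling that forces $V_{\mathcal{M}_1}^\pi + V_{\mathcal{M}_2}^\pi = 1$, the same indistinguishability-under-no-escape argument with the $(1-p)^N$ bound, and the same reduction of the easy direction to a finite-horizon PAC-MDP problem (the paper invokes ORLC rather than a bare simulation lemma, but either works). Your unified structural lemma (one loop $x$ with both an accepting and a rejecting ultimately-periodic escape) is a clean repackaging of the paper's two symmetric cases---an \emph{uncommittable accepting word} for $\phi\notin\textsc{guarantee}$ and an \emph{uncommittable rejecting word} for $\phi\notin\textsc{safety}$---and your B\"uchi/Ramsey route is interchangeable with the paper's deterministic-Rabin-automaton argument.
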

Between the two directions of \Cref{thm:ltl-non-pac}, the forward direction (``only if'')  is more important.
The forward direction states that for any LTL formula not in $\ltlclass{finitary}$  (that is,  infinite-horizon properties), there does not exist a \planningwithgenerativemodel algorithm---which by definition also excludes any \reinforcementlearning algorithm---that is sample efficiently LTL-PAC for all environments. 
This result is the core contribution of the paper---infinite-horizon LTL formulas are not sample efficiently LTL-PAC-learnable.

Alternatively, the reverse direction of \Cref{thm:ltl-non-pac} states that, for any \ltlclasst{finitary} formula (finite-horizon properties), there exists a \reinforcementlearning algorithm---which by definition is also a \planningwithgenerativemodel algorithm---that is sample efficiently LTL-PAC for all environments.

\subsection{Proof of \Cref{thm:ltl-non-pac}: Forward Direction}
\label{sec:ltlnonpacforwardproof}

This section proves the forward direction of \Cref{thm:ltl-non-pac}.
First, we construct a family of pairs of MDPs.
Then, for the singular case of the LTL formula $\eventually h_0$, we derive a sample complexity lower bound for any LTL-PAC \planningwithgenerativemodel algorithm applied to our family of MDPs. 
This lower bound necessarily depends on a specific transition probability in the MDPs.
Finally, we generalize this bound to any non-finitary LTL formula and conclude the proof.

\subsubsection{MDP Family}
\label{sec:counterexample-mdps}

We give two constructions of parameterized counterexample MDPs $\mathcal{M}_1$ and $\mathcal{M}_2$ shown in \Cref{fig:counterexample-mdps}.
The key design behind each pair in the family is that no \planningwithgenerativemodel algorithm can learn a policy that is simultaneously $\epsilon$-optimal on both MDPs without observing a number of samples that depends on the probability of a specific transition. 

Both MDPs are parameterized by the shape parameters $k$, $l$, $u$, $v$, $m$, $n$, and an unknown transition probability parameter $p$.
The actions are $\{a_1, a_2\}$, and the state space is partitioned into three regions (as shown in \Cref{fig:counterexample-mdps}: states $g_{0 \dots l}$ (the grey states), states $h_{0\dots v}$ (the line-hatched states), and states $q_{0\dots n}$ (the white states).
All transitions, except $g_l \rightarrow h_0$ and $g_l \rightarrow q_0$, are the same between $\mathcal{M}_1$ and $\mathcal{M}_2$.
The effect of this difference between the two MDPs is that, for $\mathcal{M}_i$, $i \in \{1, 2\}$:
\begin{itemize}[leftmargin=*, wide=0pt]
\item Action $a_i$ in $\mathcal{M}_i$ at the state $g_l$ will transition to the state $h_0$ with probability $p$, inducing a run that cycles in the region $h_{u \dots v}$ forever.
\item Action $a_{3-i}$ (the alternative to $a_i$) in $\mathcal{M}_i$ at the state $g_l$ will transition to the state $q_0$ with probability $p$, inducing a run that cycles in the region $q_{m \dots n}$ forever.
\end{itemize}

Further, for any policy, a run of the policy on both MDPs must eventually reach $h_0$ or $q_0$ with probability $1$, and ends in an infinite cycle in either $h_{u\dots  v}$ or $q_{m\dots n}$.

\subsubsection{Sample Complexity of \texorpdfstring{$\eventually h_0$}{F h0}}
\label{sec:samplecomplexityFh0}

We next consider the LTL objective $\xi^{h_0}$ specified by the LTL formula $\eventually h_0$ and the labeling function $\mathcal{L}^{h_0}$ that labels only the state $h_0$ as $\mathit{true}$.
A sample path on the MDPs (\Cref{fig:counterexample-mdps}) satisfies this objective iff the path reaches the state $h_0$.

Given \begingroup\small$\epsilon > 0$ \endgroup  and \begingroup\small$0 < \delta < 1$\endgroup, our goal is to derive a lower bound on the number of sampled environment transitions performed by an algorithm, so that the satisfaction probability of $\pi$, the learned policy, is $\epsilon$-optimal (i.e., \begingroup\small$\mdpvaluefunc{\pi}{\mathcal{M}}{\xi^{h_0}} \ge \mdpvaluefunc{\pi^*}{\mathcal{M}}{\xi^{h_0}} - \epsilon$\endgroup) with a probability of least $1 - \delta$.

The key rationale behind the following lemma is that, if a \planningwithgenerativemodel algorithm has not observed any transition to either $h_0$ or $q_0$, the learned policy cannot be $\epsilon$-optimal in both $\mathcal{M}_1$ and $\mathcal{M}_2$.

\begin{lemma}
\label{thm:minofm1m2pacprob}
For any \planningwithgenerativemodel algorithm $(\mathcal{A}^\text{S}, \mathcal{A}^\text{L})$, it must be the case that:
$
\min\left(\zeta_1, \zeta_2\right) \le \frac{1}{2}
$,
where {\small $ \zeta_i = \prob*{\rv{T}}{
\mdpvaluefunc{\mathcal{A}^\text{L}(T)}{\mathcal{M}_i}{\xi^{h_0}} \ge \mdpvaluefunc{\pi^*}{\mathcal{M}_i}{\xi^{h_0}} - \epsilon | n\left(T\right) = 0}$} and $n(T)$ is the number of transitions in $T$ that start from $g_l$ and end in either $h_0$ or $q_0$. 
\end{lemma}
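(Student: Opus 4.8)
The plan is to combine two facts: an \emph{indistinguishability} fact about the sampled data and an \emph{incompatibility} fact about policy values. Write $\mu_i$ for the law of the sampled sequence $\rv{T}$ when $\mathcal{A}^\text{S}$ interacts with $\mathcal{M}_i$, and let $v_i(\pi) = \mdpvaluefunc{\pi}{\mathcal{M}_i}{\xi^{h_0}}$, so that $\zeta_i$ is the $\mu_i(\cdot \mid n=0)$-probability that the learned policy $\mathcal{A}^\text{L}(T)$ has $v_i \ge v_i^\ast - \epsilon$.

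First I would prove that conditioning on $n(T)=0$ makes the two sampling laws coincide: $\mu_1(\cdot \mid n=0) = \mu_2(\cdot \mid n=0)$. The two MDPs are identical except on the transitions out of $g_l$ that land in $h_0$ or $q_0$, and those are precisely the transitions counted by $n$. Hence along any $T$ with $n(T)=0$, every individual transition carries the same probability in $\mathcal{M}_1$ and $\mathcal{M}_2$ (the only $g_l$-transitions present are the self-loops to $g_k$, with probability $1-p$ in both), and since $\mathcal{A}^\text{S}$ is a fixed history-dependent rule, the likelihood of generating $T$ is identical under the two MDPs. Summing over $\{T : n(T)=0\}$ shows the conditioning events have equal mass, so the conditional laws agree; call this common law $\nu$. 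Thus both $\zeta_1$ and $\zeta_2$ are probabilities of value-events under the \emph{same} distribution $\nu$ over the data (hence over the learned policy).

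Second I would show no single policy is $\epsilon$-optimal in both MDPs, by proving $v_1(\pi) + v_2(\pi) = 1$ for every (non-Markovian, stochastic) $\pi$. This follows from a coupling: run $\pi$ on $\mathcal{M}_1$ and $\mathcal{M}_2$ on a shared probability space, drawing the policy's internal randomization and the environment's loop/exit coin from common bits. Until the first exit from $g_l$, the state trajectory is deterministic and identical in both runs, so with shared randomness $\pi$ picks the same action at every visit; the exit therefore occurs at the same visit with the same action $A^\ast$ (the loop probability $1-p$ is shared). If $A^\ast = a_1$ the run reaches $h_0$ in $\mathcal{M}_1$ and $q_0$ in $\mathcal{M}_2$, and vice versa if $A^\ast = a_2$. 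Exit happens almost surely (it fails only on an event of probability $\lim_{k\to\infty}(1-p)^k = 0$), so exactly one of the coupled runs satisfies $\eventually h_0$, giving $\indicator{w_1 \vDash \eventually h_0} + \indicator{w_2 \vDash \eventually h_0} = 1$ a.s.\ and hence $v_1(\pi)+v_2(\pi)=1$. Since $v_1^\ast = v_2^\ast = 1$ (always playing the favorable action reaches $h_0$ with probability $1$), $\epsilon$-optimality in both would force $2(1-\epsilon) \le 1$, i.e.\ $\epsilon \ge \tfrac12$; so for $\epsilon < \tfrac12$ the policy sets $E_i = \{\pi : v_i(\pi) \ge 1-\epsilon\}$ are disjoint.

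Finally I would combine the two: because a given $T$ determines one policy $\mathcal{A}^\text{L}(T)$ which lies in at most one of $E_1, E_2$, disjointness transfers to the data events, so under $\nu$ we get $\zeta_1 + \zeta_2 = \nu(\mathcal{A}^\text{L}(T)\in E_1) + \nu(\mathcal{A}^\text{L}(T)\in E_2) \le 1$, whence $\min(\zeta_1,\zeta_2) \le \tfrac12$. I expect the indistinguishability step to be the main obstacle to state rigorously, since $\mathcal{A}^\text{S}$ adaptively selects which state--action pair to query from the entire past: the argument must factor $\mu_i$ into sampling-rule factors (identical across the two MDPs) times transition factors, and verify that on $\{n=0\}$ every transition factor matches. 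The coupling step is conceptually simpler but also needs care that the policy's full \emph{history}, not merely its current state, coincides across the two runs before the exit, which is exactly what licenses reusing the same policy randomness. A minor point to flag is the restriction $\epsilon < \tfrac12$ (for larger $\epsilon$ a policy mixing the two actions is near-optimal in both MDPs), which the surrounding non-learnability argument accommodates by fixing a small $\epsilon$.
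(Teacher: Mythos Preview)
Your proposal is correct and mirrors the paper's proof: both establish (i) that the conditional sampling laws on $\{n(T)=0\}$ agree across $\mathcal{M}_1$ and $\mathcal{M}_2$, and (ii) that $v_1(\pi)+v_2(\pi)=1$ for every policy (so for $\epsilon<\tfrac12$ the $\epsilon$-optimality events are disjoint), and then combine these to obtain $\zeta_1+\zeta_2\le 1$. The only minor variation is that you prove the value-sum identity via a coupling with shared policy and loop/exit randomness, whereas the paper conditions on the first-exit time $E_i$ and sums the action probabilities at that step; the two arguments are equivalent in content, and your coupling makes the ``identical histories before exit'' point (which the paper uses implicitly) more explicit.
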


The value $\zeta_i$ is the LTL-PAC probability of a learned policy on $\mathcal{M}_i$, given that the \planningwithgenerativemodel algorithm did not observe any information that allows the algorithm to distinguish between $\mathcal{M}_1$ and $\mathcal{M}_2$. 

\newlist{steps}{enumerate}{1}
\setlist[steps, 1]{label=\textbf{Step \arabic*}, align=left, nosep}%

\begin{proof}
We present a proof of \Cref{thm:minofm1m2pacprob} in \Cref{sec:proofofminofm1m2pacprob}.
\end{proof}

A \planningwithgenerativemodel algorithm cannot learn an $\epsilon$-optimal policy without observing a transition to either $h_0$ or $q_0$. 
Therefore, we bound the sample complexity of the algorithm from below by the probability that the sampling algorithm does observe such a transition:

\begin{lemma}
\label{thm:ltlpaclowerboundFh0}
For the LTL objective $\xi^{h_0}$, the number of samples, $N$, for an LTL-PAC  \planningwithgenerativemodel algorithm for both $\mathcal{M}_1$ and $\mathcal{M}_2$ (for any instantiation of the parameters $k,l,u,v,m,n$) has a lower bound of
$
N \ge \frac{ \log (2\delta) }{\log \left(1-p\right)}
$.
\end{lemma}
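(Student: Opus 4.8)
The plan is to convert the indistinguishability captured by \Cref{thm:minofm1m2pacprob} into a sample-complexity bound by conditioning on whether a ``distinguishing'' transition---one of the transitions out of $g_l$ counted by $n(T)$---has been observed. First I would fix an arbitrary LTL-PAC \planningwithgenerativemodel algorithm $(\mathcal{A}^\text{S}, \mathcal{A}^\text{L})$ using $N$ samples and, for each $i \in \{1,2\}$, write its full LTL-PAC probability on $\mathcal{M}_i$ as $P_i = \prob*{\rv{T}}{\mdpvaluefunc{\mathcal{A}^\text{L}(T)}{\mathcal{M}_i}{\xi^{h_0}} \ge \mdpvaluefunc{\pi^*}{\mathcal{M}_i}{\xi^{h_0}} - \epsilon}$ with $\rv{T} \sim \mdpsamplingproduct{\mathcal{M}_i}{\mathcal{A}^\text{S}}{N}$. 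Being LTL-PAC on both MDPs forces $P_i \ge 1 - \delta$ for each $i$.

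Next I would split $P_i$ by the law of total probability over the events $n(T) = 0$ and $n(T) > 0$. Writing $\beta_i = \prob*{\rv{T}}{n(T) = 0}$, the first conditional term is exactly the $\zeta_i$ of \Cref{thm:minofm1m2pacprob} and the second is at most $1$, giving $P_i \le \beta_i \zeta_i + (1 - \beta_i)$. The crux is a lower bound on $\beta_i$: from $g_l$, every action in either MDP reaches $\{h_0, q_0\}$ with probability exactly $p$ and otherwise loops back into the $g$-region, so---regardless of the adaptive choices of $\mathcal{A}^\text{S}$---the conditional probability that the $j$-th sample is a distinguishing transition, given the history, is at most $p$. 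Telescoping this bound over the $N$ samples yields $\beta_i \ge (1-p)^N$, which in particular is positive so that conditioning on $n(T)=0$ is well defined. Note that this step uses nothing about the shape parameters $k,l,u,v,m,n$, which is why the final bound is uniform in them.

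Finally I would combine the pieces. Let $i^\star$ achieve $\min(\zeta_1,\zeta_2)$, so $\zeta_{i^\star} \le \tfrac12$ by \Cref{thm:minofm1m2pacprob}; then $\beta_{i^\star}\zeta_{i^\star} + (1 - \beta_{i^\star}) \le 1 - \tfrac{\beta_{i^\star}}{2}$, and the requirement $1 - \delta \le P_{i^\star}$ forces $\beta_{i^\star} \le 2\delta$. Chaining with $\beta_{i^\star} \ge (1-p)^N$ gives $(1-p)^N \le 2\delta$, and taking logarithms while dividing by $\log(1-p) < 0$ (which reverses the inequality) produces the claimed $N \ge \frac{\log(2\delta)}{\log(1-p)}$.

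I expect the main obstacle to be the lower bound $\beta_i \ge (1-p)^N$, since an adaptive sampling algorithm is free to choose where to query and to revisit $g_l$ arbitrarily often. The cleanest route avoids reasoning about the random number of $g_l$-visits directly: one bounds, for each sample index $j$, the conditional probability of a distinguishing transition at step $j$ given the entire past by $p$ (zero if the $j$-th query is not at $g_l$, and $p$ if it is, by independence of the generative-model draws), and then telescopes to $(1-p)^N$.
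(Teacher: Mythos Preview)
Your proposal is correct and follows essentially the same route as the paper: condition the PAC inequality on $n(T)=0$, upper-bound the complementary conditional by $1$ to obtain $(1-\zeta_i)\,\prob{\rv{T}}{n(T)=0}\le\delta$, lower-bound $\prob{\rv{T}}{n(T)=0}$ by $(1-p)^N$, and then invoke \Cref{thm:minofm1m2pacprob}. Your telescoping justification of $\beta_i\ge(1-p)^N$ for adaptive $\mathcal{A}^\text{S}$ is slightly more explicit than the paper's one-line argument, but the substance is identical.
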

Below we give a proof sketch of \Cref{thm:ltlpaclowerboundFh0}; we give the complete proof in \Cref{sec:ltlpaclowerboundFh0fullproof}.
\begin{proof}[Proof Sketch of \Cref{thm:ltlpaclowerboundFh0}]
First, we assert that the two inequalities of \Cref{eq:pac-ltl-epsilon-delta-inequality} for both $\mathcal{M}_1$ and $\mathcal{M}_2$ holds true for a \planningwithgenerativemodel algorithm.
Next, by conditioning on $n(T) = 0$, plugging in the notation of $\zeta_i$, and relaxing both inequalities, we get 
$
(1 - \zeta_i) \prob{\rv{T}}{n(T) = 0} \le \delta
$, for $i \in \{1, 2\}$.
Then, since $n(T) = 0$ only occurs when all transitions from $g_l$ end in $g_k$, we have $\prob{\rv{T}}{n(T) = 0}  \ge (1 - p)^N$.
Combining the inequalities, we get $(1 - \min(\zeta_1, \zeta_2))(1 - p)^N \le \delta$.
Finally, we apply \Cref{thm:minofm1m2pacprob} to get the desired lower bound of $N\,{\ge}\,\frac{\log(2\delta)}{\log(1 - p)}$.
\end{proof}

\subsubsection{Sample Complexity of \texorpdfstring{Non-\ltlclasst{finitary}}{Non-finitary} Formulas}
\label{sec:samplecomplexitynonfinitary}

This section generalizes our lower bound on $\eventually h_0$ to all non-\ltlclasst{finitary} LTL formulas.
The key observation is that for any non-\ltlclasst{finitary} LTL formula, we can choose a pair of MDPs, $\mathcal{M}_1$ and $\mathcal{M}_2$, from our MDP family.
For both MDPs in this pair, finding an $\epsilon$-optimal policy for $\eventually h_0$ is reducible to finding an $\epsilon$-optimal policy for the given formula. 
By this reduction, the established lower bound for the case of $\eventually h_0$ also applies to the case of any non-\ltlclasst{finitary} formula.
Therefore, the sample complexity of learning an $\epsilon$-optimal policy for any non-\ltlclasst{finitary} formula has a lower bound of $\frac{\log(2\delta)}{\log \left(1-p\right)}$.

We will use $[w_1; w_2 ; \dots w_n]$ to denote the concatenation of the finite-length words $w_1 \dots w_n$.
We will use $\wrepeat{w}{i}$ to denote the repetition of the finite-length word $w$ by $i$ times, and $\wcycle{w}$ to denote the infinite repetition of $w$.

\begin{definition}
\label{def:uncommittable-word}
An accepting (resp.\ rejecting) infinite-length word $\wrational{w_a}{w_b}$ of $\phi$ is \begingroup\em uncommittable \endgroup if there exists finite-length words $w_c$, $w_d$ such that $\phi$ rejects (resp.\ accepts) $\wrational{w_a; \wrepeat{w_b}{i}; w_c}{w_d}$ for all $i \in \naturals$.
\end{definition}
\begin{lemma}
\label{thm:uncommitablewordssamesatprob}
If $\phi$ has an uncommittable word $w$, there is an instantiation of $\mathcal{M}_1$ (or $\mathcal{M}_2$) in \Cref{fig:counterexample-mdps} and a labeling function $\mathcal{L}$, such that, for any policy, the satisfaction probabilities of that policy in $\mathcal{M}_1$ (or $\mathcal{M}_2$) for the LTL objectives specified by $(\mathcal{L}, \phi)$ and $(\mathcal{L}^{h_0}, \eventually h_0)$ are the same.
\end{lemma}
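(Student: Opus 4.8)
The plan is to instantiate one member of the MDP family, together with a labeling $\mathcal{L}$, so that a sample path satisfies $\phi$ under $\mathcal{L}$ exactly when the run reaches the state $h_0$. Since (as already established) every run of every policy reaches exactly one of $h_0$ or $q_0$ with probability one, the event ``$\wmap{\mathcal{L}}{w} \vDash \phi$'' then coincides, up to a null set, with the event ``the run reaches $h_0$'', which is precisely the event ``$\wmap{\mathcal{L}^{h_0}}{w} \vDash \eventually h_0$''. Equality of these two events as measurable sets immediately gives equality of their probabilities under the DTMC induced by \emph{any} policy, which is the claim. All the work therefore goes into choosing the shape parameters $k,l,u,v,m,n$ and the labels so that this per-path equivalence holds.

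I would treat the case of an accepting uncommittable word $\wrational{w_a}{w_b}$ first; the rejecting case is symmetric and obtained by swapping the roles of the $h$- and $q$-regions. The MDP structure forces every run to spell, along the $g$-states, the word $w_a$ (from $g_0,\dots,g_{k-1}$) followed by $j \ge 1$ copies of $w_b$, one per traversal of the cycle $g_k,\dots,g_l$, where $j$ is the (geometrically distributed) number of traversals before the run exits at $g_l$. Accordingly I would set $k=|w_a|$ and $l-k+1=|w_b|$ and label the $g$-states to spell these symbols. I would then label the $h$-region so that it spells $\wcycle{w_b}$ in phase (e.g.\ taking $u=0$ and letting the cycle $h_0,\dots,h_v$ spell a single copy of $w_b$), and the $q$-region so that it spells $w_c\,\wcycle{w_d}$ (taking $m=|w_c|$ and letting the cycle $q_m,\dots,q_n$ spell $w_d$), where $w_c,w_d$ are the finite words supplied by \Cref{def:uncommittable-word}.

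With this labeling, a run that reaches $h_0$ spells $[w_a;\wrepeat{w_b}{j}]$ followed by $\wcycle{w_b}$, i.e.\ exactly $\wrational{w_a}{w_b}$, which $\phi$ accepts \emph{independently of $j$}; a run that reaches $q_0$ spells $[w_a;\wrepeat{w_b}{j};w_c]\,\wcycle{w_d}$, which $\phi$ rejects for every $j$ by the defining property of an uncommittable word in \Cref{def:uncommittable-word}. This is exactly the per-path equivalence sought. The same labeling establishes this equivalence within both $\mathcal{M}_1$ and $\mathcal{M}_2$ (they share the same state/labeling structure and differ only in which action fires the $g_l\to h_0$ transition), so the lemma may name either MDP.

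The step I expect to demand the most care is the phase alignment of the cyclic labels against the \emph{random} number $j$ of $g$-cycle traversals. I must ensure the $g$-prefix always ends exactly at a $w_b$-block boundary---which holds because each cycle contributes a full copy of $w_b$ and the exit occurs at $g_l$, the last cycle state---so that the $h$-region's $\wcycle{w_b}$ continues the pattern in phase and yields $\wrational{w_a}{w_b}$ for \emph{every} $j$, and so that the $q$-region completion is rejected for \emph{every} $j$. This is exactly the ``for all $i \in \naturals$'' quantification built into \Cref{def:uncommittable-word}, so the hypothesis is engineered to discharge this obstacle; the remaining bookkeeping---that $k,l,u,v,m,n$ are nonnegative integers determined by the (positive) word lengths---is routine.
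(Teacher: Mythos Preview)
Your proposal is correct and follows essentially the same approach as the paper: instantiate the shape parameters so that the $g$-chain spells $w_a$ followed by repeated copies of $w_b$, the $h$-region continues with $\wcycle{w_b}$ (taking $u=0$), and the $q$-region spells $w_c\,\wcycle{w_d}$, with the rejecting case handled by the symmetric swap of the $h$- and $q$-labels. The paper's proof is the same construction with slightly terser justification; your attention to the phase-alignment issue and the role of the ``for all $i\in\naturals$'' quantifier in \Cref{def:uncommittable-word} makes explicit exactly the point the paper leaves implicit.
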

\begin{proof}
For an uncommittable word $w$, we first find the finite-length words $w_a$,$w_b$,$w_c$,$w_d$ according to \Cref{def:uncommittable-word}.
We then instantiate $\mathcal{M}_1$ and $\mathcal{M}_2$ in \Cref{fig:counterexample-mdps} as follows.
\begin{itemize}[noitemsep, topsep=0pt, partopsep=0pt, leftmargin=*, wide=0pt]
\item If $w$ is an uncommittable accepting word, we set $k$, $l$, $u$, $v$, $m$, $n$ (\Cref{fig:counterexample-mdps}) to $|w_a|$, $|w_a| + |w_b|$, $0$, $|w_b|$, $|w_c|$ and $|w_c|+|w_d|$, respectively. We then set the labeling function as in \Cref{eq:guaranteemdplabelfunc}.
\item If $w$ is an uncommittable rejecting word, we set $k$, $l$, $u$, $v$, $m$, $n$ (\Cref{fig:counterexample-mdps}) to $|w_a|$, $|w_a| + |w_b|$, $|w_c|$, $|w_c| + |w_d|$, $0$ and $|w_b|$, respectively. We then set the labeling function as in \Cref{eq:safetymdplabelfunc}.
\end{itemize}
\vspace{-0.8em}
\[
\arraycolsep=1.3pt\def\arraystretch{1.3}
\begin{array}{c@{\phantom{x}}c}
\mathcal{L}(s){=}\left\{
\arraycolsep=1.2pt\def\arraystretch{1.1}
\begin{array}{ll}
\windex{[w_a; w_b]}{j} & \text{if } s{=}g_j \\
\windex{w_b}{j} & \text{if } s{=}h_j \\
\windex{[w_c; w_d]}{j} & \text{if } s{=}q_j \\
\end{array}\right. & %
\mathcal{L}(s){=}\left\{%
\arraycolsep=1.4pt\def\arraystretch{1.2}
\begin{array}{ll}
\windex{[w_a; w_b]}{j} & \text{if } s {=}g_j \\
\windex{[w_c; w_d]}{j} & \text{if } s{=}h_j \\
\windex{w_b}{j} & \text{if } s{=}q_j \\
\end{array}\right. \\
\addlabel{eq:guaranteemdplabelfunc} & \addlabel{eq:safetymdplabelfunc}
\end{array}
\]
\vspace{-1em}

In words, for an uncommittable accepting word, we label the states $g_{0\dots l}$ one-by-one by $[w_a; w_b]$;
we label the states $h_{0 \dots v}$ one-by-one by $w_b$ (and set $u = 0$, which eliminates the chain of states $h_{0\dots u}$);
we label the states $q_{0 \dots n}$ one-by-one by $[w_c; w_d]$.
Symmetrically, for an uncommittable rejecting word, we label the states $g_{0\dots l}$ one-by-one by $[w_a; w_b]$;
we label the states $h_{0 \dots v}$ one-by-one by $[w_c; w_d]$;
we label the states $q_{0 \dots n}$ one-by-one by $w_b$ (and set $m = 0$, which eliminates the chain of states $q_{0\dots m}$).

By the above instantiation, the two objectives specified by $(\mathcal{L}, \phi)$ and $(\mathcal{L}^{h_0}, \eventually h_0)$ are equivalent in $\mathcal{M}_1$ and $\mathcal{M}_2$.  
In particular, any path in $\mathcal{M}_1$ or $\mathcal{M}_2$ satisfies the LTL objective specified by $(\mathcal{L}, \phi)$ if and only if the path visits the state $h_0$ and therefore also satisfies the LTL objective specified by $(\mathcal{L}^{h_0}, \eventually h_0)$.
Therefore, any policy must have the same satisfaction probability for both objectives.
\end{proof}

\begin{lemma}
\label{thm:ltlpaclowerbound}
For $\phi\notinltlclass{finitary}$, the number of samples for a \planningwithgenerativemodel algorithm to be LTL-PAC has a lower bound of
$
N\ge\frac{\log(2\delta)}{\log(1 - p)}$.
\end{lemma}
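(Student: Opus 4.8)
The plan is to reduce the general non-\ltlclasst{finitary} case back to the already-settled case of $\eventually h_0$ (\Cref{thm:ltlpaclowerboundFh0}) by routing through \Cref{thm:uncommitablewordssamesatprob}. Concretely I would proceed in three steps. First, show that every $\phi \notin \ltlclass{finitary}$ admits an uncommittable word in the sense of \Cref{def:uncommittable-word}. Second, feed that word into \Cref{thm:uncommitablewordssamesatprob} to obtain a shape instantiation of the pair $\mathcal{M}_1, \mathcal{M}_2$ together with a labeling $\mathcal{L}$ for which the objective $(\mathcal{L}, \phi)$ and the objective $(\mathcal{L}^{h_0}, \eventually h_0)$ assign the same satisfaction probability to every policy. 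Third, observe that the LTL-PAC inequality \eqref{eq:pac-ltl-epsilon-delta-inequality} depends on the objective only through its value function, so any algorithm that is LTL-PAC for $\phi$ on this pair is, verbatim, LTL-PAC for $\eventually h_0$ on the same pair; the bound $N \ge \frac{\log(2\delta)}{\log(1-p)}$ then transfers directly from \Cref{thm:ltlpaclowerboundFh0}.

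The crux, and the step I expect to be the main obstacle, is the first one: manufacturing an uncommittable word from the bare assumption $\phi \notin \ltlclass{finitary}$. Here I would use the identity $\ltlclass{finitary} = \ltlclass{guarantee} \cap \ltlclass{safety}$, so that $\phi \notin \ltlclass{finitary}$ forces $\phi \notin \ltlclass{safety}$ or $\phi \notin \ltlclass{guarantee}$, producing an uncommittable rejecting word and an uncommittable accepting word respectively (\Cref{thm:uncommitablewordssamesatprob} accepts either type). Consider the case $\phi \notin \ltlclass{safety}$. Since the \ltlclass{safety} class is exactly the topologically closed $\omega$-languages, $L(\phi)$ failing to be closed means its closure strictly contains it: there is a word that is rejected yet all of whose prefixes extend to accepted words. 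Fixing a deterministic $\omega$-automaton recognizing $L(\phi)$ and calling a state \emph{live} if some continuation from it is accepted, the closure of $L(\phi)$ is precisely the set of words whose unique run stays live forever, so such a witness is a rejecting run that never leaves the live states.

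The delicate part is upgrading this existential, possibly aperiodic, witness to the ultimately periodic shape $\wrational{w_a}{w_b}$ demanded by \Cref{def:uncommittable-word}, with a single completion $w_c, w_d$ that works after \emph{every} number $i$ of repetitions of $w_b$. I would extract from the live sub-automaton a reachable cycle whose infinite repetition is rejecting, take $w_a$ to lead to a state $s^{*}$ on this cycle, and take $w_b$ to be one full traversal, so that $\delta(s^{*}, w_b) = s^{*}$ and $\wrational{w_a}{w_b}$ is rejected. Because $\delta(q_0, w_a \wrepeat{w_b}{i}) = s^{*}$ for every $i$, and because $s^{*}$ is live, any ultimately periodic accepted continuation $\wrational{w_c}{w_d}$ from $s^{*}$ (one exists since a nonempty $\omega$-regular language always contains an ultimately periodic word) certifies $w_a \wrepeat{w_b}{i} w_c \wcycle{w_d} \in L(\phi)$ simultaneously for all $i$. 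This self-loop / idempotent-power manoeuvre is exactly what buys the required uniformity. The case $\phi \notin \ltlclass{guarantee}$ is symmetric: $L(\phi)$ is then not open, i.e.\ its complement is not closed, so running the same argument on the complement automaton yields an uncommittable accepting word of $\phi$.

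Finally, with the uncommittable word in hand, \Cref{thm:uncommitablewordssamesatprob} supplies the instantiated $\mathcal{M}_1, \mathcal{M}_2$ and labeling $\mathcal{L}$ whose satisfaction probabilities agree policy-by-policy, hence whose optimal values and $\epsilon$-optimality events agree. Suppose some \planningwithgenerativemodel algorithm were LTL-PAC for $\phi$ on both MDPs using fewer than $\frac{\log(2\delta)}{\log(1-p)}$ samples. Since the two objectives coincide as value functions, on the same sampled transitions the event $\mdpvaluefunc{\mathcal{A}^\text{L}(T)}{\mathcal{M}_i}{(\mathcal{L},\phi)} \ge \mdpvaluefunc{\pi^*}{\mathcal{M}_i}{(\mathcal{L},\phi)} - \epsilon$ is literally the event $\mdpvaluefunc{\mathcal{A}^\text{L}(T)}{\mathcal{M}_i}{\xi^{h_0}} \ge \mdpvaluefunc{\pi^*}{\mathcal{M}_i}{\xi^{h_0}} - \epsilon$, so the same algorithm satisfies \eqref{eq:pac-ltl-epsilon-delta-inequality} for $(\mathcal{L}^{h_0}, \eventually h_0)$, contradicting \Cref{thm:ltlpaclowerboundFh0}. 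Therefore $N \ge \frac{\log(2\delta)}{\log(1-p)}$. I expect all the genuine work to sit in the first step; Steps 2 and 3 are a change of objective that leaves every probability in the PAC inequality unchanged.
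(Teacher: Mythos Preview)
Your proposal is correct and follows essentially the same route as the paper: reduce to the $\eventually h_0$ case via \Cref{thm:uncommitablewordssamesatprob}, use $\ltlclass{finitary} = \ltlclass{guarantee} \cap \ltlclass{safety}$ to split into two cases, and in each case produce an uncommittable word so that the lower bound of \Cref{thm:ltlpaclowerboundFh0} transfers verbatim. The only stylistic difference is in how you establish the existence of the uncommittable word: you argue topologically (non-closed/non-open $\omega$-language, live states, then extract an ultimately periodic witness via a self-returning state $s^*$), whereas the paper argues by contradiction directly on the DRA's cycle structure (\Cref{thm:guaranteemusthaveacceptingcycletorejectingcycle,thm:safetymusthaverejectingcycletoacceptingcycle}); both arguments land on the same automaton-level fact and your idempotent $\delta(s^*, w_b) = s^*$ observation is exactly what makes the completion $w_c, w_d$ uniform over all $i$, matching the paper's cycle-based construction.
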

\vspace{-1em}
\begin{proof}
A corollary of \Cref{thm:uncommitablewordssamesatprob} is: 
for any $\phi$ that has an uncommittable word, we can construct a pair of MDPs $\mathcal{M}_1$ and $\mathcal{M}_2$ in the family of pairs of MDPs in \Cref{fig:counterexample-mdps}, such that, in both MDPs,
a policy is sample efficiently LTL-PAC for the LTL objective specified by $(\mathcal{L}, \phi)$ if it is sample efficiently LTL-PAC for the LTL objective specified by $(\mathcal{L}^{h_0}, \eventually h_0)$.
This property implies that the lower bound in \Cref{thm:ltlpaclowerboundFh0} for the objective specified by $(\mathcal{L}^{h_0}, \eventually h_0)$ also applies to the objective specified by $(\mathcal{L}, \phi)$, provided that any $\phi\notinltlclass{finitary}$ has an uncommittable word.
In \Cref{sec:proofofltlnonfinitaryprop}, we prove a lemma that any formula $\phi\notinltlclass{guarantee}$ has an uncommittable accepting word, and any formula $\phi\notinltlclass{safety}$ has an uncommittable rejecting word.
Since \ltlclass{finitary} is the intersection of \ltlclass{guarantee} and \ltlclass{safety},
this completes the proof.
\end{proof}

\subsubsection{Conclusion}

Note that the lower bound $\frac{\log(2\delta)}{\log(1 - p)}$ depends on $p$, the transition probability in the constructed MDPs.
Moreover, for \begingroup\small$\delta < \frac{1}{2}$\endgroup, as $p$ approaches $0$, this lower bound goes to infinity.
As a result, the bound does not satisfy the definition of sample efficiently LTL-PAC \planningwithgenerativemodel algorithm for the LTL objective (\Cref{def:rl-obj-sample-efficiently-pac-algo}), and thus no algorithm is sample efficiently LTL-PAC. Therefore, LTL formulas not in \ltlclass{finitary} are not LTL-PAC-learnable.
This completes the proof of the forward direction of \Cref{thm:ltl-non-pac}.

\subsection{Proof Sketch of \Cref{thm:ltl-non-pac}:  Reverse Direction}
\label{sec:ltlnonpacreverseproofoutline}

This section gives a proof sketch to the reverse direction of \Cref{thm:ltl-non-pac}. 
We give a complete proof in \Cref{sec:ltlnonpacreverseproof}.

We prove the reverse direction of \Cref{thm:ltl-non-pac} by reducing the problem of learning a policy for any \ltlclasst{finitary} formula to the problem of learning a policy for a finite-horizon cumulative rewards objective. 
We conclude the reverse direction of the theorem by invoking a known PAC \reinforcementlearning algorithm on the later problem.
\begingroup
\newcommand{\litem}[1]{\item {\bfseries #1.}}
\begin{itemize}[leftmargin=*, wide=0pt]
\litem{Reduction to Infinite-horizon Cumulative Rewards}
First, given an LTL formula in \ltlclass{finitary} and an environment MDP, we will construct an {\em augmented MDP with rewards} similar to \citet{ltlf-rl,rewardmachine1}.
We reduce the problem of finding the optimal non-Markovian policy for satisfying the formula in the original MDP to the problem of finding the optimal Markovian policy that maximizes the infinite-horizon (undiscounted) cumulative rewards in this augmented MDP.
\litem{Reduction to Finite-horizon Cumulative Rewards} Next, we reduce the infinite-horizon cumulative rewards to a finite-horizon cumulative rewards, using the fact that the formula is \ltlclasst{finitary}.
\litem{Sample Complexity Upper Bound} Lastly, \citet{ipoc19} have derived an upper bound on the sample complexity for a \reinforcementlearning algorithm for finite-horizon MDPs. 
We thus specialize this known upper bound to our problem setup of the augmented MDP and conclude that any \ltlclasst{finitary} formula is PAC-learnable.
\end{itemize}
\endgroup

\subsection{Consequence of the Core Theorem}

\Cref{thm:ltl-non-pac} implies that: 
For any non-\ltlclasst{finitary} LTL objective, given any arbitrarily large finite sample of transitions, the learned policy need not perform near-optimally.
This implication is unacceptable in applications that require strong guarantees of the overall system's behavior.

\section{Empirical Justifications}
\label{sec:empirical}

\DeclareDocumentCommand{\rewardscheme}{m}{%
\IfEqCase{#1}{%
    {multi-discount}{Multi-discount}%
    {zeta-reach}{Zeta-reach}%
    {zeta-acc}{Zeta-acc}%
    {zeta-discount}{Zeta-discount}%
    {reward-on-acc}{Reward-on-acc}%
}[\PackageError{rewardscheme}{Undefined option to \rewardscheme: #1}{}]%
}

\DeclareDocumentCommand{\rlalgo}{m}{%
\IfEqCase{#1}{%
    {Q}{Q-learning}%
    {DQ}{Double Q-learning}%
    {SL}{SARSA$\left(\lambda\right)$}%
}[\PackageError{rlalgo}{Undefined option to \rlalgo: #1}{}]%
}

This section empirically demonstrates our main result, the forward direction of \Cref{thm:ltl-non-pac}.

Previous work has introduced various \reinforcementlearning algorithms for LTL objectives \citep{dorsa,omegaregularrl19,hasanbeig2019reinforcement,bozkurt2020control}.
We ask the research question:
\begingroup\em Do the sample complexities of these algorithms depend on the transition probabilities of the environment? \endgroup
To answer the question, we 
evaluate various algorithms and empirically measure the sample sizes for them to obtain near-optimal policies with high probability.

\subsection{Methodology}

We consider various recent \reinforcementlearning algorithms for LTL objectives \citep{dorsa,omegaregularrl19,bozkurt2020control}.
We consider two pairs of LTL formulas and environment MDPs (LTL-MDP pair).
The first pair is the formula $\eventually h$ and the counterexample MDP as shown in \Cref{fig:counterexample-mdp-simple}.
The second pair is adapted from a case study in \citet{dorsa}.
We focus on the first pair in this section and defer the complete evaluation to \Cref{sec:more_empirical}.

We run the considered algorithms on each chosen LTL-MDP pair with a range of values for the parameter $p$ and let the algorithms perform $N$ environment samples.
For each algorithm and each pair of values of $p$ and $N$, we fix $\epsilon = 0.1$ and repeatedly run the algorithm to obtain a Monte Carlo estimation of the LTL-PAC probability (left side of \Cref{eq:pac-ltl-epsilon-delta-inequality}) for that setting of $p$, $N$ and $\epsilon$.
We repeat each setting until the estimated standard deviation of the estimated probability is within $0.01$. 
In the end, for each algorithm and LTL-MDP pair we obtain $5 \times 21 = 105$ LTL-PAC probabilities and their estimated standard deviations.

For the first LTL-MDP pair, we vary $p$ by a geometric progression from \begingroup\small$10^{-1}$ \endgroup  to \begingroup\small$10^{-3}$ \endgroup in $5$ steps.
We vary \small$N$ \normalsize by a geometric progression from \small$10^1$ \normalsize to \small$10^5$ \normalsize in \small$21$ \normalsize steps.
For the second LTL-MDP pair, we vary $p$ by a geometric progression from $0.9$ to $0.6$ in $5$ steps.
We vary $N$ by a geometric progression from $3540$ to $9\times10^4$ in $21$ steps. 
If an algorithm does not converge to the desired LTL-PAC probability within $9\times10^4$ steps, we rerun the experiment with an extended range of $N$ from $3540$ to $1.5\times10^5$. 

\begin{figure}[t]
\centering
\includegraphics[width=0.55\linewidth]{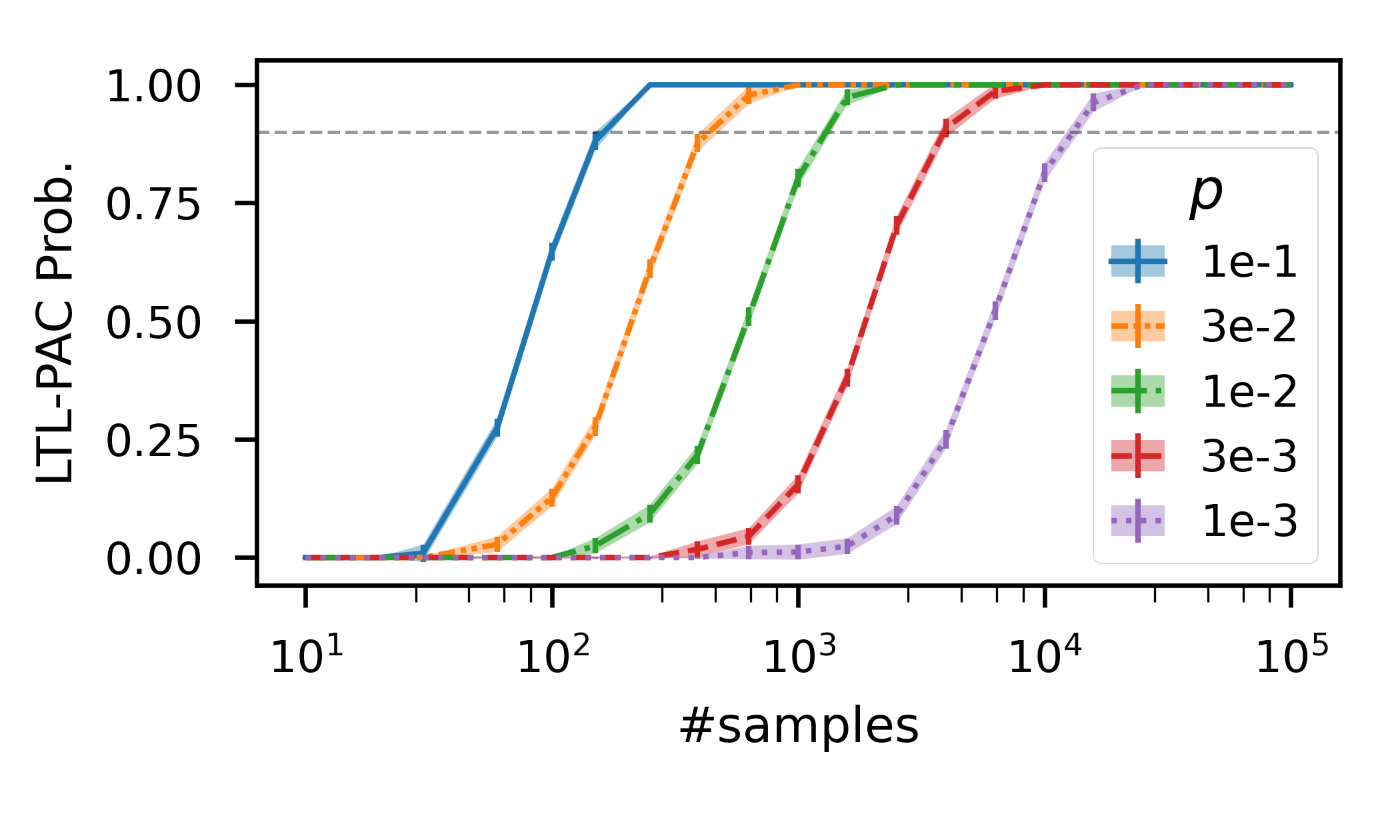}
\includegraphics[width=0.35\linewidth]{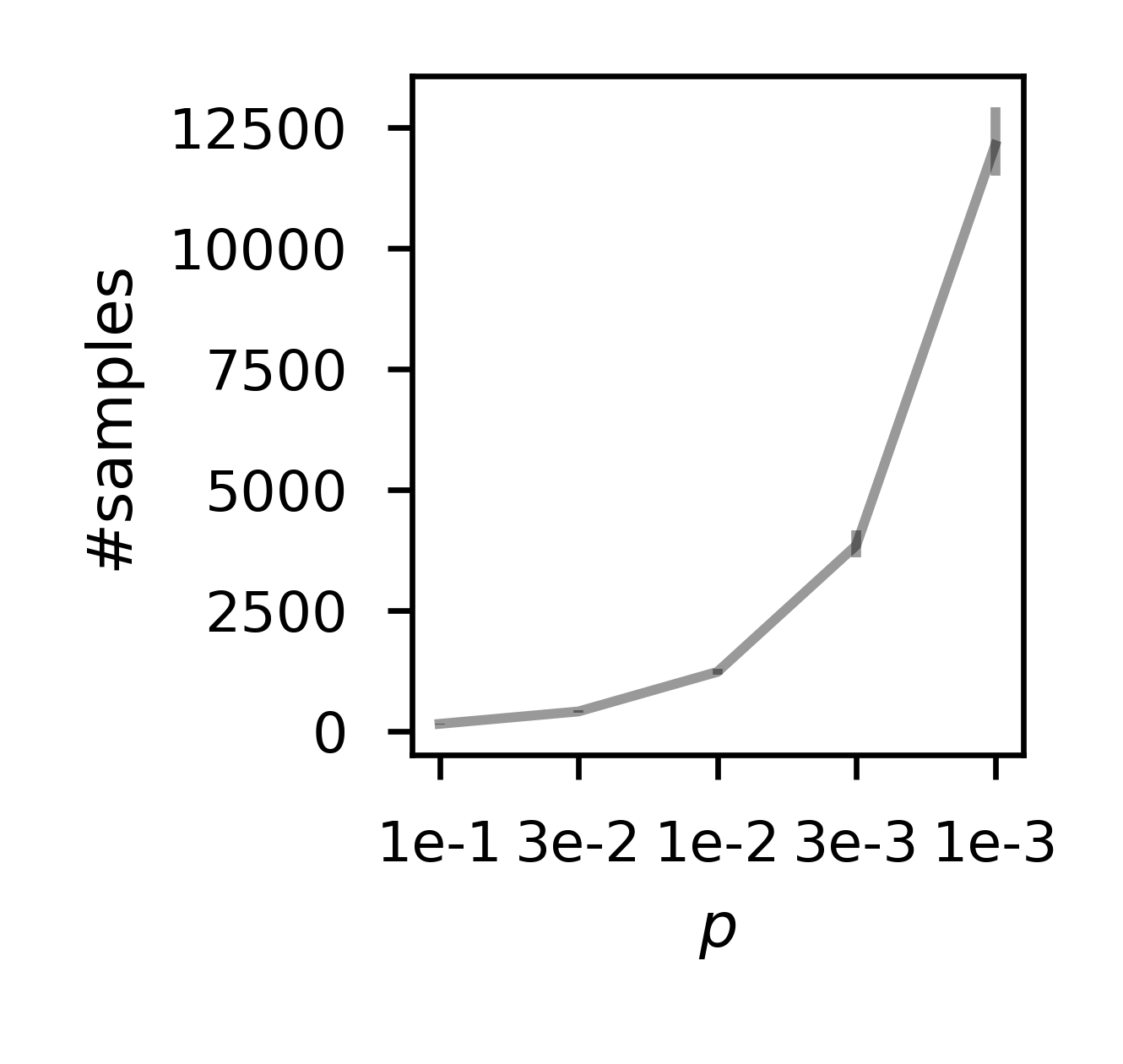}
\caption{Left: LTL-PAC probabilities vs.\ number of samples, varying parameters $p$. Right: number of samples needed to reach 0.9 LTL-PAC probability vs.\ parameter $p$.}
\label{fig:bozkurt_pac_probs_plot}
\end{figure}

\subsection{Results}

\Cref{fig:bozkurt_pac_probs_plot} presents the results for the algorithm in \citet{bozkurt2020control} with the setting of \rewardscheme{multi-discount}, \rlalgo{Q}, and the first LTL-MDP pair.
On the left, we plot the LTL-PAC probabilities vs.\ the number of samples $N$, one curve for each $p$.
On the right, we plot the intersections of the curves in the left plot with a horizontal cutoff of $0.9$.

As we see from the left plot of \Cref{fig:bozkurt_pac_probs_plot}, for each $p$, the curve starts at $0$ and grows to $1$ in a sigmoidal shape as the number of samples increases.
However, as $p$ decreases, the MDP becomes harder: As shown on the right plot of \Cref{fig:bozkurt_pac_probs_plot}, the number of samples required to reach the particular LTL-PAC probability of $0.9$ grows exponentially.
Results for other algorithms, environments and LTL formulas are similar and lead to the same conclusion.

\subsection{Conclusion} 
Since the transition probabilities ($p$ in this case) are unknown in practice, one can't know which curve in the left plot a given environment will follow.
Therefore, given any finite number of samples, these \reinforcementlearning algorithms cannot provide guarantees on the LTL-PAC probability of the learned policy.
This result supports \Cref{thm:ltl-non-pac}.

\section{Directions Forward}
\label{sec:directions_forward}

We have established the intractability of reinforcement learning for infinite\Hyphdash horizon LTL objectives. 
Specifically, for any infinite\Hyphdash horizon LTL objective, the learned policy need not perform near-optimally given any finite number of environment interactions.
This intractability is undesirable in applications that require strong guarantees, such as traffic control, robotics, and autonomous vehicles~\citep{collisionavoidance2010,rlroboticssurvey,safeautonomousvehciles18}.

Going forward, we categorize approaches that either focus on tractable objectives or weaken the guarantees required by an LTL-PAC algorithm.
We obtain the first category from the reverse direction of \Cref{thm:ltl-non-pac}, and each of the other categories by relaxing a specific requirement that \Cref{thm:ltl-non-pac} places on an algorithm.
Further, we classify previous approaches into these categories.

\subsection{Use a \texorpdfstring{\ltlclasst{Finitary}}{Finitary} Objective}

Researchers have introduced specification languages that express \ltlclasst{finitary} properties and have applied reinforcement learning to objectives expressed in these languages~\citep{smcbltl,osbert}.
One value proposition of these approaches is that they provide succinct specifications because \ltlclasst{finitary} properties written in LTL directly are verbose.
For example, the \ltlclasst{finitary} property ``$a$ holds for $100$ steps'' is equivalent to an LTL formula with a conjunction of $100$ terms:
\begingroup\small$
a \land \ltlnext a \land \dots \land (\underbrace{\ltlnext\dots\ltlnext}_{99 \text{ times}} a).  
$\endgroup

For these succinct specification languages, by the reduction of these languages to \ltlclasst{finitary} properties and the reverse direction of \Cref{thm:ltl-non-pac}, there exist reinforcement-learning algorithms that give LTL-PAC guarantees.

\subsection{Best-effort Guarantee}
The definition of LTL-PAC (\Cref{def:rl-ltl-pac-algo}) requires a \reinforcementlearning algorithm to learn a policy with satisfaction probability within $\epsilon$ of optimal, for all $\epsilon\,{>}\,0$.
However, it is possible to relax this quantification over $\epsilon$ so that an algorithm only returns a policy with the best-available $\epsilon$ it finds.

For example, \citet{pacmdpsmcconfidenceinterval} introduced a \reinforcementlearning algorithm for objectives in the \ltlclass{guarantee} class.
Using a specified time budget, the algorithm returns a policy and an $\epsilon$. 
Notably, it is possible for the returned $\epsilon$ to be $1$,
a vacuous bound on performance.

\subsection{Know More About the Environment}
The definition of LTL\Hyphdash PAC  (\Cref{def:rl-ltl-pac-algo}) requires a \reinforcementlearning algorithm to provide a guarantee for all environments.
However, on occasion, one can have prior information on the transition probabilities of the MDP at hand. 

For example, \citet{ufukpacmdpltl} introduced a \reinforcementlearning algorithm with a PAC-MDP guarantee that depends on the time horizon until the MDP reaches a steady state.
Given an MDP, this time horizon is generally unknown; however, if one has knowledge of this time horizon \emph{a priori}, it constrains the set of MDPs and yields an LTL-PAC guarantee dependent on this time horizon.

As another example, \citet{smcpmin} introduced a \reinforcementlearning algorithm that provides an LTL-PAC guarantee provided a declaration of the minimum transition probability of the MDP. This constraint, again, bounds the space of considered MDPs.

\subsection{Use an LTL-like Objective}

\Cref{thm:ltl-non-pac} only considers LTL objectives.
However, one opportunity for obtaining a PAC guarantee is to change the problem---use a specification language that is LTL-like, defining similar temporal operators but also giving those operators a different, less demanding, semantics.

\newcommand{\hyperparam}{\ensuremath{\boldmath{\lambda}}}

\subsubsection{LTL-in-the-limit Objectives}

One line of work \citep{dorsa,omegaregularrl19,hasanbeig2019reinforcement,bozkurt2020control} uses LTL formulas as the objective, but also introduces one or more hyper-parameters $\hyperparam$ to relax the formula's semantics.
The \reinforcementlearning algorithms in these works learn a policy for the environment MDP given fixed values of the hyper-parameters.
Moreover, as hyper-parameter values approach a limit point, the learned policy becomes optimal for the hyper-parameter-free LTL formula.\footnote{\citet{omegaregularrl19} and \citet{bozkurt2020control} showed that there exists a critical setting of the parameters $\lambda^*$ that produces the optimal policy. However, $\lambda^*$ depends on the transition probabilities of the MDP and is therefore consistent with our findings.}
The relationship between these relaxed semantics and the original LTL semantics is analogous to the relationship between discounted and average-reward infinite-horizon MDPs. 
Since discounted MDPs are PAC-MDP-learnable \citep{pacmodelfreerl}, we conjecture that these relaxed LTL objectives (at any fixed hyper-parameter setting) are PAC-learnable.

\subsubsection{General LTL-like Objectives}

Prior approaches \citep{gltl17,truncatedltl-rl,ltlf-rl,rewardmachine1} also use general LTL-like specifications that do not or are not known to converge to LTL in a limit.
For example, \citet{rewardmachine1} introduced the reward-machine objective that uses a finite state automaton to specify a reward function.
As another example, \citet{gltl17} introduced {\em geometric LTL}. Geometric LTL attaches a geometrically distributed horizon to each temporal operator. 
The learnability of these general LTL-like objectives is a potential future research direction.

\section{Conclusion}
\label{sec:conclusion}

In this work, we have formally proved that infinite-horizon LTL objectives in reinforcement learning cannot be learned in unrestricted environments.
By inspecting the core result, we have identified various possible directions forward for future research.
Our work resolves the apparent lack of a formal treatment of this fundamental limitation of infinite-horizon objectives, helps increase the community's awareness of this problem, and will help organize the community's efforts in reinforcement learning with LTL objectives. 

\bibliographystyle{named}
\newcommand{\arxiv}[1]{arXiv preprint: #1}

\ifseparateappendix
\else
\clearpage
\begin{appendices}
\crefalias{section}{appendix}
\crefalias{subsection}{appendix}
\section{Background on Linear Temporal Logic}
\label{sec:ltl-background-full}

\subsection{LTL Semantics}

In this section, we give the formal semantics of LTL. Recall that the satisfaction relation $w \vDash \phi$ denotes that the infinite word $w$ satisfies $\phi$.
\Cref{eq:ltl_semantics} defines this relation.
\begin{equation}
\label{eq:ltl_semantics}
\newcommand{\txtiff}{&\text{ iff }&}
\begin{aligned}
w & \vDash a  \txtiff a \in \windex{w}{0} \quad a \in \Pi \\
w & \vDash \neg \phi \txtiff w \nvDash \phi \\
w & \vDash \phi \land \psi \txtiff w \vDash \phi \land w \vDash \psi \\
w & \vDash \ltlnext \phi \txtiff \wsuffix{w}{1} \vDash \phi \\
w & \vDash \phi \until \psi \txtiff \exists j \ge 0 \ldotp \big( \wsuffix{w}{j} \vDash \psi \,\,\land  \\ & & & \;\; \forall k \ge 0 \ldotp k < j \implies \wsuffix{w}{k} \vDash \phi \big).
\end{aligned}
\end{equation}
The rest of the operators are defined as syntactic sugar in terms of operators in \Cref{eq:ltl_semantics} as: $\phi \lor \psi \equiv \neg (\neg \phi \land \neg \psi)$, $\eventually \phi \equiv \text{True} \until \phi$, $\always \phi \equiv \neg \eventually \neg \phi$.

We describe the semantics of each operator in words below. 
\begin{itemize}
\item $\ltlnext \phi$: the sub-formula $\phi$ is true in the next time step.
\item $\always \phi$: the sub-formula $\phi$ is always true in all future time steps.
\item $\eventually \phi$: the sub-formula $\phi$ is eventually true in some future time steps.
\item $\phi \until \psi$: the sub-formula $\phi$ is always true until the sub-formula $\psi$ eventually becomes true, after which $\phi$ is allowed to become false.
\end{itemize}

\subsection{Complete Description of the LTL Hierarchy}
\label{sec:ltl-hierarchy-full}

In this section, we describe the key properties of all classes in the LTL hierarchy (see \Cref{fig:ltl-hierarchy}).

\begin{itemize}[leftmargin=*]
\item $\phi\inltlclass{finitary}$ iff there exists a horizon $H$ such that infinite-length words sharing the same prefix of length $H$ are either all accepted or all rejected by $\phi$. E.g., $a \land \ltlnext a$ (i.e., $a$ is true for two steps) is in \ltlclass{finitary}.
\item $\phi\inltlclass{guarantee}$ iff there exists a language of finite words $L$ (i.e., a Boolean function on finite-length words) such that $w \vDash \phi$ if $L$ accepts a prefix of $w$. Informally, a formula in \ltlclass{guarantee} asserts that something eventually happens. E.g., $\eventually a$ (i.e., eventually $a$ is true) is in \ltlclass{guarantee}.
\item $\phi\inltlclass{safety}$ iff there exists a language of finite words $L$ such that $w \vDash \phi$ if $L$ accepts all prefixes of $w$. Informally, a formula in \ltlclass{safety} asserts that something always happens. E.g., $\always a$ (i.e., $a$ is always true) is in \ltlclass{safety}.
\item $\phi \inltlclass{obligation}$ iff $\phi$ is a logical combination of formulas in \ltlclass{guarantee} and \ltlclass{safety}. E.g., $\eventually a \land \always b$ is in \ltlclass{obligation}.
\item $\phi \inltlclass{persistence}$ iff there exists a language of finite words $L$ such that $w \vDash \phi$ if $L$ accepts all but finitely many prefixes of $w$. Informally, a formula in \ltlclass{persistence} asserts that something happens finitely often. E.g., $\eventually\always a$ (i.e., $a$ is not true for only finitely many times, and eventually $a$ stays true forever) is in \ltlclass{persistence}.
\item $\phi \inltlclass{recurrence}$ iff there exists a language of finite words $L$ such that $w \vDash \phi$ if $L$ accepts infinitely many prefixes of $w$. Informally, a formula in \ltlclass{recurrence} asserts that something happens infinitely often. E.g., $\always\eventually a$ (i.e., $a$ is true for infinitely many times) is in \ltlclass{recurrence}.
\item $\phi \inltlclass{reactivity}$ iff $\phi$ is a logical combination of formulas in \ltlclass{recurrence} and \ltlclass{persistence}. E.g., $\always\eventually a \land \eventually\always b$ is in \ltlclass{reactivity}.
\end{itemize}

\section{Proof of \Cref{thm:minofm1m2pacprob}}
\label{sec:proofofminofm1m2pacprob}

To the end of proving \Cref{thm:minofm1m2pacprob}, we first observe the following proposition:
\begin{proposition}
\label{thm:m1m2valuessumtoone}
For any non-Markovian policy $\pi$, the satisfaction probabilities for $\mathcal{M}_1$ and $\mathcal{M}_2$ sum to one: 
\begin{equation*}
\mdpvaluefunc{\pi}{\mathcal{M}_1}{\xi^{h_0}} + \mdpvaluefunc{\pi}{\mathcal{M}_2}{\xi^{h_0}} = 1.
\end{equation*}
\end{proposition}
We give a proof of \Cref{thm:m1m2valuessumtoone} in \Cref{sec:proof-m1m2valuessumtoone}.

\begin{proof}[Proof of \Cref{thm:minofm1m2pacprob}]
Note that the optimal satisfaction probabilities in both $\mathcal{M}_1$ and $\mathcal{M}_2$ is one, that is,  $\mdpvaluefunc{\pi^*}{\mathcal{M}_i}{\xi^{h_0}} = 1$. This is because the policy that always chooses $a_i$ in $\mathcal{M}_i$ guarantees visitation to the state $h_0$.
Therefore, a corollary of \Cref{thm:m1m2valuessumtoone} is that for any policy $\pi$ and any $\epsilon < \frac{1}{2}$, the policy $\pi$ can only be $\epsilon$-optimal in one of $\mathcal{M}_1$ and $\mathcal{M}_2$. Specifically, we have:
\begin{equation}
\small
\label{eq:m1m2valuessumtoonecorollary}
\indicator{\left(\mdpvaluefunc{\pi}{\mathcal{M}_1}{\xi^{h_0}} \ge \mdpvaluefunc{\pi^*}{\mathcal{M}_1}{\xi^{h_0}} - \epsilon\right)} + 
\indicator{\left(\mdpvaluefunc{\pi}{\mathcal{M}_2}{\xi^{h_0}} \ge \mdpvaluefunc{\pi^*}{\mathcal{M}_2}{\xi^{h_0}} - \epsilon\right)} \le 1 .
\end{equation}

Consider a specific sequence of transitions $T$ of length $N$ sampled from either $\mathcal{M}_1$ or $\mathcal{M}_2$.
If $n(T) = 0$, the probability of observing $T$ in $\mathcal{M}_1$ equals to the probability of observing $T$ in $\mathcal{M}_2$, that is:
\begin{equation*}
\begin{split}
& \prob{\rv{T}\sim\mdpsamplingproduct{\mathcal{M}_1}{\mathcal{A}^\text{S}}{N}}{\rv{T} = T | n(T) = 0} \\
& = \prob{\rv{T}\sim\mdpsamplingproduct{\mathcal{M}_2}{\mathcal{A}^\text{S}}{N}}{\rv{T} = T | n(T) = 0} .
\end{split}
\end{equation*}
This is because the only differences between $\mathcal{M}_1$ and $\mathcal{M}_2$ are the transitions $g_l \rightarrow h_0$ and $g_l \rightarrow q_0$, and
conditioning on $n(T) = 0$ effectively eliminates these differences.

Therefore, we can write the sum of $\zeta_1$ and $\zeta_2$ as:
\begin{multline*}
\zeta_1 + \zeta_2 = 
\sum_{\forall T} \prob{\rv{T}}{\rv{T} = T | n(T) = 0} \times \\
\left\{
\indicator{\left(\mdpvaluefunc{\mathcal{A}^\text{L}\left(T\right)}{\mathcal{M}_1}{\xi^{h_0}} \ge \mdpvaluefunc{\pi^*}{\mathcal{M}_1}{\xi^{h_0}} - \epsilon\right)} +
\indicator{\left(\mdpvaluefunc{\mathcal{A}^\text{L}\left(T\right)}{\mathcal{M}_2}{\xi^{h_0}} \ge \mdpvaluefunc{\pi^*}{\mathcal{M}_2}{\xi^{h_0}} - \epsilon\right)}
\right\}.
\end{multline*}
Plugging in \Cref{eq:m1m2valuessumtoonecorollary}, we get
\begin{equation*}
\zeta_1 + \zeta_2 \le 1 .
\end{equation*}
This then implies that $\min(\zeta_1, \zeta_2) \le \frac{1}{2}$.
\end{proof}

\subsection{Proof of \Cref{thm:m1m2valuessumtoone}}
\label{sec:proof-m1m2valuessumtoone}
\begin{proof}
We first focus on $\mathcal{M}_1$.
Consider an infinite run $\tau = (s_0, a_0, s_1, a_1, \dots)$ of the policy $\pi$ on $\mathcal{M}_1$.
Let $\wprefix{\tau}{i}$ denote the partial history up to state $s_i$;
let $w$ denote all the states $(s_0, s_1, \dots)$ in $\tau$.
Let $E_i$ denote the event that the visited state at step $i$ is either $h_0$ or $q_0$: $\windex{w}{i} \in \left\{h_0, q_0\right\}$.
We have:
\begin{align*}
\mdpvaluefunc{\pi}{\mathcal{M}_1}{\xi^{h_0}}
& = \prob{}{\wmap{\mathcal{L}}{w} \vDash \eventually h_0} \\
& = \sum_{i=1}^\infty \prob{}{\wmap{\mathcal{L}}{w} \vDash \eventually h_0 | E_i} \cdot \prob{}{E_i}
\end{align*}
Given that $E_i$ happens, the previous state $\windex{w}{i-1}$ must be $g_l$.
Then, the probability of satisfying the formula given the event $E_i$ is the probability of the learned policy choosing $a_1$ from the state $g_l$ after observing the partial history $\windex{\tau}{i-1}$:
\begin{equation}
\label{eq:M1valueinfsum}
\mdpvaluefunc{\pi}{\mathcal{M}_1}{\xi^{h_0}} 
= \sum_{i=1}^\infty \prob{}{\pi\left(\wprefix{\tau}{i-1}\right) = a_1 | E_i} \cdot \prob{}{E_i}.
\end{equation}
Symmetrically for $\mathcal{M}_2$ we then have:
\begin{equation}
\label{eq:M2valueinfsum}
\mdpvaluefunc{\pi}{\mathcal{M}_2}{\xi^{h_0}} 
= \sum_{i=1}^\infty \prob{}{\pi\left(\wprefix{\tau}{i-1}\right) = a_2 | E_i} \cdot \prob{}{E_i}.
\end{equation}

For any policy and any given partial history, the probability of choosing $a_1$ or $a_2$ must sum to $1$, that is:
\begin{equation*}
\small
\prob{}{\pi\left(\wprefix{\tau}{i-1}\right) = a_1 | E_i} + \prob{}{\pi\left(\wprefix{\tau}{i-1}\right) = a_2 | E_i} = 1
\end{equation*}
Therefore, we may add \Cref{eq:M1valueinfsum} and \Cref{eq:M2valueinfsum} to get:
\begin{equation*}
\mdpvaluefunc{\pi}{\mathcal{M}_1}{\xi^{h_0}} +
\mdpvaluefunc{\pi}{\mathcal{M}_2}{\xi^{h_0}} 
= \sum_{i=1}^\infty 1 \cdot \prob{}{E_i}.
\end{equation*}

Finally, since the event $E_i$ must happen for some finitary $i$ with probability $1$ (i.e., either $h_0$ or $q_0$ must be reached eventually with probability $1$), 
the expression on the right of the equation sums to $1$.
\end{proof}

\section{Complete Proof of \Cref{thm:ltlpaclowerboundFh0}}
\label{sec:ltlpaclowerboundFh0fullproof}

\begin{proof}
First, consider $\mathcal{M}_1$. 
We will derive a lower bound for $N$.
We begin by asserting that the inequality of \Cref{eq:pac-ltl-epsilon-delta-inequality} holds true for a reinforcement-learning algorithm $\mathcal{A} = (\mathcal{A}^\text{S}, \mathcal{A}^\text{L})$. That is:
\begin{equation*}
\prob*{\rv{T}}{
\mdpvaluefunc{\mathcal{A}^\text{L}(T)}{\mathcal{M}_1}{\xi^{h_0}} \ge \mdpvaluefunc{\pi^*}{\mathcal{M}_1}{\xi^{h_0}} - \epsilon}
\ge 1 - \delta.
\end{equation*}
We expand the left-hand side by conditioning on $n(T) = 0$:
\begin{small}
\begin{multline*}
\prob*{\rv{T}}{
\mdpvaluefunc{\mathcal{A}^\text{L}(T)}{\mathcal{M}_1}{\xi^{h_0}} \ge \mdpvaluefunc{\pi^*}{\mathcal{M}_1}{\xi^{h_0}} - \epsilon | n\left(T\right) = 0} \prob*{\rv{T}}{n\left(T\right) = 0} + \\
\prob*{\rv{T}}{
\mdpvaluefunc{\mathcal{A}^\text{L}(T)}{\mathcal{M}_1}{\xi^{h_0}} \ge \mdpvaluefunc{\pi^*}{\mathcal{M}_1}{\xi^{h_0}} - \epsilon | n\left(T\right) > 0} \left(1 - \prob*{\rv{T}}{n\left(T\right) = 0}\right) \\
\ge 1 - \delta.
\end{multline*}
\end{small}
Since $\prob*{\rv{T}}{
\mdpvaluefunc{\mathcal{A}^\text{L}(T)}{\mathcal{M}_1}{\xi^{h_0}} \ge 1 - \epsilon | n\left(T\right) > 0} \le  1$, we may relax the inequality to:
\begin{equation*}
(1 - \zeta_1) \prob*{\rv{T}}{n\left(T\right) = 0} \le \delta ,
\end{equation*}
where we also plugged in our definition of $\zeta_i$ (see \Cref{thm:minofm1m2pacprob}).
This relaxation optimistically assumes that a reinforcement-learning algorithm can learn an $\epsilon$-optimal policy by observing at least one transition to $h_0$ or $q_0$.

Since there are at most $N$ transitions initiating from the state $g_l$, and $n(T) = 0$ only occurs when all those transitions end up in $g_k$, we have $\prob*{\rv{T}}{n\left(T\right) = 0} \ge (1-p)^N$.
Incorporating this into the inequality we have:
\begin{equation*}
\left(1 - \zeta_1 \right) (1-p)^N
\le \delta .
\end{equation*}
Symmetrically, for $\mathcal{M}_2$ we have:
\begin{equation*}
\left(1 - \zeta_2 \right) (1-p)^N
\le \delta .
\end{equation*}
Since both inequalities need to hold, we may combine them by using $\min$ to choose the tighter inequality:
\begin{equation*}
\left(1 - \min\left(\zeta_1, \zeta_2\right) \right) (1-p)^N
\le \delta.
\end{equation*}
By applying \Cref{thm:minofm1m2pacprob}, we remove the inequality's dependence on $\zeta_i$, and get the desired lower bound of
\begin{equation*}
N \ge \frac{ \log (2\delta) }{\log \left(1-p\right)} ,
\end{equation*}
which completes the proof of \Cref{thm:ltlpaclowerboundFh0}.
\end{proof}

\section{Uncommittable Words for \texorpdfstring{non-\ltlclass{finitary}}{non-Finitary} Formulas}
\label{sec:proofofltlnonfinitaryprop}

In this section, we prove the following lemma:
\begin{lemma}
\label{thm:ltlnonguaranteenonsafetyprop}
Any LTL formula $\phi\notinltlclass{guarantee}$ has an uncommittable accepting word.
Any LTL formula $\phi\notinltlclass{safety}$ has an uncommittable rejecting word.
\end{lemma}

\subsection{Preliminaries}
We will review some preliminaries to prepare for our proof of \Cref{thm:ltlnonguaranteenonsafetyprop}.

We will use an automaton-based argument for our proof of \Cref{thm:ltlnonguaranteenonsafetyprop}.
To that end, we recall the following definitions for automatons.

\paragraph{Deterministic Finite Automaton}
A deterministic finite automaton (DFA) is a tuple $(S, A, P, s_0, s_\text{acc})$, where $(S, A, P, s_0)$ is a deterministic MDP (i.e., $P$ degenerated to a deterministic function $\functiontype{S,A}{S}$), and $s_\text{acc} \in S$ is an accepting state.

\paragraph{Deterministic Rabin Automaton} 
A deterministic Rabin automaton (DRA) is a tuple $(S, \Pi, T, s_0, \textit{Acc})$, where 
\begin{itemize}[leftmargin=*]
\item $S$ is a finite set of states.
\item $\Pi$ is the atomic propositions of $\phi$.
\item $T$ is a transition function 
$\functiontype{S, 2^\Pi}{S}$.
\item $s_0 \in S$ is an initial state.
\item \textit{Acc} is a set of pairs of subsets of states $(B_i, G_i) \in (2^S)^2$.
\end{itemize}

An infinite-length word $w$ over the atomic propositions $\Pi$ is accepted by the DRA, if there exists a run of the DRA such that there exists a $(B_i, G_i) \in \textit{Acc}$ where the run visits all states in $B_i$ finitely many times and visits some state(s) in $G_i$ infinitely many times.

For any LTL formula $\phi$, one can always construct an equivalent DRA that accepts the same set of infinite-length words as $\phi$ \citep{safraltl2rabinaut}.

\subsection{Proof of \Cref{thm:ltlnonguaranteenonsafetyprop} for \texorpdfstring{$\phi\notinltlclass{guarantee}$}{phi not in Guarantee}}

Given an LTL formula $\phi$, we first construct its equivalent DRA $\mathcal{R} = (S, \Pi, T, s_0, \textit{Acc})$ \citep{safraltl2rabinaut}. 

A {\em path} in a DRA is a sequence of transitions in the DRA.
A {\em cycle} in a DRA is a path that starts from some state and then returns to that state. 
A cycle is {\em accepting} if there exists a pair $(B_i, G_i) \in \textit{Acc}$, such that the cycle does not visit states in $B_i$ and visits some states in $G_i$.
Conversely, a cycle is {\em rejecting} if it is not accepting.
With the above definitions and to the end of proving \Cref{thm:ltlnonguaranteenonsafetyprop} for the case of $\phi\notinltlclass{guarantee}$, we state and prove the following lemma.
\begin{lemma}
\label{thm:guaranteemusthaveacceptingcycletorejectingcycle}
For any LTL formula $\phi\notinltlclass{guarantee}$ and its equivalent DRA $\mathcal{R}$, it must be the case that $\mathcal{R}$ contains an accepting cycle that is reachable from the initial state and there exists a path from a state in the accepting cycle to a rejecting cycle.
\end{lemma}
\begin{proof}
Suppose, for the sake of contradiction, there does not exist an accepting cycle that \begin{enumerate*}
\item is reachable from the initial state and
\item has a path to a rejecting cycle in the equivalent DRA $\mathcal{R}$.
\end{enumerate*}
Then there are two scenarios:
\begin{itemize}[leftmargin=*]
\item $\mathcal{R}$ does not have any accepting cycle that is reachable from the initial state.
\item All accepting cycles reachable from the initial state do not have any path to any rejecting cycle.
\end{itemize}

For the first scenario, $\mathcal{R}$ must not accept any infinite-length word. Therefore $\phi$ must be equal to $\ltlfalse$ (i.e., the constant falsum).
However, $\ltlfalse$ is in the \ltlclass{finitary} LTL class, which is a subset of \ltlclass{guarantee}, so this is a contradiction.

For the second scenario, consider any infinite-length word $w$.
Consider the induced infinite path $\mathcal{P} = (s_0, \windex{w}{0}, s_1, \windex{w}{1}, \dots)$ by $w$ on the DRA starting from the initial state $s_0$.

If $\phi$ accepts the word $w$, the path $\mathcal{P}$ must reach some state in some accepting cycle.

Conversely, if $\phi$ rejects the word $w$, the path must not visit any state in any accepting cycle. This is because otherwise the path can no longer visit a rejecting cycle once it visits the accepting cycle, thereby causing the word to be accepted.

Therefore, $\phi$ accepts the word $w$ as soon as the path $\mathcal{P}$ visits some state in some accepting cycle. 
This degenerates the DRA to a DFA, where the accepting states are all the states in the accepting cycles of the DRA.
Then, an infinite-length word $w$ is accepted by $\phi$ if and only if there exists a prefix of $w$ that is accepted by the DFA.

By the property of the \ltlclass{guarantee} class (see \Cref{sec:ltl-background-full}), for $\phi\inltlclass{guarantee}$, there exists a language of finite-length words, $L$, such that $w \vDash \phi$ if $L$ accepts a prefix of $w$ \citep{tlhierarchy}.
Since a DFA recognizes a regular language, the formula must be in the \ltlclass{guarantee} LTL class. This is also a contradiction.

Therefore, there must exist an accepting cycle that is reachable from the initial state and has a path to a rejecting cycle in the equivalent DRA.
This completes the proof of \Cref{thm:guaranteemusthaveacceptingcycletorejectingcycle}
\end{proof}

We are now ready to give a construction of $w_a$, $w_b$, $w_c$ and $w_d$ that directly proves \Cref{thm:ltlnonguaranteenonsafetyprop} for $\phi\notinltlclass{guarantee}$.
Consider the equivalent DRA $\mathcal{R}$ of the LTL formula. 
By \Cref{thm:guaranteemusthaveacceptingcycletorejectingcycle}, $\mathcal{R}$ must contain an accepting cycle that is reachable from the initial state and has a path to a rejecting cycle. We can thus define the following paths and cycles:
\begin{itemize}[leftmargin=*]
\item Let $\mathcal{P}_a$ be the path from the initial state to the accepting cycle.
\item Let $\mathcal{P}_b$ be the accepting cycle.
\item Let $\mathcal{P}_c$ be a path from the last state in the accepting cycle to the rejecting cycle. 
\item Let $\mathcal{P}_d$ be the rejecting cycle.
\end{itemize}
For a path $\mathcal{P} = (s_i, \windex{w}{i}, \dots s_j, \windex{w}{j}, s_{j+1})$, let  $w(\mathcal{P})$ denote the finite-length word consisting only of the characters in between every other state (i.e., each character is a tuple of truth values of the atomic propositions): $w(\mathcal{P}) = \windex{w}{i}\dots\windex{w}{j}$.
Consider the assignments of $w_a = w(\mathcal{P}_a)$,
$w_b = w(\mathcal{P}_b)$,
$w_c = w(\mathcal{P}_c)$ and
$w_d = w(\mathcal{P}_d)$.
Notice that:
\begin{itemize}[leftmargin=*]
\item The formula $\phi$ accepts the infinite-length word $\wrational{w_a}{w_b}$ because $P^b$ is an accepting cycle.
\item The formula $\phi$ rejects all infinite-length words 
$\wrational{w_a; \wrepeat{w_b}{i}; w_c}{w_d}$ for all $i\in\naturals$ because $P^d$ is a rejecting cycle.
\end{itemize}
By \Cref{def:uncommittable-word}, the infinite-length word $\wrational{w_a}{w_b}$ is an uncommittable accepting word. 
This construction proves  \Cref{thm:ltlnonguaranteenonsafetyprop} for $\phi\notinltlclass{guarantee}$.

\subsection{Proof of \Cref{thm:ltlnonguaranteenonsafetyprop} for \texorpdfstring{$\phi\notinltlclass{safety}$}{phi not in safety}}

The proof for $\phi\notinltlclass{safety}$ is symmetrical to $\phi\notinltlclass{guarantee}$.
For completeness, we give the proof below.

Given an LTL formula $\phi$, we again first construct its equivalent DRA $\mathcal{R} = (S, \Pi, T, s_0, \textit{Acc})$. 

To the end of proving \Cref{thm:ltlnonguaranteenonsafetyprop} for the case of $\phi\notinltlclass{safety}$, we state and prove the following lemma.
\begin{lemma}
\label{thm:safetymusthaverejectingcycletoacceptingcycle}
For any LTL formula $\phi \notinltlclass{safety}$ and its equivalent DRA $\mathcal{R}$, it must be the case that $\mathcal{R}$ contains a rejecting cycle that is reachable from the initial state and has a path from any state in the rejecting cycle to an accepting cycle.
\end{lemma}
\begin{proof}
Suppose, for the sake of contradiction, there does not exist a rejecting cycle that \begin{enumerate*}
\item is reachable from the initial state and
\item has a path to an accepting cycle in the equivalent DRA $\mathcal{R}$.
\end{enumerate*}
Then there are two scenarios:
\begin{itemize}[leftmargin=*]
\item $\mathcal{R}$ does not have any rejecting cycle that is reachable from the initial state.
\item All rejecting cycles reachable from the initial state do not have any path to any accepting cycle.
\end{itemize}

For the first scenario, $\mathcal{R}$ must not reject any infinite-length word. Therefore $\phi$ must be equal to $\ltltrue$ (i.e., the constant truth).
However, $\ltltrue$ is in the \ltlclass{finitary} LTL class, which is a subset of \ltlclass{safety}, so this is a contradiction.

For the second scenario, consider any infinite-length word $w$.
Consider the induced infinite path $\mathcal{P} = (s_0, \windex{w}{0}, s_1, \windex{w}{1}, \dots)$ by $w$ on the DRA starting from the initial state $s_0$.

If $\phi$ rejects the word $w$, the path $\mathcal{P}$ must reach some state in some rejecting cycle.

Conversely, if $\phi$ accepts $w$, the path must not visit any state in any rejecting cycle. This is because otherwise the path can no longer visit a accepting cycle once it visits the rejecting cycle, thereby causing the word to be rejected.

Therefore, $\phi$ rejects the word $w$ as soon as the path $\mathcal{P}$ visits some state in some rejecting cycle. 
We can again construct a DFA based on the DRA by letting the accepting states be all the states except those in a rejecting cycle of the DRA.
By this construction, $\phi$ accepts an infinite-length word $w$ if and only if the DFA accepts all finite-length prefixes of $w$.

By the property of the \ltlclass{safety} class (see \Cref{sec:ltl-background-full}), for $\phi\inltlclass{safety}$, there exists a language of finite-length words, $L$, such that $w \vDash \phi$ if $L$ accepts all prefixes of $w$ \citep{tlhierarchy}.
Since a DFA recognizes a regular language, the formula must be in the \ltlclass{safety} LTL class. This is also a contradiction.

Therefore, there must exist a rejecting cycle that is reachable from the initial state and has a path to an accepting cycle in the equivalent DRA.
This completes the proof of \Cref{thm:safetymusthaverejectingcycletoacceptingcycle}
\end{proof}

We are now ready to give a construction of $w_a$, $w_b$, $w_c$ and $w_d$ that directly proves \Cref{thm:ltlnonguaranteenonsafetyprop} for $\phi\notinltlclass{safety}$.
Consider the equivalent DRA $\mathcal{R}$ of the LTL formula. 
By \Cref{thm:safetymusthaverejectingcycletoacceptingcycle}, $\mathcal{R}$ must contain a rejecting cycle that is reachable from the initial state and has a path to an accepting cycle. We can thus define the following paths and cycles:
\begin{itemize}[leftmargin=*]
\item Let $\mathcal{P}_a$ be the path from the initial state to the rejecting cycle.
\item Let $\mathcal{P}_b$ be the rejecting cycle.
\item Let $\mathcal{P}_c$ be a path from the last state in the rejecting cycle to the accepting cycle. 
\item Let $\mathcal{P}_d$ be the accepting cycle.
\end{itemize}
Consider the assignments of $w_a = w(\mathcal{P}_a)$,
$w_b = w(\mathcal{P}_b)$,
$w_c = w(\mathcal{P}_c)$ and
$w_d = w(\mathcal{P}_d)$.
Notice that:
\begin{itemize}[leftmargin=*]
\item The formula $\phi$ rejects the infinite-length word $\wrational{w_a}{w_b}$ because $P^b$ is a rejecting cycle.
\item The formula $\phi$ accepts all infinite-length words 
$\wrational{w_a; \wrepeat{w_b}{i}; w_c}{w_d}$ for all $i\in\naturals$ because $P^d$ is an accepting cycle.
\end{itemize}
By \Cref{def:uncommittable-word}, the infinite-length word $\wrational{w_a}{w_b}$ is an uncommittable rejecting word. 
This construction proves  \Cref{thm:ltlnonguaranteenonsafetyprop} for $\phi\notinltlclass{safety}$.

\section{Proof of \Cref{thm:ltl-non-pac}: the Reverse Direction}
\label{sec:ltlnonpacreverseproof}
In this section, we give a proof to the reverse direction of \Cref{thm:ltl-non-pac}. 

\subsection{Proof}
\paragraph{Reduction to Infinite-horizon Cumulative Rewards}
Given an LTL formula $\phi$ in \ltlclass{finitary} with atomic propositions $\Pi$, one can compile $\phi$ into a DFA $\bar{\mathcal{M}}= (\bar{S}, 2^\Pi, \bar{P}, \bar{s_0}, \bar{s_\text{acc}})$ that decides the satisfaction of $\phi$ \citep{emcsafety}. 
In particular, for a given sample path $w$ of DTMC induced by a policy and the environment MDP, $\mathcal{L}(w)$ satisfies $\phi$ if and only if the DFA, upon consuming  $\mathcal{L}(w)$, eventually reaches the accept state $s_\text{acc}$.
Here, the DFA has a size (in the worst case) doubly exponential to the size of the formula: $|\bar{S}| = \BigO(\doubleexp(\abs{\phi}))$ \citep{mcsafety}.

We then use the following product construction to form an augmented MDP with rewards $\hat{\mathcal{M}} = (\hat{S}, \hat{A}, \hat{P}, \hat{s_0}, \hat{R})$.
Specifically, 
\begin{itemize}[leftmargin=*]
\item The states and actions are: $\hat{S} = S \times \bar{S}$ and $\hat{A} = A$.
\item The transitions follow the transitions in the environment MDP and the DFA simultaneously, where the action input of the DFA come from labeling the current state of the environment MDP. In particular, the transitions in the augmented MDP follows the equations: $\hat{P}((s, \bar{s}), a, (s', \bar{s}')) = P(s, a, s')$ and $\bar{s}' = \bar{P}(\bar{s}, \mathcal{L}(s))$.
\item The reward function assigns a reward of one to any transition from a non-accepting state that reaches $s_\text{acc}$ in the DFA, and zero otherwise:
$R((s, \hat{s}), a, (s', \hat{s}')) = \indicator{s \neq s_\text{acc} \land \hat{s}' = s_\text{acc}}$.
\end{itemize}

By construction, each run of the augmented MDP gives a reward of $1$ iff the run satisfies the finitary formula $\phi$.
The expected (undiscounted) infinite-horizon cumulative rewards thus equals the satisfaction probability of the formula.
Therefore, maximizing the infinite-horizon cumulative rewards in the augmented MDP is equivalent to maximizing the satisfaction probability of $\phi$ in the environment MDP.

\paragraph{Reduction to Finite-horizon Cumulative Rewards}
By the property of LTL hierarchy \citep{tlhierarchy}, for any LTL formula $\phi$ in \ltlclass{finitary} and an infinite-length word $w$, one can decide if $\phi$ accepts $w$ by inspecting a length-$H$ prefix of $w$. 
Here, $H$ is a constant that is computable from $\phi$. 
In particular, $H$ equals the longest distance from the start state to a terminal state in our constructed DFA. \footnote{Note that since $\phi$ is \ltlclasst{finitary}, the DFA does not have any cycles except at the terminal states \citep{spot}.}
Thus, since the product construction above does not assign any reward after the horizon $H$, the infinite-horizon cumulative rewards is further equivalent to the finite-horizon (of length $H$) cumulative rewards.
Therefore, finding the optimal policy for $\phi$ is equivalent to finding the optimal policy that maximizes the cumulative rewards for a finite horizon $H$ in the augmented MDP.

\paragraph{Sample Complexity Upper Bound}
Lastly, \citet{ipoc19} gave a \reinforcementlearning algorithm for finite-horizon cumulative rewards called ORLC (optimistic reinforcement learning with certificates).
The ORLC algorithm is sample efficiently PAC \footnote{The ORLC algorithm provides a guarantee called individual policy certificates (IPOC) bound. \citet{ipoc19} showed that this guarantee implies our PAC definition, which they called a supervised-learning style PAC bound.
Therefore, the ORLC algorithm is a PAC \reinforcementlearning algorithm for finite-horizon cumulative rewards by our definition.} and has a sample complexity of  $\tilde{\BigO}\left(\left(\frac{\abs{S}\abs{A}H^3}{\epsilon^2} +\frac{\abs{S}^2\abs{A}H^4}{\epsilon}  \right) \log \frac{1}{\delta}\right)$. \footnote{The notation $\tilde{\BigO}(.)$ is the same as $\BigO(.)$, but ignores any $\log$-terms.
The bound given by \citet{ipoc19} is $\tilde{\BigO}\left(\frac{\abs{S}^2\abs{A|}H^2}{\epsilon^2} \log \frac{1}{\delta}\right)$. 
 It is a upper bound on the number of episodes. 
 To make the bound consistent with our lower bound, which is a bound on the number of sampled transitions, we multiply it by an additional $H$ term.}
Incorporating the fact that 
the augmented MDP has $|\hat{S}| = \abs{S} \cdot \BigO(\doubleexp(\abs{\phi}))$ number of states, we obtain a sample complexity upper bound of $\tilde{\BigO}\left(\left(\frac{\abs{S}\doubleexp{\abs{\phi}}\abs{A}H^3}{\epsilon^2} +\frac{\abs{S}^2(\doubleexp{\abs{\phi}})^2\abs{A}H^4}{\epsilon}  \right) \log \frac{1}{\delta}\right)$ for the overall \reinforcementlearning algorithm.

Since for any \ltlclasst{finitary} formula, we have constructed a \reinforcementlearning algorithm that is sample efficiently LTL-PAC for all environment MDPs, this concludes our proof that any \ltlclasst{finitary} formula is LTL-PAC-learnable.

\section{Empirical Justifications}
\label{sec:empirical-appendix}

\DeclareDocumentCommand{\rewardscheme}{m}{%
\IfEqCase{#1}{%
    {multi-discount}{Multi-discount}%
    {zeta-reach}{Zeta-reach}%
    {zeta-acc}{Zeta-acc}%
    {zeta-discount}{Zeta-discount}%
    {reward-on-acc}{Reward-on-acc}%
}[\PackageError{rewardscheme}{Undefined option to \rewardscheme: #1}{}]%
}

\DeclareDocumentCommand{\rlalgo}{m}{%
\IfEqCase{#1}{%
    {Q}{Q-learning}%
    {DQ}{Double Q-learning}%
    {SL}{SARSA$\left(\lambda\right)$}%
}[\PackageError{rlalgo}{Undefined option to \rlalgo: #1}{}]%
}

This section empirically demonstrates our main result, the forward direction of \Cref{thm:ltl-non-pac}.

Previous work has introduced various \reinforcementlearning algorithms for LTL objectives \citep{dorsa,omegaregularrl19,hasanbeig2019reinforcement,bozkurt2020control}.
We therefore ask the research question:
{\em Do the sample complexities for \reinforcementlearning algorithms for LTL objectives introduced by previous work depend on the transition probabilities of the environment?}

To answer the above question, we 
consider a set of \reinforcementlearning algorithms for LTL objectives and empirically measure the sample size for each algorithm to obtain a near-optimal policy with high probability.

\subsection{Methodology}

\paragraph{Reinforcement-learning algorithms}
We consider a set of recent \reinforcementlearning algorithms for LTL objectives \citep{omegaregularrl19,bozkurt2020control}, about which we give more details in \Cref{sec:more_empirical}.
These algorithms are all implemented in the Mungojerrie toolbox \citep{hahn2021mungojerrie}.

\paragraph{Objectives and Environment MDPs}

\begin{figure}
\centering
\newcommand{\counterexamplemdpkind}{m1}
\begin{tikzpicture}[node distance=1.5cm,on grid,auto]
\tikzset{%
    in place/.style={
      auto=false,
      fill=white,
      inner sep=2pt,
    },
}

\newcommand{\statenodesize}{1.2em}

\tikzset{%
    every state/.style={
        fill={rgb:black,1;white,10},
        initial text=, 
        inner sep=0, 
        minimum size=\statenodesize,
        font={\small},
    },
    tl/.style={font=\small} %
}

\tikzset{%
    accstate/.style={
        state,
        pattern={Lines[angle=35,distance={4.5pt/sqrt(2)}]},
        pattern color=gray
    }
}

\tikzset{%
    rejstate/.style={state, fill=white}
}

\ifdefstring{\counterexamplemdpkind}{m12}{
\tikzset{%
    a1/.style={
      swap,
      auto=left,
      to path={ let \p1=(\tikztostart),\p2=(\tikztotarget), \n1={atan2(\y2-\y1,\x2-\x1)},\n2={\n1+180} in ($(\tikztostart.{\n1})!0.5mm!90:(\tikztotarget.{\n2})$) -- ($(\tikztotarget.{\n2})!0.5mm!270:(\tikztostart.{\n1})$) \tikztonodes},
    },
    a2/.style={
      auto=right,
      to path={ let \p1=(\tikztostart),\p2=(\tikztotarget), \n1={atan2(\y2-\y1,\x2-\x1)},\n2={\n1+180} in ($(\tikztostart.{\n1})!0.5mm!270:(\tikztotarget.{\n2})$) -- ($(\tikztotarget.{\n2})!0.5mm!90:(\tikztostart.{\n1})$) \tikztonodes}
    },
}
}

\ifdefstring{\counterexamplemdpkind}{m1}{%
\tikzset{
    a2/.style={draw=none, fill opacity=0},
    a1/.style={swap, auto=left}
}
}

\ifdefstring{\counterexamplemdpkind}{m2}{%
\tikzset{
    a2/.style={auto=right},
    a1/.style={draw=none, fill opacity=0}
}
}

\providecommand{\arrowheadscale}{0.8}
\tikzset{%
    m1/.style={
        -{Stealth[scale=\arrowheadscale]}
    },
    m2/.style={
        -{Triangle[open, scale=\arrowheadscale]}
    },
}

\ifdefstring{\counterexamplemdpkind}{m12}%
{
\tikzset{
    m12/.style={
        -{Stealth[scale=\arrowheadscale] Triangle[open, scale=\arrowheadscale]}
    }
}
}

\ifdefstring{\counterexamplemdpkind}{m1}%
{\tikzset{
    m12/.style={m1}
}}

\ifdefstring{\counterexamplemdpkind}{m2}%
{\tikzset{
    m12/.style={m2}
}}

\newcommand{\largedots}{$\textbf{\ldots}$}

    \tikzstyle{every state}=[
        fill={rgb:black,1;white,10},
        initial text=, 
        inner sep=0, 
        minimum size=1.2em,
        font={\small},
    ]
    
    \tikzstyle{tl/.style}=[font=\small] %

    \tikzstyle{every loop}=[
    style={
        looseness=1, 
        min distance=3mm,
        font={\small}
        }
    ]
    
    \node[state, initial left] (g) {$g$};
    \node[accstate] (h) [above right =1em and 6em of g]  {$h$};
    \node[rejstate] (q) [below right =1em and 6em of g] {$q$};

    \ifdefstring{\counterexamplemdpkind}{m1}{%
    \path[->]
    (g) edge [pos=0.9] node [tl] {$a_1,p$} (h)
    (g) edge [swap, pos=0.9] node [tl] {$a_2,p$} (q)
    ;
    }
    
    \ifdefstring{\counterexamplemdpkind}{m2}{%
    \path[->]
    (g) edge [pos=0.9] node [tl] {$a_2,p$} (h)
    (g) edge [swap, pos=0.9] node [tl] {$a_1,p$} (q)
    ;
    }
    
    \path[->] 
    (g) edge  [loop above]  node [tl] {$a_1,1 - p$}  ()
    (g) edge  [loop below]  node [tl] {$a_2,1 - p$}  ()
    
    (h) edge  [loop right]  node {}  ()
    (q) edge  [loop right]  node {}  ()
    
    ;

\end{tikzpicture}
\caption{One of the two environment MDPs used in the experiments.}
\label{fig:empirical-counterexample-mdp}
\end{figure}

\begin{figure}
\centering
\begin{tikzpicture}[scale=0.8, transform shape]
\let\inputhidefromlatexpp\input

\inputhidefromlatexpp{gridworld.tikz}

\foreach \x in {0,...,4}{
    \foreach \y in {0,...,4}
        \node[box] at (\x,\y){};
}

\node[trap] at (0, 4) {};
\node[trap] at (2, 0) {};
\node[trap] at (2, 3) {};
\node[trap] at (3, 3) {};

\pic [local bounding box=P] at (0, 0) {bot};
\pic [local bounding box=G] at (4, 4) {goal};

\end{tikzpicture}
\caption{Gridworld environment MDP from \protect\citet{dorsa} with a customized transition dynamics.
The agent starts from the lower left corner.
At each time step, the agent can choose to move up, down, left or right.
The white cells are sticky: the agent moves towards the intended direction with probability $1 - p$ (or stays stationary if it will move off the grid), and stays stationary with probability $p$.
The red cells are trapping: once the agent steps on a red cell, it stays there forever.
}
\label{fig:empirical-counterexample-mdp-sadigh14}
\end{figure}

We consider two pairs of LTL formulas and environment MDPs (LTL-MDP pair).
The first pair is the formula $\eventually h$ and the counterexample MDP constructed according to  \Cref{sec:counterexample-mdps}, shown in \Cref{fig:empirical-counterexample-mdp}.
The second pair is the formula $\eventually \textit{goal}$ and a gridworld environment MDP from the case study by \citet{dorsa} with a customized transition dynamics, shown in \Cref{fig:empirical-counterexample-mdp-sadigh14}.

\paragraph{Experiment Methodology}
We ran the considered algorithms on each chosen LTL-MDP pair with a range of values for the parameter $p$ and let the algorithms perform $N$ environment samples.

For each algorithm and each pair of values of $p$ and $N$, we fix $\epsilon = 0.1$ and repeatedly run the algorithm to obtain a Monte Carlo estimation of the LTL-PAC probability (left side of \Cref{eq:pac-ltl-epsilon-delta-inequality}) for that setting of $p$, $N$ and $\epsilon$.
We repeat each setting until the estimated standard deviation of the estimated probability is within $0.01$. 
In the end, for each algorithm and LTL-MDP pair we obtain $5 \times 21 = 105$ LTL-PAC probabilities and their estimated standard deviations.

For the first LTL-MDP pair, we vary $p$ by a geometric progression from $10^{-1}$ to $10^{-3}$ in $5$ steps:
\begingroup\small$p(i) = 10^{-\frac{i+1}{2}}$\endgroup for \begingroup\small$1 \le i \le 5$\endgroup.
We vary $N$ by a geometric progression from $10^1$ to $10^5$ in $21$ steps.

For the second LTL-MDP pair, we vary $p$ by a geometric progression from $0.9$ to $0.6$ in $5$ steps:
\begingroup\small$p(i) = 0.9\times 0.903^{-i}$ \endgroup for \begingroup\small$1 \le i \le 5$\endgroup.
We vary $N$ by a geometric progression from $3540$ to $9\times10^4$ in $21$ steps; if an algorithm does not converge to the desired LTL-PAC probability within $9\times10^4$ steps, we rerun the experiment with an extended range of $N$ from $3540$ to $1.5\times10^5$. 

\subsection{Results}

\Cref{fig:bozkurt_pac_probs_plot} presents the results for the algorithm in \citet{bozkurt2020control} with the setting of \rewardscheme{multi-discount}, \rlalgo{Q}, and the first LTL-MDP pair.

On the left, we plot the LTL-PAC probabilities vs.\ the number of samples $N$, one curve for each $p$.
On the right, we plot the intersections of the curves in the left plot with a horizontal cutoff of $0.9$.

As we see from the left plot of \Cref{fig:bozkurt_pac_probs_plot}, for each $p$, the curve starts at $0$ and grows to $1$ in a sigmoidal shape as the number of samples increases.
However, as $p$ decreases, the MDP becomes harder: As shown on the right plot of \Cref{fig:bozkurt_pac_probs_plot}, the number of samples required to reach the particular LTL-PAC probability of $0.9$ grows exponentially.

\Cref{fig:complete-empirical-results} presents the complete results for all settings for the first LTL-MDP pair, and \Cref{fig:complete-empirical-results-sadigh14} present the complete results for all settings for the second LTL-MDP pair.
These results are similar and lead to the same analysis as above.

\newcommand{\includeempiricalplot}[3]{%
\begin{subfigure}[t]{0.48\textwidth}%
\centering%
\includegraphics[width=0.6\linewidth]{plots/#1-#2-#3.png}%
\includegraphics[width=0.38\linewidth]{plots/#1-#2-#3-intercept-0_1.png}%
\caption{\rewardscheme{#1} with \rlalgo{#2}}%
\end{subfigure}%
}

\begingroup
\begin{figure*}[t]
\centering
\foreach \ltlconstruction in {reward-on-acc,multi-discount,zeta-reach}{%
    \includeempiricalplot{\ltlconstruction}{Q}{rl_ltl_pac_paper}%
    \includeempiricalplot{\ltlconstruction}{DQ}{rl_ltl_pac_paper}\\%
    \includeempiricalplot{\ltlconstruction}{SL}{rl_ltl_pac_paper}\\%
}
\caption{Empirical results of the first LTL-MDP pair (continued on next page)}
\end{figure*}
\begin{figure*}[t]\ContinuedFloat
\centering
\foreach \ltlconstruction in {zeta-acc,zeta-discount}{%
    \includeempiricalplot{\ltlconstruction}{Q}{rl_ltl_pac_paper}%
    \includeempiricalplot{\ltlconstruction}{DQ}{rl_ltl_pac_paper}\\%
    \includeempiricalplot{\ltlconstruction}{SL}{rl_ltl_pac_paper}\\%
}
\caption{Empirical results of the first LTL-MDP pair (continued). Each sub-figure corresponds to a specific reward-scheme and learning-algorithm pair. For each sub-figure, on the left: LTL-PAC probabilities vs. number of samples, for varying parameters $p$; on the right: number of samples needed to reach 0.9 LTL-PAC probability vs.  parameters $p$.
}
\label{fig:complete-empirical-results}
\end{figure*}
\endgroup

\begingroup
\begin{figure*}[t]
\centering
\foreach \ltlconstruction in {reward-on-acc,multi-discount,zeta-reach}{%
    \includeempiricalplot{\ltlconstruction}{Q}{gridworld_sadigh14}%
    \includeempiricalplot{\ltlconstruction}{DQ}{gridworld_sadigh14}\\%
    \includeempiricalplot{\ltlconstruction}{SL}{gridworld_sadigh14}\\%
}
\caption{Empirical results of the second LTL-MDP pair (continued on next page)}
\end{figure*}
\begin{figure*}[t]\ContinuedFloat
\centering
\foreach \ltlconstruction in {zeta-acc,zeta-discount}{%
    \includeempiricalplot{\ltlconstruction}{Q}{gridworld_sadigh14}%
    \includeempiricalplot{\ltlconstruction}{DQ}{gridworld_sadigh14}\\%
    \includeempiricalplot{\ltlconstruction}{SL}{gridworld_sadigh14}\\%
}
\caption{Empirical results of the second LTL-MDP pair (continued). Each sub-figure corresponds to a specific reward-scheme and learning-algorithm pair. For each sub-figure, on the left: LTL-PAC probabilities vs. number of samples, for varying parameters $p$; on the right: number of samples needed to reach 0.9 LTL-PAC probability vs.  parameters $p$.
}
\label{fig:complete-empirical-results-sadigh14}
\end{figure*}
\endgroup

\subsection{Conclusion} 
Since the transition probabilities ($p$ in this case) are unknown in practice, one can't know which curve in the left plot a given environment will follow.
Therefore, given any finite number of samples, these reinforcement-algorithms cannot provide guarantees on the LTL-PAC probability of the learned policy.
This result supports \Cref{thm:ltl-non-pac}.

\section{Empirical Experiment Details}
\label{sec:more_empirical}

\begin{table*}[t]
\centering
\ra{1.3}
\begin{tabular}{@{}llll@{}}
\toprule
Reinforcement-learning-algorithm & Learning Rate & Exploration &  Reset Episode Every Steps \\
\midrule
\rlalgo{Q}  & $\frac{10}{10+t}$ & Linear decay from $1.0$ to $10^{-1}$ & 10\\
\rlalgo{DQ} & $\frac{30}{30+t}$ & Linear decay from $1.0$ to $10^{-1}$  & 10 \\
\rlalgo{SL} & $\frac{10}{10+t}$ & Linear decay from $1.0$ to $10^{-3}$ & 10\\
\bottomrule
\end{tabular}
\caption{Non-default hyper-parameters used for each learning-algorithm}
\label{tab:reward-schemes-parameters}
\end{table*}

\subsection{Details of Methodology}

\paragraph{Chosen Algorithms}

We consider a set of recent reinforcement-learning algorithms for LTL objectives implemented in the Mongujerrie toolbox \citep{hahn2021mungojerrie}.

A common pattern in these previous works \citep{dorsa,omegaregularrl19,bozkurt2020control} is that each work constructs a product MDP with rewards (i.e., an MDP with a reward function on that MDP) from an LTL formula and an environment MDP.
Moreover, these works permit the use of any standard reinforcement-learning algorithm, such as Q-learning or SARSA($\lambda$), to solve the constructed product MDP with the specified reward function to obtain the product MDP's optimal policy.
Finally, these works cast the optimal policy back to a non-Markovian policy of the environment MDP, which becomes the algorithm's output policy.

Following \citet{hahn2021mungojerrie}, we call each specific construction of a product MDP with rewards as a {\em reward-scheme}.
We then characterize each reinforcement-learning algorithm as a ``reward-scheme'' and ``learning-algorithm'' pair.
We consider a total of five reward-schemes \footnote{We use the same naming of each reward-scheme as in the Mungojerrie toolbox  \citep{hahn2021mungojerrie}}: \rewardscheme{reward-on-acc} \citep{dorsa},  \rewardscheme{multi-discount} \citep{bozkurt2020control}, \rewardscheme{zeta-reach} \citep{omegaregularrl19}, \rewardscheme{zeta-acc}  \cite{omegaregularrl20}, and \rewardscheme{zeta-discount} \cite{omegaregularrl20}.
We consider a total of three learning-algorithms: \rlalgo{Q} \citep{qlearning92}, \rlalgo{DQ} \citep{Hasselt2010DoubleQ}, and \rlalgo{SL} \citep{sutton98sarsa}.
This yields a total of $15$ reinforcement-learning algorithms for LTL objectives.

\paragraph{Algorithm Parameters}

Each reinforcement-learning algorithm in Mungojerrie accepts a set of hyper-parameters.
For the majority of the hyper-parameters, we use their default values as in Mungojerrie Version 1.0 \citep{hahn2021mungojerrie}.
We present the hyper-parameters that differ from the default values in \Cref{tab:reward-schemes-parameters}.
For each of the hyper-parameters in \Cref{tab:reward-schemes-parameters}, we use a different value from the default value because it allow all the algorithms that we consider to converge within $10^{5}$ steps (i.e., the maximum learning steps that we allow). 
For SARSA$\left(\lambda\right)$, we use $\lambda = 0$.

\paragraph{Software and Platform}

We use a custom version of Mungojerrie. 
Our modifications are:
\begin{itemize}[leftmargin=*]
\item Modification to allow parallel Monte Carlo estimation of the LTL-PAC probability.
\item Modification to allow the reinforcement-learning algorithms to have a non-linear learning rate decay. 
In particular, we use a learning rate of $\frac{k}{k+t}$ at every learning step $t$, where $k$ is a hyper-parameter (see \Cref{tab:reward-schemes-parameters} for the value of $k$ for each algorithm). This modification is necessary for ensuring Q-learning's convergence \citep{qlearning92}.
\end{itemize}

We run all experiments on a machine with 2.9 GHz 6-Core CPU and 32 GB of RAM.

\section{Classification of Prior Works}

In \Cref{sec:directions_forward}, we discussed several categories of approaches that either focus on tractable objectives or weaken the guarantees required by an LTL-PAC algorithm.
We classified various previous approaches into these categories.
In this section, we explain the rationale for each classification.

\subsection{Use a \texorpdfstring{\ltlclasst{Finitary}}{Finitary} Objective}
\citet{smcbltl} introduced a variant of LTL called {\em Bounded LTL} and used Bounded LTL objective for reinforcement learning.
Every Bounded LTL formula is decidable by a bounded length prefix of the input word.
Moreover, each Bounded LTL formula is equivalent to an \ltlclasst{finitary} LTL formula.
Therefore, we classified this approach as using a \ltlclasst{Finitary} objective.

\citet{osbert} introduced a task specification language over finite-length words. 
Further, their definition of an MDP contains an additional finite time horizon $H$.
Each sample path of the MDP is then a length-$H$ finite-length word and is evaluated by a formula of the task specification language.\footnote{There are two possible interpretations of the finite horizon in \citet{osbert}.
The first interpretation is that the environment MDP inherently terminates and produces length-$H$ sample paths.
The second interpretation is that the finite horizon $H$ is part of the specification given by a user of their approach.
We used the second interpretation to classify their approach.
The difference between the two interpretations is only conceptual --- if the environment inherently terminates with a fixed finite horizon $H$, it would be equivalent to imposing a finite horizon $H$ in the task specification.
}
Each formula of the task specification language with a fixed finite horizon $H$ is equivalent to an LTL formula in the \ltlclasst{Finitary} class.
Therefore, we classified this approach as using a \ltlclasst{Finitary} objective.

\subsection{Best-Effort Guarantee}

We classified \citet{pacmdpsmcconfidenceinterval}
to this category and explained our rationale of this classification in \Cref{sec:directions_forward}.

\subsection{Know More About the Environment}

We classified \citet{ufukpacmdpltl,smcpmin} to this category and explained our rationale of this classification in \Cref{sec:directions_forward}.

\subsection{Use an LTL-like Objective}

\subsubsection{LTL-in-the-limit Objectives}

We classified \citet{dorsa,omegaregularrl19,hasanbeig2019reinforcement,bozkurt2020control} as using LTL-like objectives, and explained our rationale of these classifications in \Cref{sec:directions_forward}.

\subsubsection{General LTL-like Objectives}

\citet{gltl17} introduced a discounted variant of LTL called {\em Geometric LTL} (GLTL). A temporal operator in a GLTL formula expires within a time window whose length follows a geometric distribution. 
For example, a GLTL formula $\eventually_{0.1} \textit{goal}$ is satisfied if the sample path reaches the $\textit{goal}$ within a time horizon $H$, where $H$ follows $\text{Geometric}(0.1)$, the geometric distribution with the success parameter $0.1$.
Since GLTL's semantics is different from LTL's semantics, we classified this approach as using an LTL-like objective.

\citet{truncatedltl-rl} introduced a variant of LTL called {\em Truncated-LTL} (TLTL). 
A formula in TLTL, similar to a formula in Bounded LTL \citep{smcbltl}, is decidable by a bounded length prefix of the input word. 
Moreover, TLTL has a {\em qualitative semantics}, in addition to the standard Boolean semantics of LTL.
In particular, the qualitative semantics of a TLTL formula maps a sample path of the environment MDP to a real number that indicates the degree of satisfaction for the TLTL formula.
Therefore, we classified this approach as using an LTL-like objective.

\citet{ltlf-rl} introduced {\em Restraining Bolts}. A Restraining Bolts specification is a set of pairs $(\phi_i, r_i)$, where each $\phi_i$ is an LTLf/LDLf formula, and $r_i$ is a scalar reward. 
An LTLf formula is visuaully similar to an LTL formula; however, it is interpreted over finite-length words instead of infinite-length words.
LDLf is an extension of LTLf and is also interpreted over finite-length words. 
\footnote{LDLf is more expressive than LTLf \citep{ltlfldlf}. In particular, LTLf is equally expressive as the star-free subset of regular languages while LDLf is equally expressive as the full set of regular languages.}
Given an environment MDP, the approach checks each finite length prefix of a sample path of the MDP against each $\phi_i$, and if a prefix satisfies $\phi_i$, the approach gives the corresponding reward $r_i$ to the agent.
The objective in \citet{ltlf-rl} is to maximize the discounted cumulative rewards produced by the Restraining Bolts specification.
To the best of our knowledge, this objective is not equivalent to maximizing the satisfaction of an LTL formula.
Nonetheless, a Restraining Bolts specification can be seen as an LTL-like specification for its use of temporal operators.
Therefore, we classified this approach as using an LTL-like objective.

\citet{rewardmachine1} introduced {\em reward machine}.
A reward machine specification is a deterministic finite automaton equipped with a reward for each transition.
The objective in \citet{rewardmachine1} is to maximize the discounted cumulative rewards produced by the reward machine specification.
\citet{rewardmachine1} showed that LTL objectives formulas in the \ltlclasst{Guarantee} or \ltlclasst{Safety} class are reducible to reward machine objectives without discount factors.
However, since the approach maximizes discounted cumulative rewards in practice, it does not directly optimize for the LTL objectives in the \ltlclasst{Guarantee} or \ltlclasst{Safety} classes.\footnote{By their reduction, as the discount factor approach $1$ in the limit, the learned policy for the reward machine becomes the optimal policy for given \ltlclasst{guarantee} or \ltlclasst{safety} LTL objective. Therefore, \citet{rewardmachine1} can also be classified as using an LTL-in-the-limit objectives (for the subset of \ltlclasst{guarantee} and \ltlclasst{safety} objectives. Nonetheless, we classified this approach to the general LTL-like objectives category because reward machine objectives are more general than LTL objectives.}
Therefore, we classified this approach as using an LTL-like objective.

\section{Concurrent Work}
\label{sec:concurrent_work}

Concurrent to this work,
\citet{alur2021framework} developed a framework to study reductions between reinforcement-learning task specifications.
They looked at various task specifications, including cumulative discounted rewards, infinite-horizon average-rewards, reachability, safety, and LTL. 
They thoroughly review previous work concerning reinforcement learning for LTL objectives, which we also cite.
Moreover, \citet[Theorem 8]{alur2021framework} states a seemingly similar result as the forward direction of our \Cref{thm:ltl-non-pac}:
\begin{quote}
There does not exist a PAC-MDP algorithm for the class of safety specifications.
\end{quote}
Despite the parallels, we clarify one crucial difference and two nuances between our work and theirs.

Firstly and most importantly, their theorem is equivalent to
``there exists a safety specification that is not PAC-learnable.'', whereas our \Cref{thm:ltl-non-pac} works pointwise for each LTL formula, asserting ``all non-\ltlclasst{finitary} specifications are not PAC-learnable.''
The proof of their theorem gives one safety specification and shows that it is not PAC-learnable.\footnote{Their result is similar to what we showed in our \Cref{sec:samplecomplexityFh0}, where we consider the particular \ltlclasst{guarantee} formula $\eventually h_0$ and show that it is not PAC-learnable.}
On the other hand, the proof of the forward direction of our \Cref{thm:ltl-non-pac} constructs a counterexample for each non-\ltlclasst{finitary} formula.
This point is crucial since it allows us to precisely carve out the PAC-learnable subset, namely the \ltlclasst{finitary} formulas, from the LTL hierarchy. 
Secondly, their notion of sample complexity is slightly different from ours.
In particular, they formulated a reinforcement learning algorithm as an iterative algorithm. At each step, the iterative algorithm outputs a policy $\pi^n$.
Then, their notion of sample complexity is the total number of non-$\epsilon$-optimal policies produced during an infinitely long run of the learning algorithm:
$\left| \lbrace n \mid \mdpvaluefunc{\pi^n}{\mathcal{M}}{\xi} < \mdpvaluefunc{\pi^*}{\mathcal{M}}{\xi} - \epsilon \rbrace \right|.$ 
On the other hand, our notion of sample complexity is the number of samples required until the learning algorithm outputs $\epsilon$-optimal policies. 
However, this difference is orthogonal to the core issue caused by infinite-horizon LTL formulas.
In particular, we can adapt our theorem and proof to use their notion of sample complexity.

Thirdly, their definition of safety specification is equivalent to a strict subset of the \ltlclasst{safety} class in the LTL hierarchy that we consider.
In particular, their safety specification is equivalent to LTL formulas of the form $\always (a_1 \lor a_2 \lor \dots \lor a_n)$, where each $a_i \in \Pi$ is an atomic proposition, with $n = 0$ degenerating the specification to $\ltltrue$ (the constant true).

Lastly, they consider only \reinforcementlearning algorithms, whereas we consider the slightly more general \planningwithgenerativemodel algorithms.
We believe their theorem and proof can be modified to accommodate our more general algorithm definition.
\end{appendices}
\fi

\end{document}

% --- supplement: supplement.tex ---

%

\begin{appendices}
\crefalias{section}{appendix}
\crefalias{subsection}{appendix}
\section{Background on Linear Temporal Logic}
\label{sec:ltl-background-full}

\subsection{LTL Semantics}

In this section, we give the formal semantics of LTL. Recall that the satisfaction relation $w \vDash \phi$ denotes that the infinite word $w$ satisfies $\phi$.
\Cref{eq:ltl_semantics} defines this relation.
\begin{equation}
\label{eq:ltl_semantics}
\newcommand{\txtiff}{&\text{ iff }&}
\begin{aligned}
w & \vDash a  \txtiff a \in \windex{w}{0} \quad a \in \Pi \\
w & \vDash \neg \phi \txtiff w \nvDash \phi \\
w & \vDash \phi \land \psi \txtiff w \vDash \phi \land w \vDash \psi \\
w & \vDash \ltlnext \phi \txtiff \wsuffix{w}{1} \vDash \phi \\
w & \vDash \phi \until \psi \txtiff \exists j \ge 0 \ldotp \big( \wsuffix{w}{j} \vDash \psi \,\,\land  \\ & & & \;\; \forall k \ge 0 \ldotp k < j \implies \wsuffix{w}{k} \vDash \phi \big).
\end{aligned}
\end{equation}
The rest of the operators are defined as syntactic sugar in terms of operators in \Cref{eq:ltl_semantics} as: $\phi \lor \psi \equiv \neg (\neg \phi \land \neg \psi)$, $\eventually \phi \equiv \text{True} \until \phi$, $\always \phi \equiv \neg \eventually \neg \phi$.

We describe the semantics of each operator in words below. 
\begin{itemize}
\item $\ltlnext \phi$: the sub-formula $\phi$ is true in the next time step.
\item $\always \phi$: the sub-formula $\phi$ is always true in all future time steps.
\item $\eventually \phi$: the sub-formula $\phi$ is eventually true in some future time steps.
\item $\phi \until \psi$: the sub-formula $\phi$ is always true until the sub-formula $\psi$ eventually becomes true, after which $\phi$ is allowed to become false.
\end{itemize}

\subsection{Complete Description of the LTL Hierarchy}
\label{sec:ltl-hierarchy-full}

In this section, we describe the key properties of all classes in the LTL hierarchy (see \Cref{fig:ltl-hierarchy}).

\begin{itemize}[leftmargin=*]
\item $\phi\inltlclass{finitary}$ iff there exists a horizon $H$ such that infinite-length words sharing the same prefix of length $H$ are either all accepted or all rejected by $\phi$. E.g., $a \land \ltlnext a$ (i.e., $a$ is true for two steps) is in \ltlclass{finitary}.
%
\item $\phi\inltlclass{guarantee}$ iff there exists a language of finite words $L$ (i.e., a Boolean function on finite-length words) such that $w \vDash \phi$ if $L$ accepts a prefix of $w$. Informally, a formula in \ltlclass{guarantee} asserts that something eventually happens. E.g., $\eventually a$ (i.e., eventually $a$ is true) is in \ltlclass{guarantee}.
%
\item $\phi\inltlclass{safety}$ iff there exists a language of finite words $L$ such that $w \vDash \phi$ if $L$ accepts all prefixes of $w$. Informally, a formula in \ltlclass{safety} asserts that something always happens. E.g., $\always a$ (i.e., $a$ is always true) is in \ltlclass{safety}.
%
\item $\phi \inltlclass{obligation}$ iff $\phi$ is a logical combination of formulas in \ltlclass{guarantee} and \ltlclass{safety}. E.g., $\eventually a \land \always b$ is in \ltlclass{obligation}.
%
\item $\phi \inltlclass{persistence}$ iff there exists a language of finite words $L$ such that $w \vDash \phi$ if $L$ accepts all but finitely many prefixes of $w$. Informally, a formula in \ltlclass{persistence} asserts that something happens finitely often. E.g., $\eventually\always a$ (i.e., $a$ is not true for only finitely many times, and eventually $a$ stays true forever) is in \ltlclass{persistence}.
%
\item $\phi \inltlclass{recurrence}$ iff there exists a language of finite words $L$ such that $w \vDash \phi$ if $L$ accepts infinitely many prefixes of $w$. Informally, a formula in \ltlclass{recurrence} asserts that something happens infinitely often. E.g., $\always\eventually a$ (i.e., $a$ is true for infinitely many times) is in \ltlclass{recurrence}.
%
\item $\phi \inltlclass{reactivity}$ iff $\phi$ is a logical combination of formulas in \ltlclass{recurrence} and \ltlclass{persistence}. E.g., $\always\eventually a \land \eventually\always b$ is in \ltlclass{reactivity}.
%
\end{itemize}

\section{Proof of \Cref{thm:minofm1m2pacprob}}
\label{sec:proofofminofm1m2pacprob}

To the end of proving \Cref{thm:minofm1m2pacprob}, we first observe the following proposition:
\begin{proposition}
\label{thm:m1m2valuessumtoone}
For any non-Markovian policy $\pi$, the satisfaction probabilities for $\mathcal{M}_1$ and $\mathcal{M}_2$ sum to one: 
\begin{equation*}
\mdpvaluefunc{\pi}{\mathcal{M}_1}{\xi^{h_0}} + \mdpvaluefunc{\pi}{\mathcal{M}_2}{\xi^{h_0}} = 1.
\end{equation*}
\end{proposition}
We give a proof of \Cref{thm:m1m2valuessumtoone} in \Cref{sec:proof-m1m2valuessumtoone}.

\begin{proof}[Proof of \Cref{thm:minofm1m2pacprob}]
Note that the optimal satisfaction probabilities in both $\mathcal{M}_1$ and $\mathcal{M}_2$ is one, that is,  $\mdpvaluefunc{\pi^*}{\mathcal{M}_i}{\xi^{h_0}} = 1$. This is because the policy that always chooses $a_i$ in $\mathcal{M}_i$ guarantees visitation to the state $h_0$.
Therefore, a corollary of \Cref{thm:m1m2valuessumtoone} is that for any policy $\pi$ and any $\epsilon < \frac{1}{2}$, the policy $\pi$ can only be $\epsilon$-optimal in one of $\mathcal{M}_1$ and $\mathcal{M}_2$. Specifically, we have:
\begin{equation}
\small
\label{eq:m1m2valuessumtoonecorollary}
\indicator{\left(\mdpvaluefunc{\pi}{\mathcal{M}_1}{\xi^{h_0}} \ge \mdpvaluefunc{\pi^*}{\mathcal{M}_1}{\xi^{h_0}} - \epsilon\right)} + 
\indicator{\left(\mdpvaluefunc{\pi}{\mathcal{M}_2}{\xi^{h_0}} \ge \mdpvaluefunc{\pi^*}{\mathcal{M}_2}{\xi^{h_0}} - \epsilon\right)} \le 1 .
\end{equation}

Consider a specific sequence of transitions $T$ of length $N$ sampled from either $\mathcal{M}_1$ or $\mathcal{M}_2$.
If $n(T) = 0$, the probability of observing $T$ in $\mathcal{M}_1$ equals to the probability of observing $T$ in $\mathcal{M}_2$, that is:
\begin{equation*}
\begin{split}
& \prob{\rv{T}\sim\mdpsamplingproduct{\mathcal{M}_1}{\mathcal{A}^\text{S}}{N}}{\rv{T} = T | n(T) = 0} \\
& = \prob{\rv{T}\sim\mdpsamplingproduct{\mathcal{M}_2}{\mathcal{A}^\text{S}}{N}}{\rv{T} = T | n(T) = 0} .
\end{split}
\end{equation*}
This is because the only differences between $\mathcal{M}_1$ and $\mathcal{M}_2$ are the transitions $g_l \rightarrow h_0$ and $g_l \rightarrow q_0$, and
conditioning on $n(T) = 0$ effectively eliminates these differences.

Therefore, we can write the sum of $\zeta_1$ and $\zeta_2$ as:
\begin{multline*}
\zeta_1 + \zeta_2 = 
\sum_{\forall T} \prob{\rv{T}}{\rv{T} = T | n(T) = 0} \times \\
\left\{
\indicator{\left(\mdpvaluefunc{\mathcal{A}^\text{L}\left(T\right)}{\mathcal{M}_1}{\xi^{h_0}} \ge \mdpvaluefunc{\pi^*}{\mathcal{M}_1}{\xi^{h_0}} - \epsilon\right)} +
\indicator{\left(\mdpvaluefunc{\mathcal{A}^\text{L}\left(T\right)}{\mathcal{M}_2}{\xi^{h_0}} \ge \mdpvaluefunc{\pi^*}{\mathcal{M}_2}{\xi^{h_0}} - \epsilon\right)}
\right\}.
\end{multline*}
Plugging in \Cref{eq:m1m2valuessumtoonecorollary}, we get
\begin{equation*}
\zeta_1 + \zeta_2 \le 1 .
\end{equation*}
This then implies that $\min(\zeta_1, \zeta_2) \le \frac{1}{2}$.
\end{proof}

\subsection{Proof of \Cref{thm:m1m2valuessumtoone}}
\label{sec:proof-m1m2valuessumtoone}
\begin{proof}
We first focus on $\mathcal{M}_1$.
Consider an infinite run $\tau = (s_0, a_0, s_1, a_1, \dots)$ of the policy $\pi$ on $\mathcal{M}_1$.
Let $\wprefix{\tau}{i}$ denote the partial history up to state $s_i$;
let $w$ denote all the states $(s_0, s_1, \dots)$ in $\tau$.
Let $E_i$ denote the event that the visited state at step $i$ is either $h_0$ or $q_0$: $\windex{w}{i} \in \left\{h_0, q_0\right\}$.
We have:
\begin{align*}
\mdpvaluefunc{\pi}{\mathcal{M}_1}{\xi^{h_0}}
& = \prob{}{\wmap{\mathcal{L}}{w} \vDash \eventually h_0} \\
& = \sum_{i=1}^\infty \prob{}{\wmap{\mathcal{L}}{w} \vDash \eventually h_0 | E_i} \cdot \prob{}{E_i}
\end{align*}
Given that $E_i$ happens, the previous state $\windex{w}{i-1}$ must be $g_l$.
Then, the probability of satisfying the formula given the event $E_i$ is the probability of the learned policy choosing $a_1$ from the state $g_l$ after observing the partial history $\windex{\tau}{i-1}$:
\begin{equation}
\label{eq:M1valueinfsum}
\mdpvaluefunc{\pi}{\mathcal{M}_1}{\xi^{h_0}} 
= \sum_{i=1}^\infty \prob{}{\pi\left(\wprefix{\tau}{i-1}\right) = a_1 | E_i} \cdot \prob{}{E_i}.
\end{equation}
Symmetrically for $\mathcal{M}_2$ we then have:
\begin{equation}
\label{eq:M2valueinfsum}
\mdpvaluefunc{\pi}{\mathcal{M}_2}{\xi^{h_0}} 
= \sum_{i=1}^\infty \prob{}{\pi\left(\wprefix{\tau}{i-1}\right) = a_2 | E_i} \cdot \prob{}{E_i}.
\end{equation}

For any policy and any given partial history, the probability of choosing $a_1$ or $a_2$ must sum to $1$, that is:
\begin{equation*}
\small
\prob{}{\pi\left(\wprefix{\tau}{i-1}\right) = a_1 | E_i} + \prob{}{\pi\left(\wprefix{\tau}{i-1}\right) = a_2 | E_i} = 1
\end{equation*}
Therefore, we may add \Cref{eq:M1valueinfsum} and \Cref{eq:M2valueinfsum} to get:
\begin{equation*}
\mdpvaluefunc{\pi}{\mathcal{M}_1}{\xi^{h_0}} +
\mdpvaluefunc{\pi}{\mathcal{M}_2}{\xi^{h_0}} 
= \sum_{i=1}^\infty 1 \cdot \prob{}{E_i}.
\end{equation*}

Finally, since the event $E_i$ must happen for some finitary $i$ with probability $1$ (i.e., either $h_0$ or $q_0$ must be reached eventually with probability $1$), 
the expression on the right of the equation sums to $1$.
\end{proof}

\section{Complete Proof of \Cref{thm:ltlpaclowerboundFh0}}
\label{sec:ltlpaclowerboundFh0fullproof}

\begin{proof}
First, consider $\mathcal{M}_1$. 
We will derive a lower bound for $N$.
We begin by asserting that the inequality of \Cref{eq:pac-ltl-epsilon-delta-inequality} holds true for a reinforcement-learning algorithm $\mathcal{A} = (\mathcal{A}^\text{S}, \mathcal{A}^\text{L})$. That is:
\begin{equation*}
\prob*{\rv{T}}{
\mdpvaluefunc{\mathcal{A}^\text{L}(T)}{\mathcal{M}_1}{\xi^{h_0}} \ge \mdpvaluefunc{\pi^*}{\mathcal{M}_1}{\xi^{h_0}} - \epsilon}
\ge 1 - \delta.
\end{equation*}
We expand the left-hand side by conditioning on $n(T) = 0$:
\begin{small}
\begin{multline*}
\prob*{\rv{T}}{
\mdpvaluefunc{\mathcal{A}^\text{L}(T)}{\mathcal{M}_1}{\xi^{h_0}} \ge \mdpvaluefunc{\pi^*}{\mathcal{M}_1}{\xi^{h_0}} - \epsilon | n\left(T\right) = 0} \prob*{\rv{T}}{n\left(T\right) = 0} + \\
\prob*{\rv{T}}{
\mdpvaluefunc{\mathcal{A}^\text{L}(T)}{\mathcal{M}_1}{\xi^{h_0}} \ge \mdpvaluefunc{\pi^*}{\mathcal{M}_1}{\xi^{h_0}} - \epsilon | n\left(T\right) > 0} \left(1 - \prob*{\rv{T}}{n\left(T\right) = 0}\right) \\
\ge 1 - \delta.
\end{multline*}
\end{small}
Since $\prob*{\rv{T}}{
\mdpvaluefunc{\mathcal{A}^\text{L}(T)}{\mathcal{M}_1}{\xi^{h_0}} \ge 1 - \epsilon | n\left(T\right) > 0} \le  1$, we may relax the inequality to:
\begin{equation*}
(1 - \zeta_1) \prob*{\rv{T}}{n\left(T\right) = 0} \le \delta ,
\end{equation*}
where we also plugged in our definition of $\zeta_i$ (see \Cref{thm:minofm1m2pacprob}).
This relaxation optimistically assumes that a reinforcement-learning algorithm can learn an $\epsilon$-optimal policy by observing at least one transition to $h_0$ or $q_0$.

Since there are at most $N$ transitions initiating from the state $g_l$, and $n(T) = 0$ only occurs when all those transitions end up in $g_k$, we have $\prob*{\rv{T}}{n\left(T\right) = 0} \ge (1-p)^N$.
Incorporating this into the inequality we have:
\begin{equation*}
\left(1 - \zeta_1 \right) (1-p)^N
\le \delta .
\end{equation*}
Symmetrically, for $\mathcal{M}_2$ we have:
\begin{equation*}
\left(1 - \zeta_2 \right) (1-p)^N
\le \delta .
\end{equation*}
Since both inequalities need to hold, we may combine them by using $\min$ to choose the tighter inequality:
\begin{equation*}
\left(1 - \min\left(\zeta_1, \zeta_2\right) \right) (1-p)^N
\le \delta.
\end{equation*}
By applying \Cref{thm:minofm1m2pacprob}, we remove the inequality's dependence on $\zeta_i$, and get the desired lower bound of
\begin{equation*}
N \ge \frac{ \log (2\delta) }{\log \left(1-p\right)} ,
\end{equation*}
which completes the proof of \Cref{thm:ltlpaclowerboundFh0}.
\end{proof}

\section{Uncommittable Words for \texorpdfstring{non-\ltlclass{finitary}}{non-Finitary} Formulas}
\label{sec:proofofltlnonfinitaryprop}

In this section, we prove the following lemma:
\begin{lemma}
\label{thm:ltlnonguaranteenonsafetyprop}
Any LTL formula $\phi\notinltlclass{guarantee}$ has an uncommittable accepting word.
Any LTL formula $\phi\notinltlclass{safety}$ has an uncommittable rejecting word.
\end{lemma}

\subsection{Preliminaries}
We will review some preliminaries to prepare for our proof of \Cref{thm:ltlnonguaranteenonsafetyprop}.

We will use an automaton-based argument for our proof of \Cref{thm:ltlnonguaranteenonsafetyprop}.
To that end, we recall the following definitions for automatons.

\paragraph{Deterministic Finite Automaton}
A deterministic finite automaton (DFA) is a tuple $(S, A, P, s_0, s_\text{acc})$, where $(S, A, P, s_0)$ is a deterministic MDP (i.e., $P$ degenerated to a deterministic function $\functiontype{S,A}{S}$), and $s_\text{acc} \in S$ is an accepting state.

\paragraph{Deterministic Rabin Automaton} 
A deterministic Rabin automaton (DRA) is a tuple $(S, \Pi, T, s_0, \textit{Acc})$, where 
\begin{itemize}[leftmargin=*]
\item $S$ is a finite set of states.
\item $\Pi$ is the atomic propositions of $\phi$.
\item $T$ is a transition function 
$\functiontype{S, 2^\Pi}{S}$.
\item $s_0 \in S$ is an initial state.
\item \textit{Acc} is a set of pairs of subsets of states $(B_i, G_i) \in (2^S)^2$.
\end{itemize}

An infinite-length word $w$ over the atomic propositions $\Pi$ is accepted by the DRA, if there exists a run of the DRA such that there exists a $(B_i, G_i) \in \textit{Acc}$ where the run visits all states in $B_i$ finitely many times and visits some state(s) in $G_i$ infinitely many times.

For any LTL formula $\phi$, one can always construct an equivalent DRA that accepts the same set of infinite-length words as $\phi$ \citep{safraltl2rabinaut}.

%

%

%

%

%

%

\subsection{Proof of \Cref{thm:ltlnonguaranteenonsafetyprop} for \texorpdfstring{$\phi\notinltlclass{guarantee}$}{phi not in Guarantee}}

Given an LTL formula $\phi$, we first construct its equivalent DRA $\mathcal{R} = (S, \Pi, T, s_0, \textit{Acc})$ \citep{safraltl2rabinaut}. 

A {\em path} in a DRA is a sequence of transitions in the DRA.
A {\em cycle} in a DRA is a path that starts from some state and then returns to that state. 
A cycle is {\em accepting} if there exists a pair $(B_i, G_i) \in \textit{Acc}$, such that the cycle does not visit states in $B_i$ and visits some states in $G_i$.
Conversely, a cycle is {\em rejecting} if it is not accepting.
With the above definitions and to the end of proving \Cref{thm:ltlnonguaranteenonsafetyprop} for the case of $\phi\notinltlclass{guarantee}$, we state and prove the following lemma.
\begin{lemma}
\label{thm:guaranteemusthaveacceptingcycletorejectingcycle}
For any LTL formula $\phi\notinltlclass{guarantee}$ and its equivalent DRA $\mathcal{R}$, it must be the case that $\mathcal{R}$ contains an accepting cycle that is reachable from the initial state and there exists a path from a state in the accepting cycle to a rejecting cycle.
\end{lemma}
\begin{proof}
Suppose, for the sake of contradiction, there does not exist an accepting cycle that \begin{enumerate*}
\item is reachable from the initial state and
\item has a path to a rejecting cycle in the equivalent DRA $\mathcal{R}$.
\end{enumerate*}
Then there are two scenarios:
\begin{itemize}[leftmargin=*]
\item $\mathcal{R}$ does not have any accepting cycle that is reachable from the initial state.
\item All accepting cycles reachable from the initial state do not have any path to any rejecting cycle.
\end{itemize}

For the first scenario, $\mathcal{R}$ must not accept any infinite-length word. Therefore $\phi$ must be equal to $\ltlfalse$ (i.e., the constant falsum).
However, $\ltlfalse$ is in the \ltlclass{finitary} LTL class, which is a subset of \ltlclass{guarantee}, so this is a contradiction.

For the second scenario, consider any infinite-length word $w$.
Consider the induced infinite path $\mathcal{P} = (s_0, \windex{w}{0}, s_1, \windex{w}{1}, \dots)$ by $w$ on the DRA starting from the initial state $s_0$.

If $\phi$ accepts the word $w$, the path $\mathcal{P}$ must reach some state in some accepting cycle.

Conversely, if $\phi$ rejects the word $w$, the path must not visit any state in any accepting cycle. This is because otherwise the path can no longer visit a rejecting cycle once it visits the accepting cycle, thereby causing the word to be accepted.

Therefore, $\phi$ accepts the word $w$ as soon as the path $\mathcal{P}$ visits some state in some accepting cycle. 
This degenerates the DRA to a DFA, where the accepting states are all the states in the accepting cycles of the DRA.
Then, an infinite-length word $w$ is accepted by $\phi$ if and only if there exists a prefix of $w$ that is accepted by the DFA.

By the property of the \ltlclass{guarantee} class (see \Cref{sec:ltl-background-full}), for $\phi\inltlclass{guarantee}$, there exists a language of finite-length words, $L$, such that $w \vDash \phi$ if $L$ accepts a prefix of $w$ \citep{tlhierarchy}.
Since a DFA recognizes a regular language, the formula must be in the \ltlclass{guarantee} LTL class. This is also a contradiction.

Therefore, there must exist an accepting cycle that is reachable from the initial state and has a path to a rejecting cycle in the equivalent DRA.
This completes the proof of \Cref{thm:guaranteemusthaveacceptingcycletorejectingcycle}
\end{proof}

We are now ready to give a construction of $w_a$, $w_b$, $w_c$ and $w_d$ that directly proves \Cref{thm:ltlnonguaranteenonsafetyprop} for $\phi\notinltlclass{guarantee}$.
Consider the equivalent DRA $\mathcal{R}$ of the LTL formula. 
By \Cref{thm:guaranteemusthaveacceptingcycletorejectingcycle}, $\mathcal{R}$ must contain an accepting cycle that is reachable from the initial state and has a path to a rejecting cycle. We can thus define the following paths and cycles:
\begin{itemize}[leftmargin=*]
\item Let $\mathcal{P}_a$ be the path from the initial state to the accepting cycle.
\item Let $\mathcal{P}_b$ be the accepting cycle.
\item Let $\mathcal{P}_c$ be a path from the last state in the accepting cycle to the rejecting cycle. 
\item Let $\mathcal{P}_d$ be the rejecting cycle.
\end{itemize}
For a path $\mathcal{P} = (s_i, \windex{w}{i}, \dots s_j, \windex{w}{j}, s_{j+1})$, let  $w(\mathcal{P})$ denote the finite-length word consisting only of the characters in between every other state (i.e., each character is a tuple of truth values of the atomic propositions): $w(\mathcal{P}) = \windex{w}{i}\dots\windex{w}{j}$.
Consider the assignments of $w_a = w(\mathcal{P}_a)$,
$w_b = w(\mathcal{P}_b)$,
$w_c = w(\mathcal{P}_c)$ and
$w_d = w(\mathcal{P}_d)$.
Notice that:
\begin{itemize}[leftmargin=*]
\item The formula $\phi$ accepts the infinite-length word $\wrational{w_a}{w_b}$ because $P^b$ is an accepting cycle.
\item The formula $\phi$ rejects all infinite-length words 
$\wrational{w_a; \wrepeat{w_b}{i}; w_c}{w_d}$ for all $i\in\naturals$ because $P^d$ is a rejecting cycle.
\end{itemize}
By \Cref{def:uncommittable-word}, the infinite-length word $\wrational{w_a}{w_b}$ is an uncommittable accepting word. 
This construction proves  \Cref{thm:ltlnonguaranteenonsafetyprop} for $\phi\notinltlclass{guarantee}$.

\subsection{Proof of \Cref{thm:ltlnonguaranteenonsafetyprop} for \texorpdfstring{$\phi\notinltlclass{safety}$}{phi not in safety}}

The proof for $\phi\notinltlclass{safety}$ is symmetrical to $\phi\notinltlclass{guarantee}$.
For completeness, we give the proof below.

Given an LTL formula $\phi$, we again first construct its equivalent DRA $\mathcal{R} = (S, \Pi, T, s_0, \textit{Acc})$. 

To the end of proving \Cref{thm:ltlnonguaranteenonsafetyprop} for the case of $\phi\notinltlclass{safety}$, we state and prove the following lemma.
\begin{lemma}
\label{thm:safetymusthaverejectingcycletoacceptingcycle}
For any LTL formula $\phi \notinltlclass{safety}$ and its equivalent DRA $\mathcal{R}$, it must be the case that $\mathcal{R}$ contains a rejecting cycle that is reachable from the initial state and has a path from any state in the rejecting cycle to an accepting cycle.
\end{lemma}
\begin{proof}
Suppose, for the sake of contradiction, there does not exist a rejecting cycle that \begin{enumerate*}
\item is reachable from the initial state and
\item has a path to an accepting cycle in the equivalent DRA $\mathcal{R}$.
\end{enumerate*}
Then there are two scenarios:
\begin{itemize}[leftmargin=*]
\item $\mathcal{R}$ does not have any rejecting cycle that is reachable from the initial state.
\item All rejecting cycles reachable from the initial state do not have any path to any accepting cycle.
\end{itemize}

For the first scenario, $\mathcal{R}$ must not reject any infinite-length word. Therefore $\phi$ must be equal to $\ltltrue$ (i.e., the constant truth).
However, $\ltltrue$ is in the \ltlclass{finitary} LTL class, which is a subset of \ltlclass{safety}, so this is a contradiction.

For the second scenario, consider any infinite-length word $w$.
Consider the induced infinite path $\mathcal{P} = (s_0, \windex{w}{0}, s_1, \windex{w}{1}, \dots)$ by $w$ on the DRA starting from the initial state $s_0$.

If $\phi$ rejects the word $w$, the path $\mathcal{P}$ must reach some state in some rejecting cycle.

Conversely, if $\phi$ accepts $w$, the path must not visit any state in any rejecting cycle. This is because otherwise the path can no longer visit a accepting cycle once it visits the rejecting cycle, thereby causing the word to be rejected.

Therefore, $\phi$ rejects the word $w$ as soon as the path $\mathcal{P}$ visits some state in some rejecting cycle. 
We can again construct a DFA based on the DRA by letting the accepting states be all the states except those in a rejecting cycle of the DRA.
By this construction, $\phi$ accepts an infinite-length word $w$ if and only if the DFA accepts all finite-length prefixes of $w$.

By the property of the \ltlclass{safety} class (see \Cref{sec:ltl-background-full}), for $\phi\inltlclass{safety}$, there exists a language of finite-length words, $L$, such that $w \vDash \phi$ if $L$ accepts all prefixes of $w$ \citep{tlhierarchy}.
Since a DFA recognizes a regular language, the formula must be in the \ltlclass{safety} LTL class. This is also a contradiction.

Therefore, there must exist a rejecting cycle that is reachable from the initial state and has a path to an accepting cycle in the equivalent DRA.
This completes the proof of \Cref{thm:safetymusthaverejectingcycletoacceptingcycle}
\end{proof}

We are now ready to give a construction of $w_a$, $w_b$, $w_c$ and $w_d$ that directly proves \Cref{thm:ltlnonguaranteenonsafetyprop} for $\phi\notinltlclass{safety}$.
Consider the equivalent DRA $\mathcal{R}$ of the LTL formula. 
By \Cref{thm:safetymusthaverejectingcycletoacceptingcycle}, $\mathcal{R}$ must contain a rejecting cycle that is reachable from the initial state and has a path to an accepting cycle. We can thus define the following paths and cycles:
\begin{itemize}[leftmargin=*]
\item Let $\mathcal{P}_a$ be the path from the initial state to the rejecting cycle.
\item Let $\mathcal{P}_b$ be the rejecting cycle.
\item Let $\mathcal{P}_c$ be a path from the last state in the rejecting cycle to the accepting cycle. 
\item Let $\mathcal{P}_d$ be the accepting cycle.
\end{itemize}
Consider the assignments of $w_a = w(\mathcal{P}_a)$,
$w_b = w(\mathcal{P}_b)$,
$w_c = w(\mathcal{P}_c)$ and
$w_d = w(\mathcal{P}_d)$.
Notice that:
\begin{itemize}[leftmargin=*]
\item The formula $\phi$ rejects the infinite-length word $\wrational{w_a}{w_b}$ because $P^b$ is a rejecting cycle.
\item The formula $\phi$ accepts all infinite-length words 
$\wrational{w_a; \wrepeat{w_b}{i}; w_c}{w_d}$ for all $i\in\naturals$ because $P^d$ is an accepting cycle.
\end{itemize}
By \Cref{def:uncommittable-word}, the infinite-length word $\wrational{w_a}{w_b}$ is an uncommittable rejecting word. 
This construction proves  \Cref{thm:ltlnonguaranteenonsafetyprop} for $\phi\notinltlclass{safety}$.

\section{Proof of \Cref{thm:ltl-non-pac}: the Reverse Direction}
\label{sec:ltlnonpacreverseproof}
In this section, we give a proof to the reverse direction of \Cref{thm:ltl-non-pac}. 

\subsection{Proof}
\paragraph{Reduction to Infinite-horizon Cumulative Rewards}
Given an LTL formula $\phi$ in \ltlclass{finitary} with atomic propositions $\Pi$, one can compile $\phi$ into a DFA $\bar{\mathcal{M}}= (\bar{S}, 2^\Pi, \bar{P}, \bar{s_0}, \bar{s_\text{acc}})$ that decides the satisfaction of $\phi$ \citep{emcsafety}. 
In particular, for a given sample path $w$ of DTMC induced by a policy and the environment MDP, $\mathcal{L}(w)$ satisfies $\phi$ if and only if the DFA, upon consuming  $\mathcal{L}(w)$, eventually reaches the accept state $s_\text{acc}$.
Here, the DFA has a size (in the worst case) doubly exponential to the size of the formula: $|\bar{S}| = \BigO(\doubleexp(\abs{\phi}))$ \citep{mcsafety}.

We then use the following product construction to form an augmented MDP with rewards $\hat{\mathcal{M}} = (\hat{S}, \hat{A}, \hat{P}, \hat{s_0}, \hat{R})$.
Specifically, 
\begin{itemize}[leftmargin=*]
\item The states and actions are: $\hat{S} = S \times \bar{S}$ and $\hat{A} = A$.
\item The transitions follow the transitions in the environment MDP and the DFA simultaneously, where the action input of the DFA come from labeling the current state of the environment MDP. In particular, the transitions in the augmented MDP follows the equations: $\hat{P}((s, \bar{s}), a, (s', \bar{s}')) = P(s, a, s')$ and $\bar{s}' = \bar{P}(\bar{s}, \mathcal{L}(s))$.
\item The reward function assigns a reward of one to any transition from a non-accepting state that reaches $s_\text{acc}$ in the DFA, and zero otherwise:
$R((s, \hat{s}), a, (s', \hat{s}')) = \indicator{s \neq s_\text{acc} \land \hat{s}' = s_\text{acc}}$.
\end{itemize}

By construction, each run of the augmented MDP gives a reward of $1$ iff the run satisfies the finitary formula $\phi$.
The expected (undiscounted) infinite-horizon cumulative rewards thus equals the satisfaction probability of the formula.
Therefore, maximizing the infinite-horizon cumulative rewards in the augmented MDP is equivalent to maximizing the satisfaction probability of $\phi$ in the environment MDP.

\paragraph{Reduction to Finite-horizon Cumulative Rewards}
By the property of LTL hierarchy \citep{tlhierarchy}, for any LTL formula $\phi$ in \ltlclass{finitary} and an infinite-length word $w$, one can decide if $\phi$ accepts $w$ by inspecting a length-$H$ prefix of $w$. 
Here, $H$ is a constant that is computable from $\phi$. 
In particular, $H$ equals the longest distance from the start state to a terminal state in our constructed DFA. \footnote{Note that since $\phi$ is \ltlclasst{finitary}, the DFA does not have any cycles except at the terminal states \citep{spot}.}
Thus, since the product construction above does not assign any reward after the horizon $H$, the infinite-horizon cumulative rewards is further equivalent to the finite-horizon (of length $H$) cumulative rewards.
Therefore, finding the optimal policy for $\phi$ is equivalent to finding the optimal policy that maximizes the cumulative rewards for a finite horizon $H$ in the augmented MDP.

\paragraph{Sample Complexity Upper Bound}
Lastly, \citet{ipoc19} gave a \reinforcementlearning algorithm for finite-horizon cumulative rewards called ORLC (optimistic reinforcement learning with certificates).
The ORLC algorithm is sample efficiently PAC \footnote{The ORLC algorithm provides a guarantee called individual policy certificates (IPOC) bound. \citet{ipoc19} showed that this guarantee implies our PAC definition, which they called a supervised-learning style PAC bound.
Therefore, the ORLC algorithm is a PAC \reinforcementlearning algorithm for finite-horizon cumulative rewards by our definition.} and has a sample complexity of  $\tilde{\BigO}\left(\left(\frac{\abs{S}\abs{A}H^3}{\epsilon^2} +\frac{\abs{S}^2\abs{A}H^4}{\epsilon}  \right) \log \frac{1}{\delta}\right)$. \footnote{The notation $\tilde{\BigO}(.)$ is the same as $\BigO(.)$, but ignores any $\log$-terms.
The bound given by \citet{ipoc19} is $\tilde{\BigO}\left(\frac{\abs{S}^2\abs{A|}H^2}{\epsilon^2} \log \frac{1}{\delta}\right)$. 
 It is a upper bound on the number of episodes. 
 To make the bound consistent with our lower bound, which is a bound on the number of sampled transitions, we multiply it by an additional $H$ term.}
Incorporating the fact that 
the augmented MDP has $|\hat{S}| = \abs{S} \cdot \BigO(\doubleexp(\abs{\phi}))$ number of states, we obtain a sample complexity upper bound of $\tilde{\BigO}\left(\left(\frac{\abs{S}\doubleexp{\abs{\phi}}\abs{A}H^3}{\epsilon^2} +\frac{\abs{S}^2(\doubleexp{\abs{\phi}})^2\abs{A}H^4}{\epsilon}  \right) \log \frac{1}{\delta}\right)$ for the overall \reinforcementlearning algorithm.

Since for any \ltlclasst{finitary} formula, we have constructed a \reinforcementlearning algorithm that is sample efficiently LTL-PAC for all environment MDPs, this concludes our proof that any \ltlclasst{finitary} formula is LTL-PAC-learnable.

\section{Empirical Justifications}
\label{sec:empirical-appendix}

\DeclareDocumentCommand{\rewardscheme}{m}{%
\IfEqCase{#1}{%
    {multi-discount}{Multi-discount}%
    {zeta-reach}{Zeta-reach}%
    {zeta-acc}{Zeta-acc}%
    {zeta-discount}{Zeta-discount}%
    {reward-on-acc}{Reward-on-acc}%
}[\PackageError{rewardscheme}{Undefined option to \rewardscheme: #1}{}]%
}

\DeclareDocumentCommand{\rlalgo}{m}{%
\IfEqCase{#1}{%
    {Q}{Q-learning}%
    {DQ}{Double Q-learning}%
    {SL}{SARSA$\left(\lambda\right)$}%
}[\PackageError{rlalgo}{Undefined option to \rlalgo: #1}{}]%
}

This section empirically demonstrates our main result, the forward direction of \Cref{thm:ltl-non-pac}.

Previous work has introduced various \reinforcementlearning algorithms for LTL objectives \citep{dorsa,omegaregularrl19,hasanbeig2019reinforcement,bozkurt2020control}.
We therefore ask the research question:
{\em Do the sample complexities for \reinforcementlearning algorithms for LTL objectives introduced by previous work depend on the transition probabilities of the environment?}

To answer the above question, we 
consider a set of \reinforcementlearning algorithms for LTL objectives and empirically measure the sample size for each algorithm to obtain a near-optimal policy with high probability.

\subsection{Methodology}

\paragraph{Reinforcement-learning algorithms}
We consider a set of recent \reinforcementlearning algorithms for LTL objectives \citep{omegaregularrl19,bozkurt2020control}, about which we give more details in \Cref{sec:more_empirical}.
These algorithms are all implemented in the Mungojerrie toolbox \citep{hahn2021mungojerrie}.

\paragraph{Objectives and Environment MDPs}

\begin{figure}
\centering
\newcommand{\counterexamplemdpkind}{m1}
\begin{tikzpicture}[node distance=1.5cm,on grid,auto]
\tikzset{%
    in place/.style={
      auto=false,
      fill=white,
      inner sep=2pt,
    %
    },
}

\newcommand{\statenodesize}{1.2em}

\tikzset{%
    every state/.style={
        fill={rgb:black,1;white,10},
        initial text=, 
        inner sep=0, 
        minimum size=\statenodesize,
        font={\small},
    },
    tl/.style={font=\small} %
}

\tikzset{%
    accstate/.style={
        state,
        pattern={Lines[angle=35,distance={4.5pt/sqrt(2)}]},
        pattern color=gray
    }
}

\tikzset{%
    rejstate/.style={state, fill=white}
}

\ifdefstring{\counterexamplemdpkind}{m12}{
\tikzset{%
    a1/.style={
      swap,
      auto=left,
      to path={ let \p1=(\tikztostart),\p2=(\tikztotarget), \n1={atan2(\y2-\y1,\x2-\x1)},\n2={\n1+180} in ($(\tikztostart.{\n1})!0.5mm!90:(\tikztotarget.{\n2})$) -- ($(\tikztotarget.{\n2})!0.5mm!270:(\tikztostart.{\n1})$) \tikztonodes},
    },
    a2/.style={
      auto=right,
      to path={ let \p1=(\tikztostart),\p2=(\tikztotarget), \n1={atan2(\y2-\y1,\x2-\x1)},\n2={\n1+180} in ($(\tikztostart.{\n1})!0.5mm!270:(\tikztotarget.{\n2})$) -- ($(\tikztotarget.{\n2})!0.5mm!90:(\tikztostart.{\n1})$) \tikztonodes}
    },
}
}

\ifdefstring{\counterexamplemdpkind}{m1}{%
\tikzset{
    a2/.style={draw=none, fill opacity=0},
    a1/.style={swap, auto=left}
}
}

\ifdefstring{\counterexamplemdpkind}{m2}{%
\tikzset{
    a2/.style={auto=right},
    a1/.style={draw=none, fill opacity=0}
}
}

\providecommand{\arrowheadscale}{0.8}
\tikzset{%
    m1/.style={
        -{Stealth[scale=\arrowheadscale]}
    },
    m2/.style={
        -{Triangle[open, scale=\arrowheadscale]}
    },
}

\ifdefstring{\counterexamplemdpkind}{m12}%
{
\tikzset{
    m12/.style={
        -{Stealth[scale=\arrowheadscale] Triangle[open, scale=\arrowheadscale]}
    }
}
}

\ifdefstring{\counterexamplemdpkind}{m1}%
{\tikzset{
    m12/.style={m1}
}}

\ifdefstring{\counterexamplemdpkind}{m2}%
{\tikzset{
    m12/.style={m2}
}}

\newcommand{\largedots}{$\textbf{\ldots}$}

%

%
    
    \tikzstyle{every state}=[
        fill={rgb:black,1;white,10},
        initial text=, 
        inner sep=0, 
        minimum size=1.2em,
        font={\small},
    ]
    
    \tikzstyle{tl/.style}=[font=\small] %

    \tikzstyle{every loop}=[
    style={
        looseness=1, 
        min distance=3mm,
        font={\small}
        }
    ]
    
    \node[state, initial left] (g) {$g$};
    \node[accstate] (h) [above right =1em and 6em of g]  {$h$};
    \node[rejstate] (q) [below right =1em and 6em of g] {$q$};

    \ifdefstring{\counterexamplemdpkind}{m1}{%
    \path[->]
    (g) edge [pos=0.9] node [tl] {$a_1,p$} (h)
    (g) edge [swap, pos=0.9] node [tl] {$a_2,p$} (q)
    ;
    }
    
    \ifdefstring{\counterexamplemdpkind}{m2}{%
    \path[->]
    (g) edge [pos=0.9] node [tl] {$a_2,p$} (h)
    (g) edge [swap, pos=0.9] node [tl] {$a_1,p$} (q)
    ;
    }
    
    \path[->] 
    (g) edge  [loop above]  node [tl] {$a_1,1 - p$}  ()
    (g) edge  [loop below]  node [tl] {$a_2,1 - p$}  ()
    
    (h) edge  [loop right]  node {}  ()
    (q) edge  [loop right]  node {}  ()
    
    ;

\end{tikzpicture}
\caption{One of the two environment MDPs used in the experiments.}
\label{fig:empirical-counterexample-mdp}
\end{figure}

\begin{figure}
\centering
%
\begin{tikzpicture}[scale=0.8, transform shape]
\let\inputhidefromlatexpp\input

\inputhidefromlatexpp{gridworld.tikz}

%
\foreach \x in {0,...,4}{
    \foreach \y in {0,...,4}
        \node[box] at (\x,\y){};
}

\node[trap] at (0, 4) {};
\node[trap] at (2, 0) {};
\node[trap] at (2, 3) {};
\node[trap] at (3, 3) {};

%

%
\pic [local bounding box=P] at (0, 0) {bot};
%
\pic [local bounding box=G] at (4, 4) {goal};

%

%

%

%

\end{tikzpicture}
\caption{Gridworld environment MDP from \protect\citet{dorsa} with a customized transition dynamics.
The agent starts from the lower left corner.
At each time step, the agent can choose to move up, down, left or right.
The white cells are sticky: the agent moves towards the intended direction with probability $1 - p$ (or stays stationary if it will move off the grid), and stays stationary with probability $p$.
The red cells are trapping: once the agent steps on a red cell, it stays there forever.
}
\label{fig:empirical-counterexample-mdp-sadigh14}
\end{figure}

We consider two pairs of LTL formulas and environment MDPs (LTL-MDP pair).
The first pair is the formula $\eventually h$ and the counterexample MDP constructed according to  \Cref{sec:counterexample-mdps}, shown in \Cref{fig:empirical-counterexample-mdp}.
The second pair is the formula $\eventually \textit{goal}$ and a gridworld environment MDP from the case study by \citet{dorsa} with a customized transition dynamics, shown in \Cref{fig:empirical-counterexample-mdp-sadigh14}.

\paragraph{Experiment Methodology}
We ran the considered algorithms on each chosen LTL-MDP pair with a range of values for the parameter $p$ and let the algorithms perform $N$ environment samples.

For each algorithm and each pair of values of $p$ and $N$, we fix $\epsilon = 0.1$ and repeatedly run the algorithm to obtain a Monte Carlo estimation of the LTL-PAC probability (left side of \Cref{eq:pac-ltl-epsilon-delta-inequality}) for that setting of $p$, $N$ and $\epsilon$.
We repeat each setting until the estimated standard deviation of the estimated probability is within $0.01$. 
In the end, for each algorithm and LTL-MDP pair we obtain $5 \times 21 = 105$ LTL-PAC probabilities and their estimated standard deviations.

For the first LTL-MDP pair, we vary $p$ by a geometric progression from $10^{-1}$ to $10^{-3}$ in $5$ steps:
\begingroup\small$p(i) = 10^{-\frac{i+1}{2}}$\endgroup for \begingroup\small$1 \le i \le 5$\endgroup.
We vary $N$ by a geometric progression from $10^1$ to $10^5$ in $21$ steps.

For the second LTL-MDP pair, we vary $p$ by a geometric progression from $0.9$ to $0.6$ in $5$ steps:
\begingroup\small$p(i) = 0.9\times 0.903^{-i}$ \endgroup for \begingroup\small$1 \le i \le 5$\endgroup.
We vary $N$ by a geometric progression from $3540$ to $9\times10^4$ in $21$ steps; if an algorithm does not converge to the desired LTL-PAC probability within $9\times10^4$ steps, we rerun the experiment with an extended range of $N$ from $3540$ to $1.5\times10^5$. 

\subsection{Results}

\Cref{fig:bozkurt_pac_probs_plot} presents the results for the algorithm in \citet{bozkurt2020control} with the setting of \rewardscheme{multi-discount}, \rlalgo{Q}, and the first LTL-MDP pair.

On the left, we plot the LTL-PAC probabilities vs.\ the number of samples $N$, one curve for each $p$.
On the right, we plot the intersections of the curves in the left plot with a horizontal cutoff of $0.9$.

As we see from the left plot of \Cref{fig:bozkurt_pac_probs_plot}, for each $p$, the curve starts at $0$ and grows to $1$ in a sigmoidal shape as the number of samples increases.
However, as $p$ decreases, the MDP becomes harder: As shown on the right plot of \Cref{fig:bozkurt_pac_probs_plot}, the number of samples required to reach the particular LTL-PAC probability of $0.9$ grows exponentially.

\Cref{fig:complete-empirical-results} presents the complete results for all settings for the first LTL-MDP pair, and \Cref{fig:complete-empirical-results-sadigh14} present the complete results for all settings for the second LTL-MDP pair.
These results are similar and lead to the same analysis as above.

\newcommand{\includeempiricalplot}[3]{%
\begin{subfigure}[t]{0.48\textwidth}%
\centering%
\includegraphics[width=0.6\linewidth]{plots/#1-#2-#3.png}%
\includegraphics[width=0.38\linewidth]{plots/#1-#2-#3-intercept-0_1.png}%
\caption{\rewardscheme{#1} with \rlalgo{#2}}%
\end{subfigure}%
}

\begingroup
\begin{figure*}[t]
\centering
\foreach \ltlconstruction in {reward-on-acc,multi-discount,zeta-reach}{%
    \includeempiricalplot{\ltlconstruction}{Q}{rl_ltl_pac_paper}%
    \includeempiricalplot{\ltlconstruction}{DQ}{rl_ltl_pac_paper}\\%
    \includeempiricalplot{\ltlconstruction}{SL}{rl_ltl_pac_paper}\\%
}
\caption{Empirical results of the first LTL-MDP pair (continued on next page)}
\end{figure*}
\begin{figure*}[t]\ContinuedFloat
\centering
\foreach \ltlconstruction in {zeta-acc,zeta-discount}{%
    \includeempiricalplot{\ltlconstruction}{Q}{rl_ltl_pac_paper}%
    \includeempiricalplot{\ltlconstruction}{DQ}{rl_ltl_pac_paper}\\%
    \includeempiricalplot{\ltlconstruction}{SL}{rl_ltl_pac_paper}\\%
}
\caption{Empirical results of the first LTL-MDP pair (continued). Each sub-figure corresponds to a specific reward-scheme and learning-algorithm pair. For each sub-figure, on the left: LTL-PAC probabilities vs. number of samples, for varying parameters $p$; on the right: number of samples needed to reach 0.9 LTL-PAC probability vs.  parameters $p$.
}
\label{fig:complete-empirical-results}
\end{figure*}
\endgroup

\begingroup
\begin{figure*}[t]
\centering
\foreach \ltlconstruction in {reward-on-acc,multi-discount,zeta-reach}{%
    \includeempiricalplot{\ltlconstruction}{Q}{gridworld_sadigh14}%
    \includeempiricalplot{\ltlconstruction}{DQ}{gridworld_sadigh14}\\%
    \includeempiricalplot{\ltlconstruction}{SL}{gridworld_sadigh14}\\%
}
\caption{Empirical results of the second LTL-MDP pair (continued on next page)}
\end{figure*}
\begin{figure*}[t]\ContinuedFloat
\centering
\foreach \ltlconstruction in {zeta-acc,zeta-discount}{%
    \includeempiricalplot{\ltlconstruction}{Q}{gridworld_sadigh14}%
    \includeempiricalplot{\ltlconstruction}{DQ}{gridworld_sadigh14}\\%
    \includeempiricalplot{\ltlconstruction}{SL}{gridworld_sadigh14}\\%
}
\caption{Empirical results of the second LTL-MDP pair (continued). Each sub-figure corresponds to a specific reward-scheme and learning-algorithm pair. For each sub-figure, on the left: LTL-PAC probabilities vs. number of samples, for varying parameters $p$; on the right: number of samples needed to reach 0.9 LTL-PAC probability vs.  parameters $p$.
}
\label{fig:complete-empirical-results-sadigh14}
\end{figure*}
\endgroup

\subsection{Conclusion} 
Since the transition probabilities ($p$ in this case) are unknown in practice, one can't know which curve in the left plot a given environment will follow.
Therefore, given any finite number of samples, these reinforcement-algorithms cannot provide guarantees on the LTL-PAC probability of the learned policy.
This result supports \Cref{thm:ltl-non-pac}.

\section{Empirical Experiment Details}
\label{sec:more_empirical}

\begin{table*}[t]
\centering
\ra{1.3}
\begin{tabular}{@{}llll@{}}
\toprule
Reinforcement-learning-algorithm & Learning Rate & Exploration &  Reset Episode Every Steps \\
\midrule
\rlalgo{Q}  & $\frac{10}{10+t}$ & Linear decay from $1.0$ to $10^{-1}$ & 10\\
\rlalgo{DQ} & $\frac{30}{30+t}$ & Linear decay from $1.0$ to $10^{-1}$  & 10 \\
\rlalgo{SL} & $\frac{10}{10+t}$ & Linear decay from $1.0$ to $10^{-3}$ & 10\\
\bottomrule
\end{tabular}
\caption{Non-default hyper-parameters used for each learning-algorithm}
\label{tab:reward-schemes-parameters}
\end{table*}

\subsection{Details of Methodology}

\paragraph{Chosen Algorithms}

We consider a set of recent reinforcement-learning algorithms for LTL objectives implemented in the Mongujerrie toolbox \citep{hahn2021mungojerrie}.

A common pattern in these previous works \citep{dorsa,omegaregularrl19,bozkurt2020control} is that each work constructs a product MDP with rewards (i.e., an MDP with a reward function on that MDP) from an LTL formula and an environment MDP.
Moreover, these works permit the use of any standard reinforcement-learning algorithm, such as Q-learning or SARSA($\lambda$), to solve the constructed product MDP with the specified reward function to obtain the product MDP's optimal policy.
Finally, these works cast the optimal policy back to a non-Markovian policy of the environment MDP, which becomes the algorithm's output policy.

Following \citet{hahn2021mungojerrie}, we call each specific construction of a product MDP with rewards as a {\em reward-scheme}.
We then characterize each reinforcement-learning algorithm as a ``reward-scheme'' and ``learning-algorithm'' pair.
We consider a total of five reward-schemes \footnote{We use the same naming of each reward-scheme as in the Mungojerrie toolbox  \citep{hahn2021mungojerrie}}: \rewardscheme{reward-on-acc} \citep{dorsa},  \rewardscheme{multi-discount} \citep{bozkurt2020control}, \rewardscheme{zeta-reach} \citep{omegaregularrl19}, \rewardscheme{zeta-acc}  \cite{omegaregularrl20}, and \rewardscheme{zeta-discount} \cite{omegaregularrl20}.
We consider a total of three learning-algorithms: \rlalgo{Q} \citep{qlearning92}, \rlalgo{DQ} \citep{Hasselt2010DoubleQ}, and \rlalgo{SL} \citep{sutton98sarsa}.
This yields a total of $15$ reinforcement-learning algorithms for LTL objectives.

\paragraph{Algorithm Parameters}

Each reinforcement-learning algorithm in Mungojerrie accepts a set of hyper-parameters.
For the majority of the hyper-parameters, we use their default values as in Mungojerrie Version 1.0 \citep{hahn2021mungojerrie}.
We present the hyper-parameters that differ from the default values in \Cref{tab:reward-schemes-parameters}.
For each of the hyper-parameters in \Cref{tab:reward-schemes-parameters}, we use a different value from the default value because it allow all the algorithms that we consider to converge within $10^{5}$ steps (i.e., the maximum learning steps that we allow). 
For SARSA$\left(\lambda\right)$, we use $\lambda = 0$.

\paragraph{Software and Platform}

We use a custom version of Mungojerrie. 
Our modifications are:
\begin{itemize}[leftmargin=*]
\item Modification to allow parallel Monte Carlo estimation of the LTL-PAC probability.
\item Modification to allow the reinforcement-learning algorithms to have a non-linear learning rate decay. 
In particular, we use a learning rate of $\frac{k}{k+t}$ at every learning step $t$, where $k$ is a hyper-parameter (see \Cref{tab:reward-schemes-parameters} for the value of $k$ for each algorithm). This modification is necessary for ensuring Q-learning's convergence \citep{qlearning92}.
\end{itemize}

We run all experiments on a machine with 2.9 GHz 6-Core CPU and 32 GB of RAM.

\section{Classification of Prior Works}

In \Cref{sec:directions_forward}, we discussed several categories of approaches that either focus on tractable objectives or weaken the guarantees required by an LTL-PAC algorithm.
We classified various previous approaches into these categories.
In this section, we explain the rationale for each classification.

\subsection{Use a \texorpdfstring{\ltlclasst{Finitary}}{Finitary} Objective}
\citet{smcbltl} introduced a variant of LTL called {\em Bounded LTL} and used Bounded LTL objective for reinforcement learning.
Every Bounded LTL formula is decidable by a bounded length prefix of the input word.
Moreover, each Bounded LTL formula is equivalent to an \ltlclasst{finitary} LTL formula.
Therefore, we classified this approach as using a \ltlclasst{Finitary} objective.

\citet{osbert} introduced a task specification language over finite-length words. 
Further, their definition of an MDP contains an additional finite time horizon $H$.
Each sample path of the MDP is then a length-$H$ finite-length word and is evaluated by a formula of the task specification language.\footnote{There are two possible interpretations of the finite horizon in \citet{osbert}.
The first interpretation is that the environment MDP inherently terminates and produces length-$H$ sample paths.
The second interpretation is that the finite horizon $H$ is part of the specification given by a user of their approach.
We used the second interpretation to classify their approach.
The difference between the two interpretations is only conceptual --- if the environment inherently terminates with a fixed finite horizon $H$, it would be equivalent to imposing a finite horizon $H$ in the task specification.
}
Each formula of the task specification language with a fixed finite horizon $H$ is equivalent to an LTL formula in the \ltlclasst{Finitary} class.
Therefore, we classified this approach as using a \ltlclasst{Finitary} objective.

\subsection{Best-Effort Guarantee}

We classified \citet{pacmdpsmcconfidenceinterval}
to this category and explained our rationale of this classification in \Cref{sec:directions_forward}.

\subsection{Know More About the Environment}

We classified \citet{ufukpacmdpltl,smcpmin} to this category and explained our rationale of this classification in \Cref{sec:directions_forward}.

\subsection{Use an LTL-like Objective}

\subsubsection{LTL-in-the-limit Objectives}

We classified \citet{dorsa,omegaregularrl19,hasanbeig2019reinforcement,bozkurt2020control} as using LTL-like objectives, and explained our rationale of these classifications in \Cref{sec:directions_forward}.

\subsubsection{General LTL-like Objectives}

\citet{gltl17} introduced a discounted variant of LTL called {\em Geometric LTL} (GLTL). A temporal operator in a GLTL formula expires within a time window whose length follows a geometric distribution. 
For example, a GLTL formula $\eventually_{0.1} \textit{goal}$ is satisfied if the sample path reaches the $\textit{goal}$ within a time horizon $H$, where $H$ follows $\text{Geometric}(0.1)$, the geometric distribution with the success parameter $0.1$.
Since GLTL's semantics is different from LTL's semantics, we classified this approach as using an LTL-like objective.

\citet{truncatedltl-rl} introduced a variant of LTL called {\em Truncated-LTL} (TLTL). 
A formula in TLTL, similar to a formula in Bounded LTL \citep{smcbltl}, is decidable by a bounded length prefix of the input word. 
Moreover, TLTL has a {\em qualitative semantics}, in addition to the standard Boolean semantics of LTL.
In particular, the qualitative semantics of a TLTL formula maps a sample path of the environment MDP to a real number that indicates the degree of satisfaction for the TLTL formula.
Therefore, we classified this approach as using an LTL-like objective.

\citet{ltlf-rl} introduced {\em Restraining Bolts}. A Restraining Bolts specification is a set of pairs $(\phi_i, r_i)$, where each $\phi_i$ is an LTLf/LDLf formula, and $r_i$ is a scalar reward. 
An LTLf formula is visuaully similar to an LTL formula; however, it is interpreted over finite-length words instead of infinite-length words.
LDLf is an extension of LTLf and is also interpreted over finite-length words. 
\footnote{LDLf is more expressive than LTLf \citep{ltlfldlf}. In particular, LTLf is equally expressive as the star-free subset of regular languages while LDLf is equally expressive as the full set of regular languages.}
Given an environment MDP, the approach checks each finite length prefix of a sample path of the MDP against each $\phi_i$, and if a prefix satisfies $\phi_i$, the approach gives the corresponding reward $r_i$ to the agent.
The objective in \citet{ltlf-rl} is to maximize the discounted cumulative rewards produced by the Restraining Bolts specification.
To the best of our knowledge, this objective is not equivalent to maximizing the satisfaction of an LTL formula.
Nonetheless, a Restraining Bolts specification can be seen as an LTL-like specification for its use of temporal operators.
Therefore, we classified this approach as using an LTL-like objective.

\citet{rewardmachine1} introduced {\em reward machine}.
A reward machine specification is a deterministic finite automaton equipped with a reward for each transition.
The objective in \citet{rewardmachine1} is to maximize the discounted cumulative rewards produced by the reward machine specification.
\citet{rewardmachine1} showed that LTL objectives formulas in the \ltlclasst{Guarantee} or \ltlclasst{Safety} class are reducible to reward machine objectives without discount factors.
However, since the approach maximizes discounted cumulative rewards in practice, it does not directly optimize for the LTL objectives in the \ltlclasst{Guarantee} or \ltlclasst{Safety} classes.\footnote{By their reduction, as the discount factor approach $1$ in the limit, the learned policy for the reward machine becomes the optimal policy for given \ltlclasst{guarantee} or \ltlclasst{safety} LTL objective. Therefore, \citet{rewardmachine1} can also be classified as using an LTL-in-the-limit objectives (for the subset of \ltlclasst{guarantee} and \ltlclasst{safety} objectives. Nonetheless, we classified this approach to the general LTL-like objectives category because reward machine objectives are more general than LTL objectives.}
Therefore, we classified this approach as using an LTL-like objective.

\section{Concurrent Work}
\label{sec:concurrent_work}

Concurrent to this work,
\citet{alur2021framework} developed a framework to study reductions between reinforcement-learning task specifications.
They looked at various task specifications, including cumulative discounted rewards, infinite-horizon average-rewards, reachability, safety, and LTL. 
They thoroughly review previous work concerning reinforcement learning for LTL objectives, which we also cite.
Moreover, \citet[Theorem 8]{alur2021framework} states a seemingly similar result as the forward direction of our \Cref{thm:ltl-non-pac}:
\begin{quote}
There does not exist a PAC-MDP algorithm for the class of safety specifications.
\end{quote}
Despite the parallels, we clarify one crucial difference and two nuances between our work and theirs.

Firstly and most importantly, their theorem is equivalent to
``there exists a safety specification that is not PAC-learnable.'', whereas our \Cref{thm:ltl-non-pac} works pointwise for each LTL formula, asserting ``all non-\ltlclasst{finitary} specifications are not PAC-learnable.''
The proof of their theorem gives one safety specification and shows that it is not PAC-learnable.\footnote{Their result is similar to what we showed in our \Cref{sec:samplecomplexityFh0}, where we consider the particular \ltlclasst{guarantee} formula $\eventually h_0$ and show that it is not PAC-learnable.}
On the other hand, the proof of the forward direction of our \Cref{thm:ltl-non-pac} constructs a counterexample for each non-\ltlclasst{finitary} formula.
This point is crucial since it allows us to precisely carve out the PAC-learnable subset, namely the \ltlclasst{finitary} formulas, from the LTL hierarchy. 
Secondly, their notion of sample complexity is slightly different from ours.
In particular, they formulated a reinforcement learning algorithm as an iterative algorithm. At each step, the iterative algorithm outputs a policy $\pi^n$.
Then, their notion of sample complexity is the total number of non-$\epsilon$-optimal policies produced during an infinitely long run of the learning algorithm:
$\left| \lbrace n \mid \mdpvaluefunc{\pi^n}{\mathcal{M}}{\xi} < \mdpvaluefunc{\pi^*}{\mathcal{M}}{\xi} - \epsilon \rbrace \right|.$ 
On the other hand, our notion of sample complexity is the number of samples required until the learning algorithm outputs $\epsilon$-optimal policies. 
However, this difference is orthogonal to the core issue caused by infinite-horizon LTL formulas.
In particular, we can adapt our theorem and proof to use their notion of sample complexity.

Thirdly, their definition of safety specification is equivalent to a strict subset of the \ltlclasst{safety} class in the LTL hierarchy that we consider.
In particular, their safety specification is equivalent to LTL formulas of the form $\always (a_1 \lor a_2 \lor \dots \lor a_n)$, where each $a_i \in \Pi$ is an atomic proposition, with $n = 0$ degenerating the specification to $\ltltrue$ (the constant true).

Lastly, they consider only \reinforcementlearning algorithms, whereas we consider the slightly more general \planningwithgenerativemodel algorithms.
We believe their theorem and proof can be modified to accommodate our more general algorithm definition.
\end{appendices}

\bibliographystyle{named}
\newcommand{\arxiv}[1]{arXiv preprint: #1}